\documentclass{article}




\usepackage[final]{arxiv}


\usepackage[utf8]{inputenc} 
\usepackage[T1]{fontenc}    
\usepackage{hyperref}       
\usepackage{url}            
\usepackage{booktabs}       
\usepackage{amsfonts}       
\usepackage{nicefrac}       
\usepackage{microtype}      
\usepackage{xcolor}         
\usepackage{graphics}
\usepackage{graphicx}
\usepackage{eqnarray}
\usepackage{amsmath,amssymb,amsfonts,bm}
\usepackage{enumitem}
\usepackage{amsthm}
\usepackage{caption, subcaption}
\usepackage{multirow}
\usepackage{ragged2e}

\theoremstyle{plain}
\newtheorem{theorem}{Theorem}[section]

\newtheorem{lemma}[theorem]{Lemma}

\theoremstyle{definition}
\newtheorem{definition}[theorem]{Definition}

\theoremstyle{remark}


\newcommand{\dboijn}{\Delta \beta_{1ij}^{n}}

\newcommand{\bzin}{\beta^{n}_{0i}}
\newcommand{\boin}{\beta^{n}_{1i}}
\newcommand{\ain}{a_i^n}
\newcommand{\bin}{b_i^n}

\newcommand{\bzj}{\beta_{0j}^{*}}
\newcommand{\boj}{\beta_{1j}^{*}}
\newcommand{\aj}{a_{j}^{*}}
\newcommand{\bj}{b_{j}^{*}}


\newcommand{\dint}{\mathrm{d}}






\def\RR{\mathbb{R}}

\newcommand{\xbm}{{\bm x}}
\newcommand{\Xbm}{{\bm X}}

\newcommand{\Kbm}{{\bm K}}

\newcommand{\Vbm}{{\bm V}}

\newcommand{\Qbm}{{\bm Q}}

\newcommand{\hbm}{{\bm h}}

\newcommand{\ybf}{\mathbf{y}}

\DeclareMathOperator*{\argmin}{arg\,min}

\newcommand{\bbE}{\mathbb{E}}
\newcommand{\var}{\mathrm{Var}}

\newcommand{\softmax}{\mathrm{softmax}}
\newcommand{\sigmoid}{\mathrm{sigmoid}}
\newcommand{\gelu}{\mathrm{GELU}}


\def\ie{{\em i.e.,~}}

\newcommand{\deijn}{\Delta \eta_{ij}^{n}}

\newcommand{\ein}{\eta_i^n}

\newcommand{\coi}{c_i^0}

\newcommand{\eoi}{\eta_i^0}

\newcommand{\ej}{\eta_j^*}

\newcommand{\boi}{\beta_{1i}^*}
\newcommand{\bzi}{\beta_{0i}^*}

\newcommand{\zeroq}{\mathbf{0}_q}

\newcommand{\normf}[1]{\|#1\|_{L_2(\mu)}}

\newcommand{\bfit}[1]{\boldsymbol{#1}}

\newcommand{\prompt}{\bm p}
\newcommand{\dt}{\mathcal{D}_t}
\newcommand{\data}{\mathcal{D}}

\newcommand{\xdom}{\mathcal{X}^{(t)}}
\newcommand{\ydom}{\mathcal{Y}^{(t)}}

\newcommand{\yi}{\mathcal{Y}^{(i)}}
\newcommand{\yj}{\mathcal{Y}^{(j)}}

\usepackage{algorithm}
\usepackage{algpseudocode}

\algtext*{EndWhile}
\algtext*{EndIf}
\algtext*{EndFor}

\newcommand{\dv}{d_v}
\newcommand{\dk}{d_k}

\title{Mixture of Experts Meets \\ Prompt-Based Continual Learning}

\author{
Minh Le$^{3}$ \quad\quad An Nguyen$^{2}$\thanks{Equal contribution.} \quad\quad Huy Nguyen$^{1*}$ \quad\quad Trang Nguyen$^{3*}$ \vspace{0.5em}\\ \textbf{Trang Pham$^{3*}$ \quad\quad Linh Van Ngo$^{2}$ \quad\quad Nhat Ho$^{1}$} \vspace{0.8em}\\ 
$^1$ The University of Texas at Austin \\
$^2$ Hanoi University of Science and Technology \\
$^3$ VinAI Research 
\vspace{-.5cm}
}

\begin{document}

\maketitle

\begin{abstract}

Exploiting the power of pre-trained models, prompt-based approaches stand out compared to other continual learning solutions in effectively preventing catastrophic forgetting, even with very few learnable parameters and without the need for a memory buffer. While existing prompt-based continual learning methods excel in leveraging prompts for state-of-the-art performance, they often lack a theoretical explanation for the effectiveness of prompting. This paper conducts a theoretical analysis to unravel how prompts bestow such advantages in continual learning, thus offering a new perspective on prompt design. We first show that the attention block of pre-trained models like Vision Transformers inherently encodes a special mixture of experts architecture, characterized by linear experts and quadratic gating score functions. This realization drives us to provide a novel view on prefix tuning, reframing it as the addition of new task-specific experts, thereby inspiring the design of a novel gating mechanism termed Non-linear Residual Gates (NoRGa). Through the incorporation of non-linear activation and residual connection, NoRGa enhances continual learning performance while preserving parameter efficiency. The effectiveness of NoRGa is substantiated both theoretically and empirically across diverse benchmarks and pretraining paradigms. Our code is publicly available at \url{https://github.com/Minhchuyentoancbn/MoE_PromptCL}.

\end{abstract}


\section{Introduction}

Humans possess a remarkable ability to learn continuously by integrating new skills and knowledge while retaining past experiences. However, current AI models often fail to retain this ability. Unlike humans, they often suffer from \textit{catastrophic forgetting} \cite{MCCLOSKEY1989109, mehta2023empirical, nguyen2019understanding, phan2022reducing}, a phenomenon where they struggle to retain knowledge from previous tasks while learning new ones. Inspired by human learning, Continual Learning \cite{belouadah2021comprehensive, MCCLOSKEY1989109, tran2022continual, aljundi2017expert, hai2024continual} is an ongoing field that aims to train a model across a sequence of tasks while mitigating this challenge. Traditional continual learning methods often rely on storing past data for fine-tuning, which can raise concerns about memory usage and privacy \cite{Cha_2021_ICCV, rebuffi2017icarl, wang2022memory}. To address these limitations, prompt-based approaches have emerged as a promising alternative within rehearsal-free continual learning. By attaching prompts - small sets of learnable parameters - to a frozen pre-trained model, these approaches enable efficient adaptation to new tasks with minimal modifications to the underlying model \cite{xin2024parameterefficient, li2021prefixtuning, zhou2024continual}. The effectiveness of prompt-based methods has been demonstrated by several recent works achieving state-of-the-art performance on various continual learning benchmarks \cite{wang2023hierarchical, wang2022dualprompt,  wang2022learning}.

While prompt-based methods have demonstrably achieved impressive results, their emphasis largely lies on prompt utility, leaving a gap in our theoretical comprehension of their effectiveness. This absence of a theoretical foundation hinders our ability to further refine and optimize these methods. In this work, we offer a new perspective by focusing on prefix tuning~\cite{li2021prefixtuning} and its connection to mixture of experts models~\cite{Jacob_Jordan-1991, ho2022convergence, hazimeh2021dselect, fan2022m3vit}. We demonstrate that self-attention blocks in Vision Transformers \cite{dosovitskiy2020vit} implicitly encode a special mixture of experts architecture, revealing a surprising connection between these seemingly disparate concepts. Leveraging this connection, we propose that applying prefix tuning within pre-trained models can be interpreted as introducing new experts. The newly introduced experts collaborate with the pre-trained experts, facilitating efficient adaptation of the model to new tasks.

Drawing insights from this analysis, we observe that the original prefix tuning suffers from suboptimal sample efficiency, requiring a substantial amount of data for reasonable parameter estimation. To address this challenge, we propose a novel gating mechanism termed Non-linear Residual Gates (NoRGa). This architecture integrates non-linear activation functions and residual connections within the gating score functions. Our work focuses on improving within-task prediction accuracy, a key component of continual learning performance as identified in previous research \cite{NEURIPS2022_20f44da8, wang2023hierarchical}. We posit that NoRGa can enhance this aspect, which contributes to improved overall continual learning performance while maintaining parameter efficiency. We further provide theoretical justification for this improvement, demonstrating how NoRGa accelerates parameter estimation rates.

\textbf{Our contributions} can be summarized as follows: (1) We reveal a novel connection between self-attention and a mixture of experts, providing a fresh perspective on prompt-based continual learning approaches; (2) Leveraging this insight, we propose \emph{Non-linear Residual Gates (NoRGa)}, an innovative gating mechanism that enhances continual learning performance while maintaining parameter efficiency, and provide a theoretical justification for this improvement; (3) Extensive experiments across various continual learning benchmarks and pre-training settings demonstrate that our approach achieves state-of-the-art performance compared to existing methods.

\textbf{Notation.} For any  $n\in\mathbb{N}$, we denote $[n]$ as the set $\{1,2,\ldots,n\}$ . Next, for any set $S$, we let $|S|$ stand for its cardinality. For any vector $u:=(u_1,u_2,\ldots,u_d) \in \mathbb{R}^{d}$ and $\alpha:=(\alpha_1,\alpha_2,\ldots,\alpha_d)\in\mathbb{N}^d$, we let $u^{\alpha}=u_{1}^{\alpha_{1}}u_{2}^{\alpha_{2}}\ldots u_{d}^{\alpha_{d}}$, $|u|:=u_1+u_2+\ldots+u_d$ and $\alpha!:=\alpha_{1}!\alpha_{2}!\ldots \alpha_{d}!$, while $\|u\|$ stands for its $2$-norm value. Lastly, for any two positive sequences $\{a_n\}_{n\geq 1}$ and $\{b_n\}_{n\geq 1}$, we write $a_n = \mathcal{O}(b_n)$ or $a_{n} \lesssim b_{n}$ if $a_n \leq C b_n$ for all $ n\in\mathbb{N}$, where $C > 0$ is some universal constant. The notation $a_{n} = \mathcal{O}_{P}(b_{n})$ indicates that $a_{n}/b_{n}$ is stochastically bounded.

\vspace{-0.3 em}
\section{Background and Related Works} \label{background}
\vspace{-0.3 em}
We first provide background and related works on continual learning. Then, we define the attention mechanism, followed by discussions on prompt-based continual learning and mixture of experts.

\textbf{Continual Learning (CL)} addresses the challenge of training a model incrementally on a sequence of $T$ tasks, denoted by $\mathcal{D} = \{ \mathcal{D}_1,...,\mathcal{D}_T \}$. Each task's training data $\dt = \{(\xbm_i^{(t)}, y_i^{(t)})\}_{i = 1}^{N_t}$ contains pairs of input sample $\xbm_i^{(t)} \in \xdom$, and corresponding label $y_i^{(t)} \in \ydom$. Notably, the class labels are distinct for each task, \ie $\ydom \bigcap \mathcal{Y}^{(t')} = \varnothing, \forall t \neq t'$. Consider a neural network with a backbone function $f_\theta$ and an output layer $h_\psi$. The model predicts a label $\hat{y} = h_\psi(f_\theta(\xbm)) \in \mathcal{Y} = \bigcup_{t = 1}^T \ydom$, where $\xbm \in \mathcal{X} = \bigcup_{t = 1}^T \xdom$ is an unseen test sample from arbitrary tasks. Importantly, during training on a new task, the model can only access the current data, without access to data from previous tasks. Prior approaches often rely on storing past task samples for training on new tasks, raising concerns regarding storage and privacy \cite{Cha_2021_ICCV, chaudhry2019tiny, rebuffi2017icarl, wang2022memory, zhang2023slca}.

Our work focuses on the class-incremental learning (CIL) setting, where task identities are not provided during inference, unlike in task-incremental learning (TIL) \cite{vandeven2022three}. A recent theory by \cite{NEURIPS2022_20f44da8} analyzes the CIL objective by decomposing the probability of a test sample $\xbm$ of the $j$-th class in task $t$ into two probabilities:
\begin{align}
    P(\xbm \in \xdom_j | \data) = P(\xbm \in \xdom_j | \xbm \in \xdom, \data) P(\xbm \in \xdom | \data),
\end{align}
where the first term involves within-task prediction (WTP) and the second term pertains to task-identity inference (TII). This equation highlights that by improving either the WTP performance or the TII, we can consequently improve the overall CIL performance, as shown in \cite{NEURIPS2022_20f44da8, wang2023hierarchical}.


\textbf{Attention Mechanism.} Within the Transformer architecture, the attention mechanism plays a crucial role. One prevalent variant is scaled dot-product attention\cite{vaswani2017attention}, formally defined as follows:

\begin{definition}[Scaled Dot-Product Attention]
    \label{def:attention}
    Let $\Kbm \in \RR^{N \times \dk}$ be a \emph{key} matrix with $N$ key vectors, and $\Vbm \in \RR^{N \times \dv}$ be a \emph{value} matrix with $N$ corresponding value vectors. Given a \emph{query} matrix $\Qbm \in \RR^{M \times \dk}$, \emph{Attention} over $(\Kbm, \Vbm)$ is defined as
    \begin{align}
        \mathrm{Attention}(\Qbm, \Kbm, \Vbm) = \softmax(\frac{\Qbm\Kbm^\top}{\sqrt{\dk}}) \Vbm
    \end{align}
    where the softmax function acts on the rows of matrix $\Qbm\Kbm^\top \in \RR^{M \times N}$.
\end{definition}

Vision Transformer (ViT) \cite{dosovitskiy2020vit} employs the same attention mechanism within multiple Multi-head Self-Attention (MSA) layers, which is formally defined as follows: 

\begin{definition} [Multi-head Self-Attention Layer] \label{def:MSA}
    Let $\Xbm^Q, \Xbm^K, \Xbm^V$ denote the input query, key, and value matrix, respectively, where $\Xbm^Q = \Xbm^K = \Xbm^V = [\xbm_1, ..., \xbm_N]^\top \in \mathbb{R}^{N \times d}$, and $N$ is the length of the input sequence. The output is expressed as
        \begin{align}
            &\mathrm{MSA}(\Xbm^Q, \Xbm^K, \Xbm^V) := \mathrm{Concat}(\hbm_1,...,\hbm_m) W^O \in \RR^{N \times d}, \\
            &\hbm_i := \mathrm{Attention}(\Xbm^Q W_i^Q, \Xbm^K W_i^K, \Xbm^V W_i^V), \;i \in [m]. \label{eq:msa}
        \end{align}
    where $W^O \in \RR^{m\dv \times d}, W_i^Q \in \RR^{d \times \dk}, W_i^K \in \RR^{d \times \dk}, \text{ and } W_i^V \in \RR^{d \times \dv}$ are projection matrices, and $m$ is the number of heads in the MSA layer. In ViTs, they use $\dk = \dv = d / m$.
\end{definition}

\textbf{Prompt-based continual learning.}
Prompt-based approaches have emerged as a promising alternative within rehearsal-free continual learning \cite{zhou2024continual, wang2023sprompts, tran2023koppa}. In vision tasks, prompt-based methods often leverage a pre-trained ViT as a feature extractor $f_\theta$, with its parameters $\theta$ typically frozen. These methods enhance the model by introducing \emph{prompts}, small sets of learnable parameters that influence the operations of the MSA layer \cite{wang2022dualprompt}. Prompts are strategically injected into the query, key, and value matrices to guide the ViT in learning new tasks. We denote the prompt parameters by $\prompt \in \mathbb{R}^{L_p \times d}$, where $L_p$ is the sequence length and $d$ is the embedding dimension. Previous work \cite{wang2022dualprompt} outlines two main prompt-based approaches: Prompt Tuning (ProT) \cite{lester-etal-2021-power} and Prefix Tuning (PreT) \cite{li2021prefixtuning}. While Prompt Tuning directly concatenates the same prompt parameter $\prompt$ to the query, key, and value, prefix tuning divides $\prompt$ into prefixes $\{\prompt^K, \prompt^V\} \in \mathbb{R}^{\frac{L_p}{2} \times d}$ and appends it to the key and value vectors:
    \begin{align} \label{eq:prefix_tuning}
        f_{\mathrm{prompt}}^{\mathrm{Pre-T}}(\prompt, \Xbm^Q, \Xbm^K, \Xbm^V) &:= \mathrm{MSA}\left(
            \Xbm^Q, 
            \begin{bmatrix}
                \prompt^K \\
                \Xbm^K
            \end{bmatrix},
            \begin{bmatrix}
                \prompt^V \\
                \Xbm^V
            \end{bmatrix}
        \right)
        = \mathrm{Concat}(\tilde{\hbm}_1,...,\tilde{\hbm}_m) W^O
    \end{align}


Existing prompt-based methods in CL address catastrophic forgetting by creating new adaptive prompts for each new task. During testing, the model chooses suitable prompt combinations to handle unseen data from any encountered task \cite{wang2023hierarchical}. L2P \cite{wang2022learning} proposes a shared prompt pool for all tasks, utilizing a query-key mechanism for prompt selection. Instead of using the same prompt pool across tasks, DualPrompt \cite{wang2022dualprompt} introduces G-Prompt and E-Prompt to capture task-agnostic and task-specific information, respectively. S-Prompt \cite{wang2023sprompts} focuses on learning task-specific prompts and employs a ProT strategy similar to L2P. CODA-Prompt \cite{smith2023codaprompt} expands the prompt pool across tasks and performs a weighted summation of the prompt pool using attention factors. A recent work, HiDe-Prompt \cite{wang2023hierarchical}, achieves state-of-the-art performance by introducing a new hierarchical decomposition of CIL objectives and optimizing each component for better performance.

In this study, we focus on prefix tuning as our primary prompt-based methodology and follow the framework presented in HiDe-Prompt \cite{wang2023hierarchical}. During training, HiDe-Prompt co-optimizes task-specific prompts $\prompt_t$ and model's output layer parameters $\psi$ for each new task $t$ using the WTP objective. These prompts are  stored within a prompt pool $\mathbf{P} = \{ \prompt_1,...,\prompt_T \}$. At test time, a separate lightweight auxiliary output layer $\hat{h}_\omega: \RR^D \rightarrow \RR^T$, trained with the TII objective, takes the uninstructed representation $f_\theta(x)$ of a new data point $\boldsymbol{x}$ as input to infer the task identity, guiding the selection of the most suitable prompt $\prompt_k$ from the prompt pool $\mathbf{P}$. Subsequently, the final prediction is given as $\hat{y} = h_\psi(f_\theta(\xbm, \prompt_k))$. For further details, please refer to Appendix~\ref{appendix:hide}.

\textbf{Mixture of experts (MoE)} extends classical mixture models with an adaptive gating mechanism \cite{Jacob_Jordan-1991, jordan1994hierarchical}. An MoE model consists of a group of $N$ expert networks $f_i: \mathbb{R}^d \rightarrow \mathbb{R}^{d_v}$, for all $i \in [N]$, and a gate function $G: \mathbb{R}^d \rightarrow \mathbb{R}^{N}$. Given an input $\hbm \in \mathbb{R}^{d}$, MoE computes a weighted sum of expert outputs $f_i(\hbm)$ based on learned score function $s_i: \mathbb{R}^d \rightarrow \mathbb{R}$ for each expert:
\begin{align}
    \ybf := \sum_{j=1}^N G(\hbm)_j \cdot f_j(\hbm) := \sum_{j=1}^N \frac{\exp\left(s_j(\hbm)\right)}{\sum_{\ell=1}^N\exp\left(s_\ell(\hbm)\right)} \cdot f_j(\hbm),
\end{align}
where $G(\hbm) := \softmax(s_1(\hbm),\ldots,s_N(h))$. Building on this concept, works by \cite{Eigen_learning_2014, Quoc-conf-2017} established the MoE layer as a fundamental building block to scale up model capacity efficiently. Please refer to Appendix~\ref{appendix:moe} for a comprehensive discussion of related works.




\section{Connection between Prefix Tuning and Mixture of Experts} \label{sect:prompt and moe}

We first explore the relationship between attention and mixture of experts in Section~\ref{subsect:moe and attn}, followed by establishing the connection between prefix tuning and the mixture of experts in Section~\ref{subsect:moe and prompt}.

\vspace{-0.3 em}
\subsection{Mixture of Experts Meets Attention} \label{subsect:moe and attn}
\vspace{-0.3 em}

\begin{figure}
  \centering
  \includegraphics[scale=0.39]{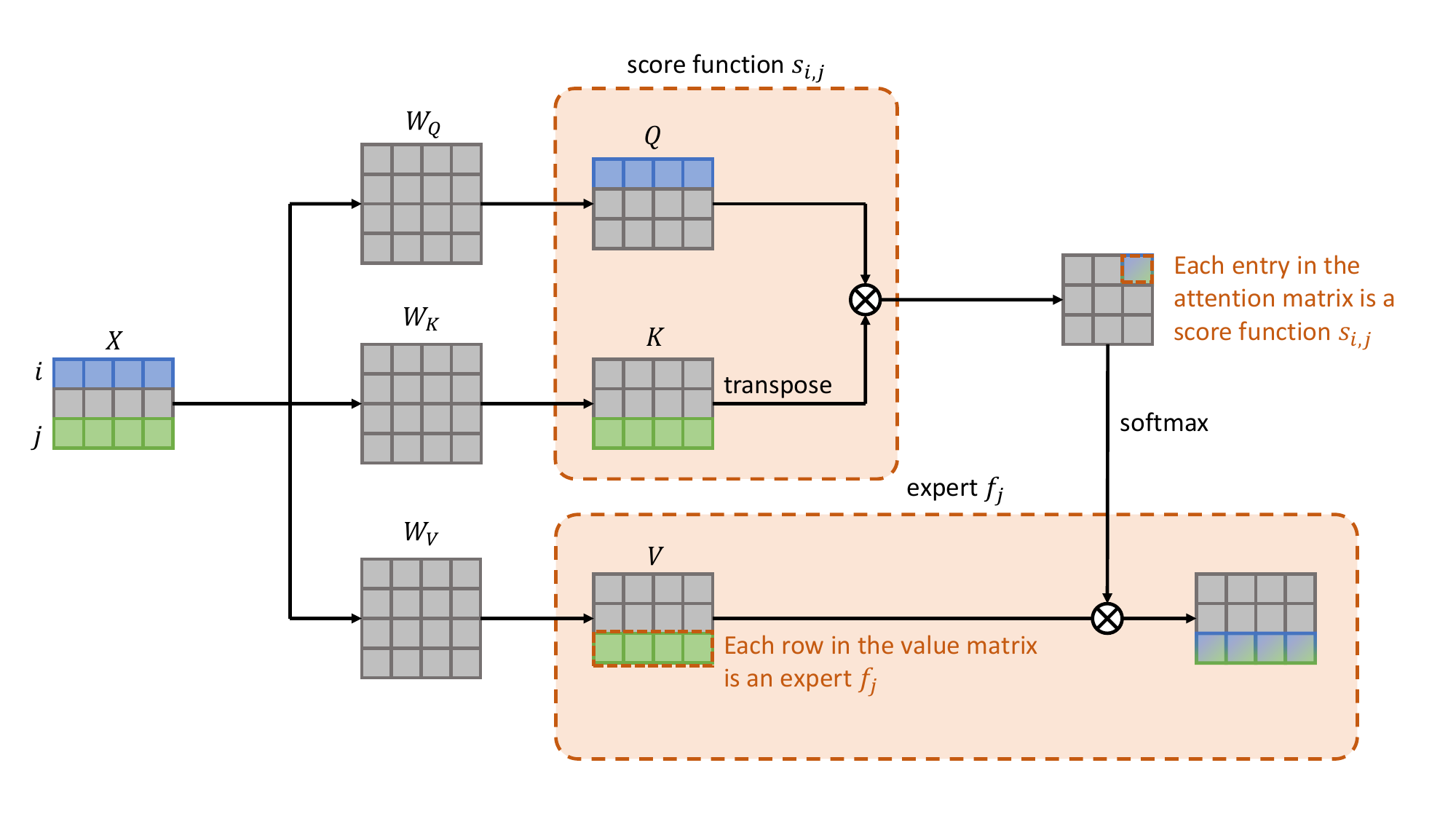}
  \caption{An illustrative depiction of the relationship between self-attention and MoE. Each output vector of a head in the MSA layer can be viewed as the output of a MoE model. These MoE models share the same set of experts encoded in the value matrix. Each entry in the attention matrix corresponds to a score function within this architecture.} \label{fig:attention_moe}
  \vspace{-1 em}
\end{figure}

Following the notation established in Definition~\ref{def:MSA}, let's consider the $l$-th head within the MSA layer. Let $\Xbm = \left[ \xbm_1^\top, \dots, \xbm_N^\top \right]^\top \in \RR^{Nd}$, which is the concatenation of input sequence embeddings into a single one-dimensional vector. We define the matrix $E_{i} \in \mathbb{R}^{d \times Nd}$ such that $E_{i} \Xbm : = \xbm_{i}$ for all $i \in [N]$. Furthermore, we introduce an MoE architecture consisting of a group of $N$ expert networks $f_j: \RR^{Nd} \rightarrow \RR^{\dv}$, $N$ gating functions $G_i: \RR^{Nd} \rightarrow \RR^N$ with the score function for the $j$-th expert of the $i$-th gating $s_{i,j}: \RR^{Nd} \rightarrow \RR$, where
\begin{align*}
    f_j(\Xbm) := {W_l^V}^\top E_{j} \Xbm = {W_l^V}^\top \xbm_j, \
    s_{i,j}(\Xbm) := \frac{\Xbm^{\top} E_{i}^{\top} W_l^Q  {W_l^K}^\top E_{j} \Xbm}{\sqrt{d_{v}}} =   \frac{\xbm_i^\top W_l^Q  {W_l^K}^\top \xbm_j}{\sqrt{\dv}}
\end{align*}
for $i$ and $j \in [N]$. From equation~(\ref{eq:msa}), we can express the output of the $l$-th head as follows:
\begin{align}
    & \hbm_l =  
          \softmax\left(\frac{\Xbm^Q W_l^Q  {W_l^K}^\top {\Xbm^K}^\top}{\sqrt{\dv}}\right) \Xbm^V W_l^V  = \left[ \hbm_{l,1},\dots, \hbm_{l,N} \right]^\top \in \RR^{N \times \dv}, \\
    &\hbm_{l,i} =  
    \sum_{j = 1}^N  
    \frac{\exp\left(\frac{\xbm_i^\top W_l^Q  {W_l^K}^\top \xbm_j}{\sqrt{\dv}}\right)}{\sum_{k = 1}^N \exp\left(\frac{\xbm_i^\top W_l^Q  {W_l^K}^\top \xbm_k}{\sqrt{\dv}}\right)} {W_l^V}^\top \xbm_j
    = \sum_{j = 1}^N  
    \frac{\exp(s_{i,j}(\Xbm))}{\sum_{k = 1}^N \exp(s_{i,k}(\Xbm))} f_j(\Xbm),  \label{eq:attn_moe}
\end{align}
for $i \in [N]$. Expanding on equation~(\ref{eq:attn_moe}), we can discern that each attention head within the MSA layer implicitly embodies a special mixture of experts architecture. This architecture encompasses $N$ MoE models, each featuring its own quadratic gating function $G_i$. However, instead of employing $N^2$ separate expert networks for each model, this architecture utilizes $N$ shared linear expert networks $f_j$ for $j \in [N]$, significantly reducing the number of parameters. Notably, each expert network and its corresponding gating function process the entire input sequence directly, rather than individual embedding $\xbm_i$ as in traditional MoE layers \cite{Quoc-conf-2017}. This connection between self-attention and mixture of experts is depicted in Figure~\ref{fig:attention_moe}. In the subsequent section, we explore how prompt-based techniques can be viewed through this lens.

\vspace{-0.3 em}
\subsection{Prefix Tuning via the Perspective of Mixture of Experts} \label{subsect:moe and prompt}
\vspace{-0.3 em}

\begin{figure}
  \centering
  \includegraphics[scale=0.28]{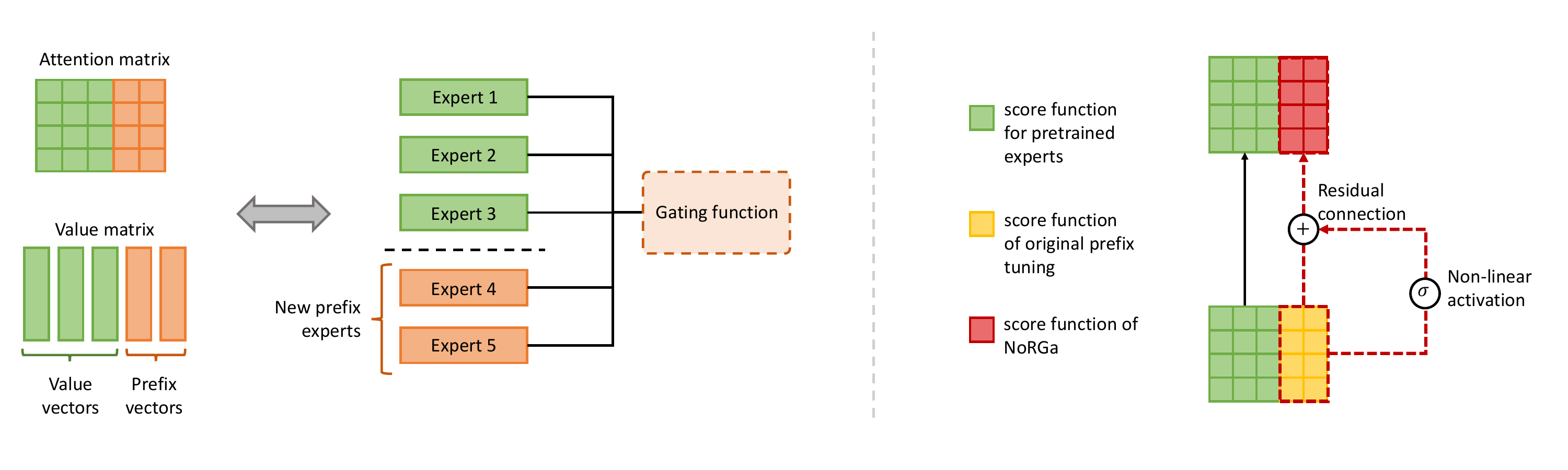}
  \caption{Left: An illustrative depiction of prefix tuning as the introduction of new experts into pre-trained MoE models. Right: Visualization of NoRGa implementation, integrating non-linear activation and residual connections into the prefix tuning attention matrix.}
  \vspace{-1 em} \label{fig:prefix_norga}
\end{figure}

Building on the connection between self-attention and mixture of experts, we propose that applying prefix tuning can be interpreted as the introduction of new experts to customize the pre-trained model for a specific task, as illustrated in Figure~\ref{fig:prefix_norga}. Specifically, similar to Section~\ref{subsect:moe and attn}, we consider the $l$-th head within the MSA layer. We denote $\prompt^K = \left[ \prompt_1^K, \dots, \prompt_L^K  \right]^\top \in \RR^{L \times d}$, $\prompt^V = \left[ \prompt_1^V, \dots, \prompt_L^V  \right]^\top \in \RR^{L \times d}$, where $L = \frac{L_p}{2}$. We define new \emph{prefix} experts $f_{N + j}: \RR^{Nd} \rightarrow \RR^{\dv}$ along with their corresponding new score functions $s_{i, N + j}: \RR^{Nd} \rightarrow \RR$ as follows:
\begin{align}
    f_{N + j}(\Xbm) := {W_l^V}^\top \prompt^V_j, \quad
    s_{i, N + j}(\Xbm) := \frac{\Xbm^\top E_{i}^{\top} W_l^Q  {W_l^K}^\top \prompt^K_j}{\sqrt{\dv}} = \frac{\xbm_i^\top W_l^Q  {W_l^K}^\top \prompt^K_j}{\sqrt{\dv}}
\end{align}
for $i \in [N]$ and $j \in [L]$. Then from equation~(\ref{eq:prefix_tuning}), the output of the $l$-th head can be expressed as:
\begin{align}
    &\tilde{\hbm}_l  
    =  \mathrm{Attention}\left(
        \Xbm^Q W_l^Q,             
        \begin{bmatrix}
                \prompt^K \\
                \Xbm^K
        \end{bmatrix} W_l^K, 
        \begin{bmatrix}
            \prompt^V \\
            \Xbm^V
        \end{bmatrix} W_l^V \right) 
    = \left[ \tilde{\hbm}_{l, 1}, \dots, \tilde{\hbm}_{l, N} \right]^\top \in \RR^{N \times \dv}
    , \\
    &\tilde{\hbm}_{l, i}
    = \sum_{j = 1}^N  
        \frac{\exp(s_{i, j}(\Xbm))}
        {
            \sum_{k = 1}^N \exp(s_{i, k}(\Xbm)) + \sum_{k' = 1}^L \exp(s_{i, N + k'}(\Xbm))
        } f_j(\Xbm) \nonumber \\
    & \hspace{6 em} + \sum_{j' = 1}^L  
    \frac{\exp(s_{i, N + j'}(\Xbm))}
    {
        \sum_{k = 1}^N \exp(s_{i, k}(\Xbm)) + \sum_{k' = 1}^L \exp(s_{i, N + k'}(\Xbm))
    } f_{N + j'}(\Xbm) \label{eq:prompt_moe}
\end{align}

It's worth noting that $W_l^Q$, $W_l^K$, and $W_l^V$ remain fixed, with only $\prompt^K$ and $\prompt^V$ being learnable. By examining equation~(\ref{eq:attn_moe}) and equation~(\ref{eq:prompt_moe}), we can interpret each head in a multi-head self-attention layer within a pre-trained model as a mixture of experts architecture with pre-trained experts $f_j$ and gating score functions $s_{i, j}$ for $i$ and $j \in [N]$. Prefix tuning extends this MoE by introducing $L$ additional prefix experts $f_{N + j'}$  defined by prefix vectors $\prompt^V_{j'}$ and linear score functions $s_{i, N + j'}$ for $i \in [N]$ and $j' \in [L]$. These new experts collaborate with the pre-trained ones within the MoE model, facilitating the model's adaptation to downstream tasks. 

We argue that our introduction of a novel connection between self-attention, prefix tuning, and MoE offers a fresh perspective on the design of previous prompt-based continual learning methods. In the context of continual learning, the pre-trained experts serve as a knowledge base, while prefix tuning augments it with task-specific knowledge encoded in new experts. Moreover, we draw a parallel between the pre-trained experts and the G(eneral)-Prompt utilized in DualPrompt, which captures task-agnostic information \cite{wang2022dualprompt}. Both are shared across tasks, making them useful for prediction, especially when task identification is incorrect. Notably, the new experts achieve their efficiency through simple linear gating functions and independence from the input, unlike the pre-trained experts. For simplicity, we call the MoE model~\eqref{eq:prompt_moe} as \emph{linear gating prefix MoE}.

\textbf{Statistical suboptimality.} The connection between prefix tuning and the MoE within the linear gating prefix MoE model~\eqref{eq:prompt_moe} allows us to theoretically explore the statistical behavior of the prefix tuning. In Appendix~\ref{sec:stats_predix_moe}, by interpreting the linear gating prefix MoE as a regression problem with sample size $n$, we demonstrate that the convergence rate for estimating the model parameters, e.g., prompts, can be as slow as $\mathcal{O}(1/\log^{\tau}(n))$ where $\tau>0$ is some constant. This suggests that a huge amount of data is required to achieve reasonable parameter estimation in the linear gating prefix MoE model, which can be discouraging in practice. 
To address this statistical limitation, the next section introduces a novel non-linear residual gating score function to replace the linear gating function.

\vspace{-0.3 em}
\section{Non-linear Residual Gate Meets Prefix Tuning}
\label{sec:theory}
\vspace{-0.3 em}
As discussed earlier, prefix tuning introduces additional experts within the MoE framework, resulting in the linear gating prefix MoE model. However, as outlined in Appendix~\ref{sec:stats_predix_moe}, this approach suffers from suboptimal sample efficiency for parameter estimation. To overcome this and enhance overall CIL performance, we propose an innovative approach that significantly improves sample efficiency while promoting WTP performance in Section~\ref{method:norga} and provide theoretical explanations in Section~\ref{method:theory}.

\vspace{-0.3 em}
\subsection{NoRGa: Non-linear Residual Gate} \label{method:norga}
\vspace{-0.3 em}
We propose a simple yet effective modification to the linear gating prefix MoE model by incorporating non-linear activation and residual connection within the score functions of prefix experts as follows:
\begin{align}
    \hat{s}_{i, N + j}(\Xbm) &:= 
    \frac{\Xbm^\top E_{i}^{\top} W_l^Q  {W_l^K}^\top \prompt^K_j}{\sqrt{\dv}} + \alpha \cdot \sigma \left(
        \tau \cdot \frac{\Xbm^\top E_{i}^{\top} W_l^Q  {W_l^K}^\top \prompt^K_j}{\sqrt{\dv}}
        \right)  \nonumber\\ 
    &= 
        s_{i, N + j}(\Xbm) + \alpha \cdot \sigma(\tau \cdot s_{i, N + j}(\Xbm)), \ i \in [N], \  j \in [L],  \label{eq:attention_Norga}
\end{align}
where $\alpha, \tau \in \RR$ are learnable scalar factors, and $\sigma$ is a non-linear activation function. The new score function in equation~(\ref{eq:attention_Norga}) consists of a linear and a non-linear component. We call the new prefix MoE model with score functions~\eqref{eq:attention_Norga} as \emph{non-linear residual gating prefix MoE}. 

The use of a non-linear activation function here is motivated by the algebraic independence condition in Definition~\ref{def:strong_identifiability} to theoretically guarantee the optimal sample efficiency of expert and parameter estimations (cf. Theorem~\ref{theorem:parameter_estimation}). It's worth noting that removing the linear component $s_{i, N + j}(\Xbm)$ in the score function~\eqref{eq:attention_Norga} could potentially lead to the vanishing gradient problem during training. To mitigate this challenge, we incorporate a residual connection \cite{he2015deep} into the formulation. Our modification introduces minimal additional parameters ($\alpha$ and $\tau$) compared to the original score function, ensuring parameter efficiency. This is particularly crucial in continual learning scenarios where
the number of parameters grows with each new task. For implementation, we define $H_l = W_l^Q {W_l^K}^\top$. From equation~(\ref{eq:prefix_tuning}), the attention matrix of the $l$-th head can then be written as:
\begin{align}
    A_l = \frac{
        \Xbm^Q H_l [{\prompt^K}^\top,\ {\Xbm^K}^\top]
    }{
        \sqrt{\dv}
    } = \frac{
        [\Xbm^Q H_l {\prompt^K}^\top , \ \Xbm^Q H_l {\Xbm^K}^\top]
    }{
        \sqrt{\dv}    
    } = [A_l^\mathrm{prompt}, \ A_l^\mathrm{pretrain}].
\end{align}
Here, $A_l^\mathrm{prompt}$ denotes the attention score matrix for the prompts, and $A_l^\mathrm{pretrain}$ represents the attention score matrix for the pre-trained experts. To implement NoRGa, we can directly modify the final attention matrix as follows:
\begin{align}
    \hat{A}_l &= [\hat{A}_l^\mathrm{prompt}, A_l^\mathrm{pretrain}],  \\
    \hat{A}_l^\mathrm{prompt} &= A_l^\mathrm{prompt} + \alpha \cdot \sigma(\tau \cdot A_l^\mathrm{prompt}). 
\end{align}
The implementation of NoRGa is illustrated in Figure~\ref{fig:prefix_norga}. Despite its simplicity, our modification can significantly enhance sample efficiency and promote more reasonable parameter estimation, as demonstrated in our theoretical analysis in Section~\ref{method:theory}. Within the HiDe-Prompt framework, task-specific prompt parameters are trained using the WTP objective for each new task. Consequently, our modification leads to better parameter estimation, which directly contributes to improved WTP performance, ultimately improving overall continual learning efficacy. Importantly, NoRGa maintains the same parameter count as HiDe-Prompt, which is crucial in CL because of the memory constraint. Here, we evaluated $\sigma$ with $\tanh$, $\sigmoid$, and $\gelu$, finding $\tanh$ to perform well in most cases.

\vspace{-0.3 em}
\subsection{Theoretical Explanation for Non-linear Residual Gating Prefix MoE} \label{method:theory}
\vspace{-0.3 em}

Similar to the setting in Appendix~\ref{sec:stats_predix_moe}, 
we prove that estimating parameters in the non-linear residual gating prefix MoE model~\eqref{eq:attention_Norga} is statistically efficient in terms of the number of data. To provide a fair comparison to the linear gating prefix MoE, we focus only on the first head and its first row, namely, $l = 1$ and $i = 1$ in equation~\eqref{eq:attention_Norga}. Then, we proceed to provide a theoretical justification of our claim by viewing this row as an output of a regression setting. In particular, we assume that $(\Xbm_1,Y_1), (\Xbm_2,Y_2),\ldots,(\Xbm_n,Y_n)\in\mathbb{R}^{Nd} \times\mathbb{R}$ are i.i.d. samples generated from model:
\begin{align}
    Y_i=g_{G_*}(\Xbm_i)+\varepsilon_i, \quad i=1,2,\ldots,n, \label{eq:norga_regression_model}
\end{align}
where $\varepsilon_1,\ldots,\varepsilon_n$ are independent Gaussian noise variables such that $\bbE[{\varepsilon_{i}}|X_i] = 0$ and $\var(\varepsilon_{i}|X_i) = \nu^2$ for all $1 \leq i \leq n$. Additionally, we assume that $\Xbm_{1}, \Xbm_{2}, \ldots, \Xbm_{n}$ are i.i.d. samples from some probability distribution $\mu$.
The regression function $g_{G_{*}}(\cdot)$ in equation~\eqref{eq:norga_regression_model} then takes the form of a non-linear residual gating prefix MoE model with $N$ pre-trained experts and $L$ unknown experts,
\begin{align}
    g_{G_{*}}(\Xbm) & := \sum_{j=1}^{N} \frac{\exp(\Xbm^{\top} B^0_j\Xbm+c^0_j)}{T(\Xbm)}\cdot h(\Xbm,\eta^0_j) \nonumber \\
    & \hspace{4 em} +\sum_{j' = 1}^{L} \frac{\exp((\beta^*_{1j'})^{\top}\Xbm+ \alpha \sigma(\tau(\beta^*_{1j'})^{\top}\Xbm)+\beta^*_{0j'})}{T(\Xbm)}\cdot h(\Xbm,\eta^*_{j'}),
\end{align}
where $T(\Xbm) := \sum_{k = 1}^{N}\exp(\Xbm^{\top}B^0_{k}\Xbm+c^0_{k})+\sum_{k'=1}^{L}\exp((\beta^*_{1k'})^{\top}\Xbm+ \alpha \sigma(\tau (\beta^*_{1k'})^{\top}\Xbm)+\beta^*_{0k'})$, $G_{*} := \sum_{j' = 1}^{L} \exp(\beta_{0j'}^{*}) \delta_{(\beta_{1j'}^{*},\eta^*_{j'})}$ denotes a \emph{mixing measure,} i.e., a weighted sum of Dirac measures $\delta$, associated with unknown parameters $(\beta^*_{1j'},\beta^*_{0j'},\eta^*_{j'})_{j'=1}^{L}$ in $\mathbb{R}^{Nd} \times\mathbb{R}\times\mathbb{R}^q$. Here, the matrix $B_{j}^{0}$ is equivalent to $(E_{1}^{\top} W_1^Q  {W_1^K}^\top E_{j}/\sqrt{\dv})$ in the score function $s_{1, j}(\Xbm)$, and the vector $\beta^*_{1j'}$ corresponds to the vector $(E_{1}^{\top} W_1^Q  {W_1^K}^\top \prompt^K_{j'} / \sqrt{\dv})$ in $\hat{s}_{1, N+j'}(\Xbm)$. Furthermore, the experts $h(\Xbm,\eta^0_{j})$ and $h(\Xbm,\eta^*_{j'})$ represent $f_{j}(\Xbm)$ and $f_{N + j'}(\Xbm)$, respectively. In our formulation, for the generality of the ensuing theory, we consider general parametric forms of the experts $h(\Xbm,\eta^0_{j})$ and $h(\Xbm,\eta^*_{j'})$, i.e., we do not only constrain these expert functions to be the forms of the simple experts in the aforementioned model. Similar to the setting in Appendix~\ref{sec:stats_predix_moe}, $B_{j}^{0}$, $c_{j}^{0}$, and the expert parameters $\eta_{j}^{0}$ are known. Our goal is to estimate the unknown prompt-related parameters $\beta_{1j'}^{*}, \beta_{0j'}^{*}$, and $\eta_{j'}^{*}$. 

\textbf{Least squares estimation.} We will use the least squares method \cite{vandeGeer-00} to estimate the unknown parameters $(\beta^*_{0j'},\beta^*_{1j'},\eta^*_{j'})_{j'=1}^{L}$ or, equivalently, the ground-truth mixing measure $G^*$. In particular, we take into account the estimator
\begin{align}
    \label{eq:least_squared_estimator}
    \widehat{G}_n:=\argmin_{G\in\mathcal{G}_{L'}(\Theta)}\sum_{i=1}^{n}\Big(Y_i-g_{G}(\Xbm_i)\Big)^2,
\end{align}
where we denote $\mathcal{G}_{L'}(\Theta):=\{G=\sum_{i=1}^{\ell}\exp(\beta_{0i})\delta_{(\beta_{1i},\eta_{i})}:1\leq \ell\leq L', \  (\beta_{0i},\beta_{1i},\eta_{i})\in\Theta\}$ as the set of all mixing measures with at most $L'$ atoms. In practice, since the true number of experts $L$ is typically unknown, we assume that the number of fitted experts $L'$ is sufficiently large, i.e., $L'>L$.

To begin with, we explore the convergence behavior of the regression estimator $g_{\widehat{G}_n}(\cdot)$ to the true regression function $g_{G_*}(\cdot)$ under the $L_2(\mu)$-norm in the following theorem:
\begin{theorem}[Regression Estimation Rate]
    \label{proposition:regression_estimation}
     Equipped with a least squares estimator $\widehat{G}_n$ given in equation~\eqref{eq:least_squared_estimator}, the model estimation $g_{\widehat{G}_n}(\cdot)$ converges to the true model $g_{G_*}(\cdot)$ at the following rate:
    \begin{align}
        \label{eq:model_bound}
        \normf{g_{\widehat{G}_n}-g_{G_*}}=\mathcal{O}_{P}(\sqrt{\log(n)/n}).
    \end{align}
\end{theorem}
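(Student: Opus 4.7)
The plan is to view this as a standard nonparametric least-squares problem and derive the $\sqrt{\log n/n}$ rate via an empirical-process argument based on bracketing entropy, in the spirit of van de Geer (2000). Let $\Ff_{L'}(\Theta):=\{g_{G}:G\in\mathcal{G}_{L'}(\Theta)\}$ be the induced class of regression functions. Under the (tacit) assumptions that $\Theta$ is compact, the known pre-trained parameters $(B^0_j,c^0_j,\eta^0_j)$ are fixed, the expert function $h(\Xbm,\eta)$ is bounded and Lipschitz in $\eta$, and the activation $\sigma$ is bounded and Lipschitz, the map $(\beta_{0i},\beta_{1i},\eta_i)_{i=1}^{L'}\mapsto g_G$ is uniformly Lipschitz in the parameters with respect to $\normf{\cdot}$; this is because the softmax gate, the learnable scalars $\alpha,\tau$, and the non-linear residual piece $\alpha\sigma(\tau\cdot)$ are all smooth and bounded on the compact parameter set, and the ratio in the gate is bounded away from degeneracy.

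First, I would establish a uniform envelope: $|g_G(\Xbm)|\le M$ for all $G\in\Gg_{L'}(\Theta)$ and all $\Xbm$ in the support of $\mu$, for some constant $M<\infty$ depending only on $\Theta$ and on $\sup_{\eta}\|h(\cdot,\eta)\|_\infty$. Combined with the Gaussian noise assumption $\varepsilon_i|X_i\sim\Nn(0,\nu^2)$, this puts us squarely in the setting of bounded regression functions with sub-Gaussian errors. Next, I would bound the bracketing entropy of $\Ff_{L'}(\Theta)$. Because $\Ff_{L'}(\Theta)$ is parametrized by a finite-dimensional compact set and the parametrization is $L_\infty$-Lipschitz, a standard covering argument (cover $\Theta^{L'}$ by $\epsilon/L$-balls in parameter space, then promote these to brackets of $L_2(\mu)$-width $\epsilon$) yields
\begin{align*}
H_{B}\bigl(\epsilon,\Ff_{L'}(\Theta),\normf{\cdot}\bigr)\;\lesssim\;\log(1/\epsilon),
\end{align*}
for all $\epsilon\in(0,1/2)$, with the implicit constant depending on $L',\Theta$, the Lipschitz constants, and $M$.

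With this logarithmic bracketing entropy in hand, I would invoke the standard bracketing-entropy bound for least-squares estimators (e.g., Theorem~9.2 of van de Geer, 2000, or Theorem~7.4 of van de Geer, 1995): defining the entropy integral $\Jj_B(\delta):=\int_{0}^{\delta}\sqrt{H_B(u,\Ff_{L'}(\Theta),\normf{\cdot})}\,\dint u$, any $\delta_n>0$ with $\sqrt{n}\,\delta_n^{2}\gtrsim \Jj_B(\delta_n)\vee \delta_n$ yields $\normf{g_{\widehat G_n}-g_{G_*}}=\mathcal{O}_P(\delta_n)$. Plugging in $H_B\lesssim\log(1/\epsilon)$ gives $\Jj_B(\delta)\lesssim \delta\sqrt{\log(1/\delta)}$, and solving $\sqrt n\,\delta_n^{2}\gtrsim \delta_n\sqrt{\log(1/\delta_n)}$ produces $\delta_n\asymp\sqrt{\log(n)/n}$, which is exactly the claimed rate in equation~\eqref{eq:model_bound}.

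The main obstacle I expect is verifying the uniform Lipschitz continuity of $G\mapsto g_G$ in $\normf{\cdot}$ with a constant that depends only on $L',\Theta$, and not on fine properties of $\mu$. The softmax denominator $T(\Xbm)$ must be bounded below uniformly, which follows from boundedness of the score functions on compact $\Theta$ and compact (or integrable-tailed) support of $\mu$; the non-linear residual piece $\alpha\sigma(\tau s_{1,N+j'}(\Xbm))$ needs $\sigma$ to be globally Lipschitz (true for $\tanh,\sigmoid$, and for $\gelu$ on bounded inputs). Once these smoothness/boundedness checks are in place, the bracketing entropy bound and the ensuing application of the empirical-process concentration result are routine, and the $\sqrt{\log n/n}$ rate follows without further subtlety.
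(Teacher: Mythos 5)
Your proposal follows essentially the same route as the paper's proof: both bound the bracketing entropy of $\mathcal{F}_{L'}(\Theta)$ by $\log(1/\varepsilon)$ via an $L^\infty$-covering argument that exploits compactness of $\Theta$, boundedness of $\mathcal{X}$, and Lipschitz continuity of the experts and the residual gate, and then both feed this into the van de Geer peeling bound (Theorems~7.4 and~9.2 of \cite{vandeGeer-00}) with $\Psi(\delta)\asymp\delta\sqrt{\log(1/\delta)}$ to extract $\delta_n=\sqrt{\log(n)/n}$. The only differences are cosmetic: the paper spells out the bracket construction (clipping $\pi_i\pm\zeta$ by $0$ and $M$) and the explicit Lipschitz estimates on the softmax numerator/denominator that you flag as the "main obstacle" but do not carry out.
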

Proof of Theorem~\ref{proposition:regression_estimation} is in Appendix~\ref{appendix:regression_estimation}. The bound~\eqref{eq:model_bound} implies that the rate for estimating the regression function $g_{G_*}(\cdot)$ is of order $\mathcal{O}_{P}(\sqrt{\log(n)/n})$, which is parametric on the sample size $n$. More importantly, it also indicates that if there exists a loss function among parameters $\mathcal{L}$ such that $\normf{g_{\widehat{G}_n}-g_{G_*}}\gtrsim \mathcal{L}(\widehat{G}_n,G_*)$, then we would obtain the bound $\mathcal{L}(\widehat{G}_n,G_*)=\mathcal{O}_P(\sqrt{\log(n)/n})$, which leads to the desired parameter and expert estimation rates. 

We now turn our attention to the parameter and expert estimation problems. To understand how the non-linear residual gating affects these problems, we analyze the properties of the expert $h(\cdot,\eta)$ and the activation function $\sigma(\cdot)$ to determine which formulations will achieve favorable performance.
\begin{definition}[Algebraic independence]
    \label{def:strong_identifiability}
    We say that an expert function $h(\cdot,\eta)$ and an activation function $\sigma(\cdot)$ are algebraically independent if they are twice differentiable w.r.t their parameters, and if for any $k\geq 1$ and pair-wise distinct parameters $(\beta_{11},\eta_1),\ldots,(\beta_{1k},\eta_k)$, the following set of functions in $\Xbm$ is linearly independent for almost every $\Xbm \in \mathbb{R}^{Nd}$:
    \begin{align*}
        \Big\{\Xbm^{\nu}\Big[(1+\sigma'(\beta_{1j}^{\top}\Xbm))^{|\nu|}+\mathbf{1}_{\{|\nu|=2\}}\sigma''(\beta_{1j}^{\top}\Xbm)\Big]&\cdot\frac{\partial^{|\gamma|}h}{\partial\eta^{\gamma}}(\Xbm,\eta_{j}):j\in[k_*], \\
        &\nu\in\mathbb{N}^{Nd},\gamma\in\mathbb{N}^q:0\leq|\nu|+|\gamma|\leq 2\Big\}.
    \end{align*}
\end{definition}
Intuitively, the algebraic independence condition ensures that there will be no interactions among parameters of the expert function $h(\cdot,\eta)$ and the activation function $\sigma(\cdot)$. Technically, a key step in our argument is to decompose the regression discrepancy $g_{\widehat{G}_n}(\Xbm)-g_{G_*}(\Xbm)$ into a combination of linearly independent terms by applying Taylor expansions to the product of the softmax's numerator and the expert function, i.e., $\exp(\beta_{1}^{\top}\Xbm+\alpha\sigma(\tau\beta_{1}^{\top}\Xbm)) h(\Xbm, \eta)$. Thus, the above condition guarantees that all the derivative terms in the Taylor expansion are linearly independent. To exemplify the algebraic independence condition, we consider the following simple examples of the expert functions $h(\cdot,\eta)$ and the activation $\sigma(\cdot)$ that are algebraically independent.

\textbf{Example.} When the expert function $h(\cdot,\eta)$ is formulated as a neural network $h(\Xbm,(a,b))=\varphi(a^{\top}\Xbm+b)$ with the activation $\varphi(\cdot)\in\{\mathrm{ReLU}(\cdot), \mathrm{GELU}(\cdot), z\mapsto z^{p}\}$, where $(a,b)\in\mathbb{R}^{Nd}\times\mathbb{R}$, and the activation function $\sigma(\cdot)$ is one among the functions $\mathrm{sigmoid}(\cdot), \mathrm{tanh}(\cdot), \gelu(\cdot)$, then they satisfy the algebraic independence condition in Definition~\ref{def:strong_identifiability}.

Finally, we establish the rates for estimating parameters and experts in the non-linear residual gating prefix MoE model in Theorem~\ref{theorem:parameter_estimation}. Prior to presenting the theorem statement, let us design a loss function among parameters based on a notion of Voronoi cells \cite{manole22refined},
which is a commonly employed approach for the convergence analysis of expert estimation in MoE models \cite{nguyen2023demystifying,nguyen2024statistical,nguyen2024squares,nguyen2024temperature}, yet tailored to the setting of this paper. In particular, the Voronoi loss used for our analysis is defined as
\begin{align}
\label{eq:voronoi_loss}
&\mathcal{L}_1(G,G_*):=\sum_{j'\in[L]:|\mathcal{V}_{j'}|>1}\sum_{i\in\mathcal{V}_{j'}}\exp(\beta_{0i})\Big[\|\Delta \beta_{1ij'}\|^{2}+\|\Delta \eta_{ij'}\|^2\Big]\nonumber\\
    &+\sum_{j'\in[L]:|\mathcal{V}_{j'}|=1}\sum_{i\in\mathcal{V}_{j'}}\exp(\beta_{0i})\Big[\|\Delta \beta_{1ij'}\|+\|\Delta \eta_{ij'}\|\Big] +\sum_{j'=1}^{L}\Big|\sum_{i\in\mathcal{V}_{j'}}\exp(\beta_{0i})-\exp(\beta^*_{0j'})\Big|,
\end{align}
where we denote $\Delta \beta_{1ij'}:=\beta_{1i}-\beta^*_{1j'}$ and $\Delta \eta_{ij'}:=\eta_{i}-\eta^*_{j'}$. Above, $\mathcal{V}_{j'}\equiv\mathcal{V}_{j'}(G)$, for $j'\in[L]$, is a Voronoi cell associated with the mixing measure $G$ generated by the true component $\omega_{j}^{*} := (\beta^*_{1j'},\eta_{j'}^{*})$, which is defined as follows:
    \begin{align}
        \mathcal{V}_{j'} : = 
  \{i \in \{1, 2, \ldots, L'\}: \|\omega_{i} - \omega_{j'}^{*}\| \leq \|\omega_{i} - \omega_{\ell}^{*}\|, \ \forall \ell \neq j'
  \}, \label{eq_Voronoi_definition}
    \end{align}
    where we denote $\omega_{i} := (\beta_{1i},\eta_{i})$ as a component of $G$. Note that, the cardinality of each Voronoi cell $\mathcal{V}_{j'}$ indicates the number of components $\omega_i$ of $G$ approximating the true component $\omega^*_{j'}$ of $G_*$. Additionally, since $\mathcal{L}_{1}(G,G_*)=0$ if and only if $G\equiv G_*$, it follows that when $\mathcal{L}_{1}(G,G_*)$ becomes sufficiently small, the differences $\Delta\beta_{1ij'}$ and $\Delta\eta_{ij'}$ are also small. This observation indicates that, although $\mathcal{L}_{1}(G,G_*)$ is a proper metric as it is not symmetric, it is an appropriate loss function for measuring the discrepancy between the least square estimator $\widehat{G}_n$ and the true mixing measures $G_*$. 

\begin{theorem}
    \label{theorem:parameter_estimation}
    Assume that the expert function $h(x,\eta)$ and the activation $\sigma(\cdot)$ are algebraically independent, then we achieve the following lower bound for any $G\in\mathcal{G}_{L'}(\Theta)$:
    \begin{align*}
         \normf{g_{G}-g_{G_*}}\gtrsim\mathcal{L}_1(G,G_*),
    \end{align*}
    which together with Theorem~\ref{proposition:regression_estimation} indicates that $\mathcal{L}_1(\widehat{G}_n,G_*)=\mathcal{\widetilde{O}}_{P}(n^{-1/2})$.
\end{theorem}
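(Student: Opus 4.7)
The plan is to prove the pointwise comparison $\normf{g_G - g_{G_*}} \gtrsim \mathcal{L}_1(G, G_*)$ uniformly over $G \in \mathcal{G}_{L'}(\Theta)$, since combining it with Theorem~\ref{proposition:regression_estimation} immediately yields $\mathcal{L}_1(\widehat{G}_n, G_*) = \widetilde{\mathcal{O}}_P(n^{-1/2})$. Following the Voronoi-cell template developed for MoE identifiability, I would split the bound into a \emph{local} regime (for $G$ with $\mathcal{L}_1(G, G_*) \leq \varepsilon$ for some small $\varepsilon$) and a \emph{global} regime (for $\mathcal{L}_1(G, G_*) \geq \varepsilon$). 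The global part is a standard compactness/continuity argument on $\Theta$: since $\normf{g_G - g_{G_*}}$ vanishes only when $G \equiv G_*$, the ratio $\normf{g_G - g_{G_*}}/\mathcal{L}_1(G, G_*)$ is bounded away from zero outside any $\mathcal{L}_1$-neighbourhood of $G_*$. All the difficulty lies in the local bound.

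I would establish the local bound by contradiction. Suppose a sequence $\{G_n\} \subset \mathcal{G}_{L'}(\Theta)$ satisfies $\mathcal{L}_1(G_n, G_*) \to 0$ and $\normf{g_{G_n} - g_{G_*}}/\mathcal{L}_1(G_n, G_*) \to 0$. For each $n$, partition the atoms of $G_n$ according to the Voronoi cells $\mathcal{V}^n_{j'}$ generated by the true components. The contributions of the $N$ pre-trained experts and the shared portion of the softmax denominator cancel in $g_{G_n}(X) - g_{G_*}(X)$, so the remaining numerator is driven solely by the prefix experts. A Taylor expansion of the joint numerator--expert map
\begin{align*}
(\beta_0, \beta_1, \eta) \mapsto \exp\bigl(\beta_1^\top X + \alpha\,\sigma(\tau \beta_1^\top X) + \beta_0\bigr)\, h(X, \eta)
\end{align*}
around each true atom $(\beta^*_{0j'}, \beta^*_{1j'}, \eta^*_{j'})$---to order one on cells with $|\mathcal{V}^n_{j'}| = 1$ and to order two on cells with $|\mathcal{V}^n_{j'}| > 1$, which is precisely what dictates the linear-versus-quadratic split in the definition of $\mathcal{L}_1$---rewrites the difference, after clearing a uniformly bounded denominator, as a linear combination over multi-indices $(\nu, \gamma)$ of basis functions
\begin{align*}
X^\nu \Bigl[(1 + \sigma'(\beta_{1j'}^{*\top} X))^{|\nu|} + \mathbf{1}_{\{|\nu|=2\}}\sigma''(\beta_{1j'}^{*\top} X)\Bigr]\, \frac{\partial^{|\gamma|} h}{\partial \eta^{\gamma}}(X, \eta^*_{j'}),
\end{align*}
whose coefficients are built from $\Delta \beta_{1ij'}$, $\Delta \eta_{ij'}$ and $\exp(\beta_{0i}) - \exp(\beta^*_{0j'})$, plus a Taylor remainder of strictly lower order than $\mathcal{L}_1(G_n, G_*)$.

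The crux is then to rescale this identity by $\mathcal{L}_1(G_n, G_*)$: extracting a convergent subsequence of the normalized coefficients produces a non-trivial linear combination of the above basis functions that vanishes $\mu$-almost everywhere, by the assumed rate $\normf{g_{G_n} - g_{G_*}}/\mathcal{L}_1(G_n, G_*) \to 0$. The main obstacle---and the precise reason for introducing Definition~\ref{def:strong_identifiability}---is to rule this out: algebraic independence of $h$ and $\sigma$ states exactly that such a linear combination must be trivial, so every rescaled coefficient must be zero. Summing those vanishing relations back against the weights $\exp(\beta_{0i})$ reconstructs the rescaled loss $\mathcal{L}_1(G_n, G_*)/\mathcal{L}_1(G_n, G_*) = 1$, which is impossible. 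The remaining bookkeeping I would need to handle carefully concerns (i) separating the $|\mathcal{V}^n_{j'}| = 1$ regime (first-order Taylor producing the linear terms of $\mathcal{L}_1$) from the $|\mathcal{V}^n_{j'}| > 1$ regime (second-order Taylor producing the quadratic terms); (ii) controlling the Taylor remainder uniformly in $n$ via dominated convergence on the compact parameter space $\Theta$; and (iii) establishing a uniform lower bound on the denominator $T(X)$ so that its inversion preserves the rate.
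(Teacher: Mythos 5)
Your overall architecture (local/global split, contradiction sequence $G_n$, Voronoi-cell bookkeeping, first-order versus second-order Taylor expansion driven by $|\mathcal{V}_{j'}|$, algebraic independence killing the limiting coefficients, and a Fatou-type passage to pointwise statements) matches the paper's proof of Theorem~\ref{theorem:parameter_estimation} essentially step for step. Two places in your sketch, however, paper over real difficulties that the paper treats explicitly.

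First, the claim that ``the contributions of the $N$ pre-trained experts and the shared portion of the softmax denominator cancel in $g_{G_n}(X) - g_{G_*}(X)$, so the remaining numerator is driven solely by the prefix experts'' is not correct. The denominator $T(X)$ depends on the prefix gating parameters and therefore changes between $G_n$ and $G_*$; as a consequence the pre-trained terms do not drop out of the difference, and even after clearing by the \emph{true} denominator $T_*(X)$ you are left with a residual term of the form $[\text{change in prefix exponentials}]\cdot g_{G_n}(X)$. The paper's decomposition $Q_n = A_n - B_n + C_n$ in equation~\eqref{eq:general_Q_n} has $B_n$ precisely for this reason, and $B_n$ needs its own Taylor expansion. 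That produces a second family of basis functions, of the form $\frac{\partial^{|\ell|} E}{\partial \beta_1^{\ell}}(\Xbm;\boj)\, g_{G_*}(\Xbm)$, which must sit alongside the $\Xbm^{\nu}[\cdots]\frac{\partial^{|\gamma|}h}{\partial\eta^{\gamma}}$ family in the final linear-independence step. Your sketch drops that family entirely, so the appeal to Definition~\ref{def:strong_identifiability} would be operating on an incomplete combination.

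Second, ``extracting a convergent subsequence of the normalized coefficients'' is not automatic when you normalize by $\mathcal{L}_{1n}$ alone. For a Voronoi cell with $|\mathcal{V}_{j'}|>1$, the loss charges the quadratic quantity $\sum_{i}\exp(\beta_{0i})\|\Delta\beta_{1ij'}\|^2$, while the first-order Taylor coefficient involves $\sum_{i}\exp(\beta_{0i})(\Delta\beta_{1ij'})^{\alpha_1}$, which is typically of order $\sqrt{\mathcal{L}_{1n}}$; the ratio $S_{n,j,\alpha_1,\alpha_2}/\mathcal{L}_{1n}$ can therefore diverge, and there is then nothing to extract a subsequence from. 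The paper handles this by first proving (Step~2) that not \emph{all} of these ratios vanish, then renormalizing by $m_n\mathcal{L}_{1n}$ where $m_n$ is the maximum absolute ratio, which makes every rescaled coefficient bounded by one and keeps at least one of them away from zero along a subsequence. Without this extra normalization your ``extract a convergent subsequence / non-trivial limiting combination'' step has no justification. The rest of your plan --- controlling the Taylor remainders via compactness, bounding the denominator away from zero, and the global part via compactness plus identifiability --- is correct and mirrors the paper.
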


Proof of Theorem~\ref{theorem:parameter_estimation} is in Appendix~\ref{appendix:parameter_estimation}. A few comments on  Theorem~\ref{theorem:parameter_estimation} are in order: (i) From the bound $\mathcal{L}_1(\widehat{G}_n,G_*)=\mathcal{\widetilde{O}}_{P}(n^{-1/2})$, we deduce that the estimation rates for the over-specified parameters $\beta^*_{1j'},\eta^*_{1j'}$, where $j'\in[L]:|\mathcal{V}_{j'}|>1$, are all of order $\mathcal{O}_P(\sqrt[4]{\log(n)/n})$. Since the expert $h(\cdot,\eta)$ is twice differentiable over a bounded domain, it is also a Lipschitz function. Thus, denote $\widehat{G}_n:=\sum_{i=1}^{L_n}\exp(\widehat{\beta}_{0i})\delta_{(\widehat{\beta}^n_{1i},\widehat{\eta}^n_i)}$, we achieve that
\begin{align}
    \label{eq:expert_rate}
    \sup_{\Xbm} |h(\Xbm,\widehat{\eta}^n_i)-h(\Xbm,\eta^*_{j'})| \lesssim\|\widehat{\eta}^n_i-\eta^*_{j'}\|\lesssim \mathcal{O}_P(\sqrt[4]{\log(n)/n}).  
\end{align}
The above bound indicates that if the experts $h(\cdot,\eta^*_j)$ are fitted by at least two other experts, then their estimation rates are also of order $\mathcal{O}_P(\sqrt[4]{\log(n)/n})$; (ii) For exactly-specified parameters $\beta^*_{1j'},\eta^*_{j'}$, where $j'\in[L]:|\mathcal{V}_{j'}|=1$, the rates for estimating them are faster than those of their over-specified counterparts, standing at order $\mathcal{O}_P(\sqrt{\log(n)/n})$. By arguing similarly to equation~\eqref{eq:expert_rate}, the experts $h(\cdot,\eta^*_{j'})$ also enjoy the faster estimation rate of order $\mathcal{O}_P(\sqrt{\log(n)/n})$, which is parametric on the sample size $n$; (iii) It follows from the above rates that we only need a polynomial number of data (roughly $\epsilon^{-4}$ where $\epsilon$ is the desired approximation error) to estimate the parameters and experts of the non-linear residual gating prefix MoE. By contrast, when using the linear gating, as being demonstrated in Appendix~\ref{sec:stats_predix_moe}, it requires an exponential number of data. This highlights the statistical benefits of using the non-linear residual gating MoE model over the linear gating prefix MoE model.

\begin{table}
  \centering
  \caption{Overall performance comparison on Split CIFAR-100 and Split ImageNet-R. We present Final Average Accuracy (FA), Cumulative Average Accuracy (CA), and Average Forgetting Measure (FM) of all methods under different pre-trained models.}
  \label{tab:cifar_imagenet}
\scalebox{0.75}{
\begin{tabular}{llllllll}
\toprule
\multicolumn{1}{c}{\multirow{2}{*}{PTM}} & \multicolumn{1}{c}{\multirow{2}{*}{Method}} & \multicolumn{3}{c}{Split CIFAR-100} & \multicolumn{3}{c}{Split Imagenet-R} \\ \cmidrule(l){3-5} \cmidrule(r){6-8}
\multicolumn{1}{c}{}                     & \multicolumn{1}{c}{}                        & \multicolumn{1}{c}{\textbf{FA} ($\uparrow$)} & \multicolumn{1}{c}{\textbf{CA}($\uparrow$)} & \multicolumn{1}{c}{FM($\downarrow$)} & \multicolumn{1}{c}{\textbf{FA} ($\uparrow$)} & \multicolumn{1}{c}{\textbf{CA}($\uparrow$)} & \multicolumn{1}{c}{FM($\downarrow$)} \\ \midrule
\multirow{6}{*}{Sup-21K} 
& L2P        & $83.06 \pm 0.17$ & $88.27 \pm 0.71$ & $5.61 \pm 0.32$ & $67.53 \pm 0.44$ & $71.98 \pm 0.52$ & $5.84 \pm 0.38$ \\
&DualPrompt  & $87.30 \pm 0.27$ & $91.23 \pm 0.65$ & $3.87 \pm 0.43$ & $70.93 \pm 0.08$ & $75.67 \pm 0.52$ & $5.47 \pm 0.19$ \\
&S-Prompt    & $87.57 \pm 0.42$ & $91.38 \pm 0.69$ & $3.63 \pm 0.41$ & $69.88 \pm 0.51$ & $74.25 \pm 0.55$ & $4.73 \pm 0.47$ \\
&CODA-Prompt & $86.94 \pm 0.63$ & $91.57 \pm 0.75$ & $4.04 \pm 0.18$ & $70.03 \pm 0.47$ & $74.26 \pm 0.24$ & $5.17 \pm 0.22$ \\
&HiDe-Prompt & $92.61 \pm 0.28$ & $94.03 \pm 0.01$ & $1.50 \pm 0.28$ & $75.06 \pm 0.12$ & $76.60 \pm 0.01$ & $\textbf{4.09} \pm 0.13$ \\

& NoRGa (Ours) & $\textbf{94.48} \pm 0.13$ & $\textbf{95.83} \pm 0.37$ & $\textbf{1.44} \pm 0.27$ & $\textbf{75.40} \pm 0.39$ & $\textbf{79.52} \pm 0.07$ & $4.59 \pm 0.07$\\ 
\midrule
\multirow{6}{*}{iBOT-21K}                  
&L2P         & $79.13 \pm 1.25$ & $85.13 \pm 0.05$ & $7.50 \pm 1.21$ & $61.31 \pm 0.50$ & $68.81 \pm 0.52$ & $10.72 \pm 0.40$ \\
&DualPrompt  & $78.84 \pm 0.47$ & $86.16 \pm 0.02$ & $8.84 \pm 0.67$ & $58.69 \pm 0.61$ & $66.61 \pm 0.67$ & $11.75 \pm 0.92$ \\
&S-Prompt    & $79.14 \pm 0.65$ & $85.85 \pm 0.17$ & $8.23 \pm 1.15$ & $57.96 \pm 1.10$ & $66.42 \pm 0.71$ & $11.27 \pm 0.72$ \\
&CODA-Prompt & $80.83 \pm 0.27$ & $87.02 \pm 0.20$ & $7.50 \pm 0.25$ & $61.22 \pm 0.35$ & $66.76 \pm 0.37$ & $9.66 \pm 0.20$ \\
&HiDe-Prompt & $93.02 \pm 0.15$ & $94.56 \pm 0.05$ & $\textbf{1.26} \pm 0.13$ & $70.83 \pm 0.17$ & $73.23 \pm 0.08$ & $\textbf{6.77} \pm 0.23$ \\

& NoRGa (Ours) & $\textbf{94.76} \pm 0.15$ & $\textbf{95.86} \pm 0.31$ & $1.34 \pm 0.14$ & $\textbf{73.06} \pm 0.26$ & $\textbf{77.46} \pm 0.42$ & $6.88 \pm 0.49$ \\ 
\midrule
\multirow{6}{*}{iBOT-1K} 
& L2P        & $75.51 \pm 0.88$ & $82.53 \pm 1.10$ & $6.80 \pm 1.70$ & $59.43 \pm 0.28$ & $66.83 \pm 0.92$ & $11.33 \pm 1.25$ \\
&DualPrompt  & $76.21 \pm 1.00$ & $83.54 \pm 1.23$ & $9.89 \pm 1.81$ & $60.41 \pm 0.76$ & $66.87 \pm 0.41$ & $9.21 \pm 0.43$ \\
&S-Prompt    & $76.60 \pm 0.61$ & $82.89 \pm 0.89$ & $8.60 \pm 1.36$ & $59.56 \pm 0.60$ & $66.60 \pm 0.13$ & $8.83 \pm 0.81$ \\
&CODA-Prompt & $79.11 \pm 1.02$ & $86.21 \pm 0.49$ & $7.69 \pm 1.57$ & $66.56 \pm 0.68$ & $73.14 \pm 0.57$ & $7.22 \pm 0.38$ \\
&HiDe-Prompt & $93.48 \pm 0.11$ & $95.02 \pm 0.01$ & $1.63 \pm 0.10$ & $71.33 \pm 0.21$ & $73.62 \pm 0.13$ & $7.11 \pm 0.02$ \\

& NoRGa (Ours) & $\textbf{94.01} \pm 0.04$ & $\textbf{95.11} \pm 0.35$ & $\textbf{1.61} \pm 0.30$ & $\textbf{72.77} \pm 0.20$ & $\textbf{76.55} \pm 0.46$ & $\textbf{7.10} \pm 0.39$ \\ 
\midrule
\multirow{6}{*}{DINO-1K} 
& L2P        & $72.23 \pm 0.35$ & $79.71 \pm 1.26$ & $8.37 \pm 2.30$ & $57.21 \pm 0.69$ & $64.09 \pm 0.74$ & $7.47 \pm 0.96$ \\
&DualPrompt  & $73.95 \pm 0.49$ & $81.85 \pm 0.59$ & $9.32 \pm 1.42$ & $57.98 \pm 0.71$ & $65.39 \pm 0.27$ & $9.32 \pm 0.69$ \\
&S-Prompt    & $74.39 \pm 0.17$ & $81.60 \pm 0.74$ & $9.07 \pm 1.13$ & $57.55 \pm 0.72$ & $64.90 \pm 0.13$ & $8.73 \pm 0.56$ \\
&CODA-Prompt & $77.50 \pm 0.64$ & $84.81 \pm 0.30$ & $8.10 \pm 0.01$ & $63.15 \pm 0.39$ & $69.73 \pm 0.25$ & $6.86 \pm 0.11$ \\
&HiDe-Prompt & $92.51 \pm 0.11$ & $94.25 \pm 0.01$ & $1.67 \pm 0.20$ & $68.11 \pm 0.18$ & $71.70 \pm 0.01$ & $6.45 \pm 0.58$ \\

& NoRGa (Ours) & $\textbf{93.43} \pm 0.33$ & $\textbf{94.65} \pm 0.62$ & $\textbf{1.65} \pm 0.25$ & $\textbf{71.77} \pm 0.44$ & $\textbf{75.76} \pm 0.49$ & $\textbf{6.42} \pm 0.68$ \\ \midrule
\multirow{6}{*}{MoCo-1K} 
& L2P        & $77.24 \pm 0.69$ & $83.73 \pm 0.70$ & $5.57 \pm 0.75$ & $54.13 \pm 0.67$ & $62.09 \pm 0.76$ & $\textbf{4.88} \pm 0.42$ \\
&DualPrompt  & $77.56 \pm 0.63$ & $84.37 \pm 0.51$ & $6.54 \pm 0.50$ & $54.45 \pm 0.30$ & $62.92 \pm 0.41$ & $5.34 \pm 0.41$ \\
&S-Prompt    & $77.20 \pm 0.39$ & $84.47 \pm 0.37$ & $7.00 \pm 0.62$ & $53.94 \pm 0.32$ & $62.42 \pm 0.51$ & $5.16 \pm 0.48$ \\
&CODA-Prompt & $77.83 \pm 0.34$ & $84.97 \pm 0.23$ & $12.60 \pm 0.02$ & $55.75 \pm 0.26$ & $65.49 \pm 0.36$ & $10.46 \pm 0.04$ \\
&HiDe-Prompt & $91.57 \pm 0.20$ & $93.70 \pm 0.01$ & $\textbf{1.51} \pm 0.17$ & $63.77 \pm 0.49$ & $68.26 \pm 0.01$ & $9.37 \pm 0.71$ \\

& NoRGa (Ours) & $\textbf{93.52} \pm 0.06$ & $\textbf{94.94} \pm 0.29$ & $1.63 \pm 0.13$ & $\textbf{64.52} \pm 0.16$ & $\textbf{70.21} \pm 0.64$ & $9.06 \pm 0.19$ \\
\bottomrule
\end{tabular}}
\vspace{-0.5 em}
\end{table}

\vspace{-0.6 em}
\section{Experiments}
\vspace{-0.6 em}

\label{sec:exp}

\textbf{Datasets.} We evaluate various continual learning methods on widely used CIL benchmarks, including Split CIFAR-100 \cite{Krizhevsky09learningmultiple} and Split ImageNet-R \cite{Krizhevsky09learningmultiple}, consistent with prior work \cite{wang2023hierarchical}. We further explore the model's performance on fine-grained classification tasks with Split CUB-200 \cite{wah_branson_welinder_perona_belongie_2011} and large inter-task differences with 5-Datasets \cite{ebrahimi2020adversarial}. Please refer to Appendix~\ref{appendix:exp} for more details.

\textbf{Evaluation Metrics.} We utilize several established metrics described in \cite{wang2024comprehensive}. These include: final average accuracy (FA), which represents the average accuracy after the final task; cumulative average accuracy (CA), which refers to the historical average accuracy; and average forgetting measure (FM). We give more emphasis to FA and CA due to their comprehensiveness, as noted in \cite{smith2023codaprompt}.

\textbf{Baselines.} We compare our approach with several representative prompt-based approaches including L2P \cite{wang2022learning}, DualPrompt \cite{wang2022dualprompt}, CODA-Prompt \cite{smith2023codaprompt}, S-Prompt \cite{wang2023sprompts}, and HiDe-Prompt \cite{wang2023hierarchical}. Additionally, we evaluate against state-of-the-art pre-trained model-based continual learning methods, including ADAM \cite{zhou2024revisiting} and RanPAC \cite{mcdonnell2024ranpac}. We further extend our evaluation by applying HiDe-Prompt with parameter-efficient fine-tuning techniques like LoRA \cite{hu2021lora} and Adapters \cite{houlsby2019parameter}. In line with \cite{wang2023hierarchical}, we utilize the checkpoints of ViT that use supervised pre-training of Imagenet-21K (denoted as Sup-21K), and some self-supervised pre-training such as iBOT-21K, iBOT-1K \cite{zhou2022ibot}, DINO-1K \cite{caron2021emerging}, and MoCo-1K \cite{chen2021empirical}. For implementation details, see Appendix~\ref{appendix:exp}.

\begin{table}
  \centering
  \caption{Final average accuracy (FA) on Split CUB-200 and 5-Datasets.}
  \label{tab:cub}
\scalebox{0.85}{
\begin{tabular}{lcccc}
\toprule
\multicolumn{1}{c}{\multirow{2}{*}{Method}} & \multicolumn{2}{c}{Split CUB-200} & \multicolumn{2}{c}{5-Datasets} \\ \cmidrule(l){2-3} \cmidrule(r){4-5}
\multicolumn{1}{c}{} & \multicolumn{1}{c}{Sup-21K} & \multicolumn{1}{c}{iBOT-21K} & \multicolumn{1}{c}{Sup-21K} & \multicolumn{1}{c}{iBOT-21K} \\ \midrule
L2P         & 75.46 & 46.60 & 81.84 & 82.25 \\
DualPrompt  & 77.56 & 45.93 & 77.91 & 68.03 \\
S-Prompt    & 77.13 & 44.22 & 86.06 & 77.20  \\
CODA-Prompt & 74.34 & 47.79 & 64.18 & 51.65  \\
HiDe-Prompt & 86.56 & 78.23 & 93.83 & 94.88  \\

NoRGa (Ours) & \textbf{90.90} & \textbf{80.69} & \textbf{94.16} & \textbf{94.92}  \\
\bottomrule
\end{tabular}}
\vspace{-0.5 em}
\end{table}
\begin{table}
  \centering
  \caption{Ablation study of different activation functions, measured by final average accuracy (FA).}
  \label{tab:ablation_act}
\scalebox{0.85}{
\begin{tabular}{lcccc}
\toprule
\multicolumn{1}{c}{\multirow{2}{*}{Method}} & \multicolumn{2}{c}{Split CIFAR-100} & \multicolumn{2}{c}{Split CUB-200} \\ \cmidrule(l){2-3} \cmidrule(r){4-5}
\multicolumn{1}{c}{} & \multicolumn{1}{c}{Sup-21K} & \multicolumn{1}{c}{iBOT-21K} & \multicolumn{1}{c}{Sup-21K} & \multicolumn{1}{c}{iBOT-21K} \\ \midrule
HiDe-Prompt & 92.61 & 93.02 & 86.56 & 78.23  \\
NoRGa $\tanh$ & 94.36 & \textbf{94.76} & 90.87 & \textbf{80.69} \\
NoRGa $\sigmoid$ & \textbf{94.48} & 94.69 & \textbf{90.90} & 80.18 \\
NoRGa $\gelu$ & 94.05 & 94.63 & 90.74 & 80.54  \\
\bottomrule
\end{tabular}}
\vspace{-0.5 em}
\end{table}

\textbf{Main Results.} In Table \ref{tab:cifar_imagenet}, we evaluate several continual learning methods on Split CIFAR-100 and Split ImageNet-R using diverse pre-trained models. NoRGa achieves state-of-the-art FA and CA across all datasets and models, consistently outperforming HiDe-Prompt. On Sup-21K, NoRGa demonstrates impressive FA results on both CIFAR-100 and ImageNet-R. It also maintains the highest CA, with significant margins of 1.80\% and 2.92\% on CIFAR-100 and ImageNet-R, respectively, compared to HiDe-Prompt. These results highlight NoRGa's strong ability to retain knowledge and exhibit minimal forgetting, as evidenced by the low FM values on both datasets. NoRGa also surpasses HiDe-Prompt on self-supervised models, with FA improvements up to 1.95\% and 3.66\%. We further investigate two scenarios: fine-grained classification tasks and large inter-task differences through experiments on Split CUB-200 and 5-Datasets, respectively, as shown in Table~\ref{tab:cub}. NoRGa maintains its lead, achieving FA gaps of 4.34\% and 2.46\% on Split CUB-200, and the highest FA on 5-Datasets. While gains in some metrics may be modest, NoRGa consistently outperforms HiDe-Prompt in either FA or CA, underscoring its robustness. For example, on Split ImageNet-R with Sup-21K weights, the FA improvement is small (75.06\% vs. 75.40\%), but the CA gains are substantial (76.60\% vs. 79.52\%), demonstrating the method’s effectiveness and robustness.

\textbf{Ablation Study.} To assess the impact of non-linear activation functions on NoRGa's performance, we evaluated the model's behavior with different choices for the activation function $\sigma$, including $\tanh$, $\sigmoid$, and $\gelu$ in Table~\ref{tab:ablation_act}. The results show that NoRGa achieves state-of-the-art performance on both Split CIFAR-100 and Split CUB-200 datasets with all three activation functions. These findings suggest that NoRGa exhibits robustness to the choice of non-linear activation within a reasonable range. While all functions perform well, the $\tanh$ activation function demonstrates generally strong performance across scenarios. Further results are provided in the Appendix.


\vspace{-0.5 em}
\section{Conclusion}
\vspace{-0.5 em}
This paper presents an initial exploration of self-attention and prefix-tuning through the lens of mixture of experts. We find that applying prefix tuning can be viewed as introducing new prefix experts to adapt the pre-trained model. However, limitations in sample efficiency exist. We address this by proposing NoRGa, a novel gating mechanism to enhance continual learning performance. Our results demonstrate NoRGa's effectiveness both theoretically and empirically. While the current implementation of the expert network prioritizes simplicity, future research directions could involve investigating more intricate architectures. Furthermore, the choice of activation functions in our work requires fine-tuning, which opens avenues for future research on adaptively learning activation.

\newpage
\bibliography{references}

\begin{thebibliography}{10}

\bibitem{aljundi2017expert}
R.~Aljundi, P.~Chakravarty, and T.~Tuytelaars.
\newblock Expert gate: Lifelong learning with a network of experts.
\newblock In {\em Proceedings of the IEEE conference on computer vision and pattern recognition}, pages 3366--3375, 2017.

\bibitem{belouadah2021comprehensive}
E.~Belouadah, A.~Popescu, and I.~Kanellos.
\newblock A comprehensive study of class incremental learning algorithms for visual tasks.
\newblock {\em Neural Networks}, 135:38--54, 2021.

\bibitem{bulatov2011notmnist}
Y.~Bulatov.
\newblock Notmnist dataset.
\newblock {\em Google (Books/OCR), Tech. Rep.[Online]. Available: http://yaroslavvb. blogspot. it/2011/09/notmnist-dataset. html}, 2, 2011.

\bibitem{caron2021emerging}
M.~Caron, H.~Touvron, I.~Misra, H.~J\'egou, J.~Mairal, P.~Bojanowski, and A.~Joulin.
\newblock Emerging properties in self-supervised vision transformers.
\newblock In {\em Proceedings of the International Conference on Computer Vision (ICCV)}, 2021.

\bibitem{Cha_2021_ICCV}
H.~Cha, J.~Lee, and J.~Shin.
\newblock Co2l: Contrastive continual learning.
\newblock In {\em Proceedings of the IEEE/CVF International Conference on Computer Vision (ICCV)}, pages 9516--9525, October 2021.

\bibitem{chaudhry2019tiny}
A.~Chaudhry, M.~Rohrbach, M.~Elhoseiny, T.~Ajanthan, P.~K. Dokania, P.~H.~S. Torr, and M.~Ranzato.
\newblock On tiny episodic memories in continual learning, 2019.

\bibitem{chen2021empirical}
X.~Chen, S.~Xie, and K.~He.
\newblock An empirical study of training self-supervised vision transformers, 2021.

\bibitem{dosovitskiy2020vit}
A.~Dosovitskiy, L.~Beyer, A.~Kolesnikov, D.~Weissenborn, X.~Zhai, T.~Unterthiner, M.~Dehghani, M.~Minderer, G.~Heigold, S.~Gelly, J.~Uszkoreit, and N.~Houlsby.
\newblock An image is worth 16x16 words: Transformers for image recognition at scale.
\newblock {\em ICLR}, 2021.

\bibitem{ebrahimi2020adversarial}
S.~Ebrahimi, F.~Meier, R.~Calandra, T.~Darrell, and M.~Rohrbach.
\newblock Adversarial continual learning.
\newblock {\em arXiv preprint arXiv:2003.09553}, 2020.

\bibitem{Eigen_learning_2014}
D.~Eigen, M.~Ranzato, and I.~Sutskever.
\newblock Learning factored representations in a deep mixture of experts.
\newblock In {\em ICLR Workshops}, 2014.

\bibitem{fan2022m3vit}
Z.~Fan, R.~Sarkar, Z.~Jiang, T.~Chen, K.~Zou, Y.~Cheng, C.~Hao, Z.~Wang, et~al.
\newblock M$^3$vit: Mixture-of-experts vision transformer for efficient multi-task learning with model-accelerator co-design.
\newblock {\em Advances in Neural Information Processing Systems}, 35:28441--28457, 2022.

\bibitem{hai2024continual}
N.~L. Hai, T.~Nguyen, L.~N. Van, T.~H. Nguyen, and K.~Than.
\newblock Continual variational dropout: a view of auxiliary local variables in continual learning.
\newblock {\em Machine Learning}, 113(1):281--323, 2024.

\bibitem{hazimeh2021dselect}
H.~Hazimeh, Z.~Zhao, A.~Chowdhery, M.~Sathiamoorthy, Y.~Chen, R.~Mazumder, L.~Hong, and E.~Chi.
\newblock Dselect-k: Differentiable selection in the mixture of experts with applications to multi-task learning.
\newblock {\em Advances in Neural Information Processing Systems}, 34:29335--29347, 2021.

\bibitem{he2015deep}
K.~He, X.~Zhang, S.~Ren, and J.~Sun.
\newblock Deep residual learning for image recognition, 2015.

\bibitem{ho2022convergence}
N.~Ho, C.-Y. Yang, and M.~I. Jordan.
\newblock Convergence rates for gaussian mixtures of experts.
\newblock {\em Journal of Machine Learning Research}, 23(323):1--81, 2022.

\bibitem{houlsby2019parameter}
N.~Houlsby, A.~Giurgiu, S.~Jastrzebski, B.~Morrone, Q.~De~Laroussilhe, A.~Gesmundo, M.~Attariyan, and S.~Gelly.
\newblock Parameter-efficient transfer learning for nlp.
\newblock In {\em International conference on machine learning}, pages 2790--2799. PMLR, 2019.

\bibitem{hu2021lora}
E.~J. Hu, Y.~Shen, P.~Wallis, Z.~Allen-Zhu, Y.~Li, S.~Wang, L.~Wang, and W.~Chen.
\newblock Lora: Low-rank adaptation of large language models.
\newblock {\em arXiv preprint arXiv:2106.09685}, 2021.

\bibitem{huang2024harder}
Q.~Huang, Z.~An, N.~Zhuang, M.~Tao, C.~Zhang, Y.~Jin, K.~Xu, L.~Chen, S.~Huang, and Y.~Feng.
\newblock Harder tasks need more experts: Dynamic routing in moe models.
\newblock {\em arXiv preprint arXiv:2403.07652}, 2024.

\bibitem{Jacob_Jordan-1991}
R.~A. Jacobs, M.~I. Jordan, S.~J. Nowlan, and G.~E. Hinton.
\newblock Adaptive mixtures of local experts.
\newblock {\em Neural Computation}, 3, 1991.

\bibitem{janson2022simple}
P.~Janson, W.~Zhang, R.~Aljundi, and M.~Elhoseiny.
\newblock A simple baseline that questions the use of pretrained-models in continual learning.
\newblock {\em arXiv preprint arXiv:2210.04428}, 2022.

\bibitem{jordan1994hierarchical}
M.~I. Jordan and R.~A. Jacobs.
\newblock Hierarchical mixtures of experts and the em algorithm.
\newblock {\em Neural computation}, 6(2):181--214, 1994.

\bibitem{NEURIPS2022_20f44da8}
G.~Kim, C.~Xiao, T.~Konishi, Z.~Ke, and B.~Liu.
\newblock A theoretical study on solving continual learning.
\newblock In S.~Koyejo, S.~Mohamed, A.~Agarwal, D.~Belgrave, K.~Cho, and A.~Oh, editors, {\em Advances in Neural Information Processing Systems}, volume~35, pages 5065--5079. Curran Associates, Inc., 2022.

\bibitem{Krizhevsky09learningmultiple}
A.~Krizhevsky, G.~Hinton, et~al.
\newblock Learning multiple layers of features from tiny images.
\newblock Technical report, Citeseer, 2009.

\bibitem{726791}
Y.~Lecun, L.~Bottou, Y.~Bengio, and P.~Haffner.
\newblock Gradient-based learning applied to document recognition.
\newblock {\em Proceedings of the IEEE}, 86(11):2278--2324, 1998.

\bibitem{lester-etal-2021-power}
B.~Lester, R.~Al-Rfou, and N.~Constant.
\newblock The power of scale for parameter-efficient prompt tuning.
\newblock In {\em Proceedings of the 2021 Conference on Empirical Methods in Natural Language Processing}, pages 3045--3059. Association for Computational Linguistics, Nov. 2021.

\bibitem{li2021prefixtuning}
X.~L. Li and P.~Liang.
\newblock Prefix-tuning: Optimizing continuous prompts for generation, 2021.

\bibitem{manole22refined}
T.~Manole and N.~Ho.
\newblock Refined convergence rates for maximum likelihood estimation under finite mixture models.
\newblock In {\em Proceedings of the 39th International Conference on Machine Learning}, volume 162 of {\em Proceedings of Machine Learning Research}, pages 14979--15006. PMLR, 17--23 Jul 2022.

\bibitem{MCCLOSKEY1989109}
M.~McCloskey and N.~J. Cohen.
\newblock Catastrophic interference in connectionist networks: The sequential learning problem.
\newblock In {\em Psychology of learning and motivation}, volume~24, pages 109--165. Elsevier, 1989.

\bibitem{mcdonnell2024ranpac}
M.~D. McDonnell, D.~Gong, A.~Parvaneh, E.~Abbasnejad, and A.~van~den Hengel.
\newblock Ranpac: Random projections and pre-trained models for continual learning.
\newblock {\em Advances in Neural Information Processing Systems}, 36, 2024.

\bibitem{mehta2023empirical}
S.~V. Mehta, D.~Patil, S.~Chandar, and E.~Strubell.
\newblock An empirical investigation of the role of pre-training in lifelong learning.
\newblock {\em Journal of Machine Learning Research}, 24(214):1--50, 2023.

\bibitem{37648}
Y.~Netzer, T.~Wang, A.~Coates, A.~Bissacco, B.~Wu, and A.~Y. Ng.
\newblock Reading digits in natural images with unsupervised feature learning.
\newblock In {\em NIPS Workshop on Deep Learning and Unsupervised Feature Learning 2011}, 2011.

\bibitem{nguyen2019understanding}
C.~V. Nguyen, A.~Achille, M.~Lam, T.~Hassner, V.~Mahadevan, and S.~Soatto.
\newblock Toward understanding catastrophic forgetting in continual learning, 2019.

\bibitem{nguyen2024temperature}
H.~Nguyen, P.~Akbarian, and N.~Ho.
\newblock Is temperature sample efficient for softmax {G}aussian mixture of experts?
\newblock In {\em Proceedings of the ICML}, 2024.

\bibitem{nguyen2024statistical}
H.~Nguyen, P.~Akbarian, F.~Yan, and N.~Ho.
\newblock Statistical perspective of top-k sparse softmax gating mixture of experts.
\newblock In {\em International Conference on Learning Representations}, 2024.

\bibitem{nguyen2024squares}
H.~Nguyen, N.~Ho, and A.~Rinaldo.
\newblock On least square estimation in softmax gating mixture of experts.
\newblock In {\em Proceedings of the ICML}, 2024.

\bibitem{nguyen2023demystifying}
H.~Nguyen, T.~Nguyen, and N.~Ho.
\newblock Demystifying softmax gating function in {G}aussian mixture of experts.
\newblock In {\em Advances in Neural Information Processing Systems}, 2023.

\bibitem{panos2023first}
A.~Panos, Y.~Kobe, D.~O. Reino, R.~Aljundi, and R.~E. Turner.
\newblock First session adaptation: A strong replay-free baseline for class-incremental learning.
\newblock In {\em Proceedings of the IEEE/CVF International Conference on Computer Vision}, pages 18820--18830, 2023.

\bibitem{phan2022reducing}
H.~Phan, A.~P. Tuan, S.~Nguyen, N.~V. Linh, and K.~Than.
\newblock Reducing catastrophic forgetting in neural networks via gaussian mixture approximation.
\newblock In {\em Pacific-Asia Conference on Knowledge Discovery and Data Mining}, pages 106--117. Springer, 2022.

\bibitem{rebuffi2017icarl}
S.-A. Rebuffi, A.~Kolesnikov, G.~Sperl, and C.~H. Lampert.
\newblock icarl: Incremental classifier and representation learning.
\newblock In {\em Proceedings of the IEEE Conference on Computer Vision and Pattern Recognition (CVPR)}, July 2017.

\bibitem{ridnik2021imagenet21k}
T.~Ridnik, E.~Ben-Baruch, A.~Noy, and L.~Zelnik-Manor.
\newblock Imagenet-21k pretraining for the masses, 2021.

\bibitem{Quoc-conf-2017}
N.~Shazeer, A.~Mirhoseini, K.~Maziarz, A.~Davis, Q.~Le, G.~Hinton, and J.~Dean.
\newblock Outrageously large neural networks: The sparsely-gated mixture-of-experts layer.
\newblock In {\em International Conference on Learning Representations (ICLR)}, 2017.

\bibitem{smith2023codaprompt}
J.~S. Smith, L.~Karlinsky, V.~Gutta, P.~Cascante-Bonilla, D.~Kim, A.~Arbelle, R.~Panda, R.~Feris, and Z.~Kira.
\newblock Coda-prompt: Continual decomposed attention-based prompting for rehearsal-free continual learning.
\newblock In {\em Proceedings of the IEEE/CVF Conference on Computer Vision and Pattern Recognition (CVPR)}, pages 11909--11919, June 2023.

\bibitem{tran2022continual}
Q.~Tran, H.~Phan, K.~Than, D.~Phung, and T.~Le.
\newblock Continual learning with optimal transport based mixture model.
\newblock {\em arXiv preprint arXiv:2211.16780}, 2022.

\bibitem{tran2023koppa}
Q.~Tran, L.~Tran, K.~Than, T.~Tran, D.~Phung, and T.~Le.
\newblock Koppa: Improving prompt-based continual learning with key-query orthogonal projection and prototype-based one-versus-all.
\newblock {\em arXiv preprint arXiv:2311.15414}, 2023.

\bibitem{vandeGeer-00}
S.~van~de Geer.
\newblock {\em Empirical processes in M-estimation}.
\newblock Cambridge University Press, 2000.

\bibitem{vandeven2022three}
G.~M. van~de Ven, T.~Tuytelaars, and A.~S. Tolias.
\newblock Three types of incremental learning.
\newblock {\em Nature Machine Intelligence}, 4:1185--1197, 2022.

\bibitem{vaswani2017attention}
A.~Vaswani, N.~Shazeer, N.~Parmar, J.~Uszkoreit, L.~Jones, A.~N. Gomez, L.~u. Kaiser, and I.~Polosukhin.
\newblock Attention is all you need.
\newblock In {\em Advances in Neural Information Processing Systems}, volume~30. Curran Associates, Inc., 2017.

\bibitem{wah_branson_welinder_perona_belongie_2011}
C.~Wah, S.~Branson, P.~Welinder, P.~Perona, and S.~Belongie.
\newblock {\em The Caltech-UCSD Birds-200-2011 Dataset}.
\newblock California Institute of Technology, 2011.

\bibitem{wang2023hierarchical}
L.~Wang, J.~Xie, X.~Zhang, M.~Huang, H.~Su, and J.~Zhu.
\newblock Hierarchical decomposition of prompt-based continual learning: Rethinking obscured sub-optimality.
\newblock {\em Advances in Neural Information Processing Systems}, 2023.

\bibitem{wang2024comprehensive}
L.~Wang, X.~Zhang, H.~Su, and J.~Zhu.
\newblock A comprehensive survey of continual learning: Theory, method and application, 2024.

\bibitem{wang2022memory}
L.~Wang, X.~Zhang, K.~Yang, L.~Yu, C.~Li, H.~Lanqing, S.~Zhang, Z.~Li, Y.~Zhong, and J.~Zhu.
\newblock Memory replay with data compression for continual learning.
\newblock In {\em International Conference on Learning Representations}, 2021.

\bibitem{wang2023sprompts}
Y.~Wang, Z.~Huang, and X.~Hong.
\newblock S-prompts learning with pre-trained transformers: An occam's razor for domain incremental learning.
\newblock In {\em Conference on Neural Information Processing Systems (NeurIPS)}, 2022.

\bibitem{wang2022dualprompt}
Z.~Wang, Z.~Zhang, S.~Ebrahimi, R.~Sun, H.~Zhang, C.-Y. Lee, X.~Ren, G.~Su, V.~Perot, J.~Dy, et~al.
\newblock Dualprompt: Complementary prompting for rehearsal-free continual learning.
\newblock {\em European Conference on Computer Vision}, 2022.

\bibitem{wang2022learning}
Z.~Wang, Z.~Zhang, C.-Y. Lee, H.~Zhang, R.~Sun, X.~Ren, G.~Su, V.~Perot, J.~Dy, and T.~Pfister.
\newblock Learning to prompt for continual learning.
\newblock In {\em Proceedings of the IEEE/CVF Conference on Computer Vision and Pattern Recognition}, pages 139--149, 2022.

\bibitem{xiao2017fashionmnist}
H.~Xiao, K.~Rasul, and R.~Vollgraf.
\newblock Fashion-mnist: a novel image dataset for benchmarking machine learning algorithms, 2017.

\bibitem{xin2024parameterefficient}
Y.~Xin, S.~Luo, H.~Zhou, J.~Du, X.~Liu, Y.~Fan, Q.~Li, and Y.~Du.
\newblock Parameter-efficient fine-tuning for pre-trained vision models: A survey, 2024.

\bibitem{yu97lecam}
B.~Yu.
\newblock Assouad, {F}ano, and {L}e {C}am.
\newblock {\em Festschrift for Lucien Le Cam}, pages 423--435, 1997.

\bibitem{yu2024boosting}
J.~Yu, Y.~Zhuge, L.~Zhang, P.~Hu, D.~Wang, H.~Lu, and Y.~He.
\newblock Boosting continual learning of vision-language models via mixture-of-experts adapters.
\newblock In {\em Proceedings of the IEEE/CVF Conference on Computer Vision and Pattern Recognition}, pages 23219--23230, 2024.

\bibitem{zhang2023slca}
G.~Zhang, L.~Wang, G.~Kang, L.~Chen, and Y.~Wei.
\newblock Slca: Slow learner with classifier alignment for continual learning on a pre-trained model.
\newblock In {\em Proceedings of the IEEE/CVF International Conference on Computer Vision}, 2023.

\bibitem{zhou2024revisiting}
D.-W. Zhou, Z.-W. Cai, H.-J. Ye, D.-C. Zhan, and Z.~Liu.
\newblock Revisiting class-incremental learning with pre-trained models: Generalizability and adaptivity are all you need.
\newblock {\em International Journal of Computer Vision}, pages 1--21, 2024.

\bibitem{zhou2024continual}
D.-W. Zhou, H.-L. Sun, J.~Ning, H.-J. Ye, and D.-C. Zhan.
\newblock Continual learning with pre-trained models: A survey, 2024.

\bibitem{zhou2022ibot}
J.~Zhou, C.~Wei, H.~Wang, W.~Shen, C.~Xie, A.~Yuille, and T.~Kong.
\newblock ibot: Image bert pre-training with online tokenizer.
\newblock {\em International Conference on Learning Representations (ICLR)}, 2022.

\end{thebibliography}
\bibliographystyle{abbrv}


\newpage
\appendix

\newpage
\centering
\textbf{\Large{Supplement to
``Mixture of Experts Meets Prompt-Based Continual Learning''}}

\justifying
\setlength{\parindent}{0pt}

In this supplementary material, we first analyze the statistical suboptimality of the Linear Gating Prefix MoE Model~\eqref{eq:prompt_moe} in Appendix~\ref{sec:stats_predix_moe}. Appendix~\ref{appendix:proof} provides proofs for the theoretical results presented in Section~\ref{method:theory}. In Appendix~\ref{appendix:moe}, we discuss related works on mixture of experts. Appendix~\ref{appendix:hide} outlines the training algorithm for HiDe-Prompt while Appendix~\ref{appendix:exp} presents the experimental setup and details. Further, Appendix~\ref{appendix:til} presents further experiments on the task-incremental learning setting to empirically demonstrate the benefits of using our proposed Non-linear Residual Gating Prefix MoE~\eqref{eq:attention_Norga} over the Linear Gating Prefix MoE Model. Appendix~\ref{appendix:peft} and Appendix~\ref{appendix:ptm} compare NoRGa with other parameter-efficient fine-tuning techniques and pre-trained model-based methods. In Appendix~\ref{appendix:efficiency}, we present the efficiency tests, while Appendix~\ref{appendix:hyper} explores the impact of learnable $\alpha$ and $\tau$. Finally, Appendix~\ref{appendix:times} compares the training times between NoRGa and HiDe-Prompt.

\vspace{-0.3 em}
\section{Statistical Suboptimality of Linear Gating Prefix MoE Model}
\label{sec:stats_predix_moe}
\vspace{-0.3 em}


In this appendix, we demonstrate that estimating parameters and experts in the linear gating prefix MoE model~\eqref{eq:prompt_moe} can be statistically inefficient in terms of the number of data. To simplify our findings, we particularly focus on the first head, namely, $l = 1$ in equation~\eqref{eq:prompt_moe}, and the first row of this head, namely, $i = 1$ in equation~\eqref{eq:prompt_moe}. Then, we proceed to provide a theoretical justification of our claim for the suboptimality of the linear gating prefix MoE by viewing this row as an output of the regression setting. In particular, we assume that $(\Xbm_1,Y_1), (\Xbm_2,Y_2),\ldots,(\Xbm_n,Y_n)\in\mathbb{R}^{Nd} \times\mathbb{R}$ is an i.i.d. sample generated from the following model:
\begin{align}
    Y_i=f_{G_*}(\Xbm_i)+\varepsilon_i, \quad i=1,2,\ldots,n, \label{eq:regression_model_linear_prefix}
\end{align}
where $\varepsilon_1,\ldots,\varepsilon_n$ are independent Gaussian noise variables such that $\bbE[{\varepsilon_{i}}|\Xbm_i] = 0$ and $\var(\varepsilon_{i}|\Xbm_i) = \nu^2$ for all $1 \leq i \leq n$. Additionally, we assume that $\Xbm_{1}, \Xbm_{2}, \ldots, \Xbm_{n}$ are i.i.d. samples from some probability distribution $\mu$. Motivated by linear gating prefix MoE model~\eqref{eq:prompt_moe}, the regression function $f_{G_{*}}(\cdot)$ in equation~\eqref{eq:regression_model_linear_prefix} admits the form of the linear gating prefix MoE model with pre-trained $N$ experts and $L$ unknown experts, namely
\begin{align}
    f_{G_{*}}(\Xbm) :&= \sum_{j=1}^{N} \frac{\exp(\Xbm^{\top} B^0_j\Xbm+c^0_j)}{\sum_{k = 1}^{N}\exp(\Xbm^{\top}B^0_{k}\Xbm+c^0_{k})+\sum_{k'=1}^{L}\exp((\beta^*_{1k'})^{\top}\Xbm +\beta^*_{0k'})}\cdot h(\Xbm,\eta^0_j)\nonumber\\
    &+\sum_{j'=1}^{L} \frac{\exp((\beta^*_{1j'})^{\top}\Xbm+\beta^*_{0j'})}{\sum_{k = 1}^{N}\exp(\Xbm^{\top}B^0_{k}\Xbm+c^0_{k})+\sum_{k'=1}^{L}\exp((\beta^*_{1k'})^{\top}\Xbm+ \beta^*_{0k'})}\cdot h(\Xbm,\eta^*_{j'}),
\end{align}
where $G_{*} := \sum_{j' = 1}^{L} \exp(\beta_{0j'}^{*}) \delta_{(\beta_{1j'}^{*},\eta^*_{j'})}$ denotes a \emph{mixing measure,} i.e., a weighted sum of Dirac measures $\delta$, associated with unknown parameters $(\beta^*_{1j'},\beta^*_{0j'},\eta^*_{j'})_{j'=1}^{L}$ in $\mathbb{R}^{Nd} \times\mathbb{R}\times\mathbb{R}^q$. Here, the matrix $B_{j}^{0}$ plays the role of the matrix $\frac{E_{1}^{\top} W_1^Q  {W_1^K}^\top E_{j}}{\sqrt{d_{v}}}$in the score function $s_{1, j}(\Xbm)$. Furthermore, the vector $\beta^*_{1j'}$ corresponds to the vector $\frac{E_{i}^{\top} W_l^Q  {W_l^K}^\top \prompt^K_{j'}}{\sqrt{\dv}}$ in the score function $s_{1, N+j'}(\Xbm)$. Furthermore, the experts $h(\Xbm,\eta^0_{j})$ correspond to the role of $f_{j}(\Xbm)$ and $h(\Xbm,\eta^*_{j'})$ correspond to the role of $f_{N + j'}(\Xbm)$. In our formulation, we consider general parametric forms of the experts $h(\Xbm,\eta^0_{j})$ and $h(\Xbm,\eta^*_{j'})$, i.e., we do not only constrain these expert functions to be the forms of the simple experts in the linear gating prefix MoE model.

Similar to the linear gating prefix MoE model~\eqref{eq:prompt_moe}, the matrices $B_{j}^{0}$, the biases $c_{j}^{0}$, and the expert parameters $\eta_{j}^{0}$ are known. Our aim is to estimate the unknown gating parameters $\beta_{1j'}^{*}, \beta_{0j'}^{*}$, and $\eta_{j'}^{*}$ that correspond to the prompts. 

\textbf{Least squares estimation:} We will use the least squares method \cite{vandeGeer-00} to estimate the unknown parameters $(\beta^*_{0j'},\beta^*_{1j'},\eta^*_{j'})_{j'=1}^{L}$ or, equivalently, the ground-truth mixing measure $G^*$. In particular, we take into account the estimator
\begin{align}
    \label{eq:linear_least_squared_estimator}
    \widetilde{G}_n:=\argmin_{G\in\mathcal{G}_{L'}(\Theta)}\sum_{i=1}^{n}\Big(Y_i-f_{G}(\Xbm_i)\Big)^2,
\end{align}
where we denote $\mathcal{G}_{L'}(\Theta):=\{G=\sum_{i=1}^{\ell}\exp(\beta_{0i})\delta_{(\beta_{1i},\eta_{i})}:1\leq \ell\leq L', \  (\beta_{0i},\beta_{1i},\eta_{i})\in\Theta\}$ as the set of all mixing measures with at most $L'$ atoms. In practice, since the true number of true experts $L$ is typically unknown, we assume that the number of fitted experts $L'$ is sufficiently large, i.e. $L'>L$.

Let us recall that our main objective in this appendix is to show that using the linear gating in the prefix MoE model is not sample efficient. To illustrate that point, we consider a simple scenario when the expert function takes the form $h(\Xbm,(a,b))=(a^{\top}\Xbm+b)^p$, for some $p\in\mathbb{N}$. Additionally, we also design a new Voronoi loss function as below to facilitate our arguments. 
\begin{align}
    \mathcal{L}_{2,r}(G,G_*):=\sum_{j=1}^{L}\Big|\sum_{i\in\mathcal{V}_j}\exp(\beta_{0i})-\exp(\beta^*_{0j})\Big|+\sum_{j=1}^{L}\sum_{i\in\mathcal{V}_j}\exp(\beta_{0i})\Big[\|\Delta\beta_{1ij}\|^r+\|\Delta a_{ij}\|^r+|\Delta b_{ij}|^r\Big],
\end{align}
where we denote $\Delta \beta_{1ij'}:=\beta_{1i}-\beta^*_{1j'}$ and $\Delta \eta_{ij'}:=\eta_{i}-\eta^*_{j'}$. 

Now, we are ready to state the result of parameter estimation under the linear gating prefix MoE model in the following theorem:
\begin{theorem}
    \label{prop:linear_gate_polynomial_expert}
    Assume that the experts take the form $h(\Xbm,(a,b))=(a^{\top}\Xbm+b)^p$, for some $p\in\mathbb{N}$, then we achieve the following minimax lower bound of estimating $G_*$:
    \begin{align*}
        \inf_{\overline{G}_n\in\mathcal{G}_{L'}(\Theta)}\sup_{G\in\mathcal{G}_{L'}(\Theta)\setminus\mathcal{G}_{L-1}(\Theta)}\bbE_{f_{G}}[\mathcal{L}_{2,r}(\overline{G}_n,G)]\gtrsim n^{-1/2},
    \end{align*}
    for any $r\geq 1$, where $\bbE_{f_{G}}$ indicates the expectation taken w.r.t the product measure with $f^n_{G}$.
\end{theorem}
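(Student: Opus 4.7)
The plan is to prove the lower bound via Le Cam's two-point method, exploiting a structural degeneracy that arises precisely because the gate is linear while the expert is polynomial. The key algebraic observation is that, writing $F(\Xbm;\beta_1,a,b):=\exp(\beta_1^{\top}\Xbm)(a^{\top}\Xbm+b)^{p}$, one has the identity $\partial F/\partial a = p^{-1}\,\partial^{2}F/(\partial\beta_{1}\partial b)$ together with an entire chain of analogous relations among higher-order mixed derivatives. These identities produce linear dependences among the basis of partial derivatives that appear in the Taylor expansion of $f_{G}-f_{G_{*}}$; in particular, the analogue of the algebraic independence property from Definition~\ref{def:strong_identifiability} fails at every order for this pair of gate and expert.

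Using this, I would fix a true atom $(\beta_{11}^{*},\beta_{01}^{*},a_{1}^{*},b_{1}^{*})$ of $G_{*}$ and construct a sequence $G_{n}\in\mathcal{G}_{L'}(\Theta)\setminus\mathcal{G}_{L-1}(\Theta)$ by ``splitting'' this atom into two nearby atoms whose locations are perturbed by an amount $\epsilon_{n}$ along directions, and with new weights, that together form a non-trivial solution of the homogeneous linear system enforcing cancellation of every Taylor coefficient up to a chosen order $M$. Because the algebraic identities above make this system rank-deficient at every order, such solutions exist for arbitrarily large $M$. A Taylor expansion of $f_{G_{n}}-f_{G_{*}}$ then yields
\[\|f_{G_{n}}-f_{G_{*}}\|_{L_{2}(\mu)}\lesssim \epsilon_{n}^{M},\qquad \mathcal{L}_{2,r}(G_{n},G_{*})\asymp \epsilon_{n}^{r},\]
the second estimate being immediate from the construction since the two new atoms sit at distance of order $\epsilon_{n}$ from the true one while their weights remain bounded away from zero.

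To turn this pair of bounds into a minimax statement, I would apply Le Cam's lemma in the Gaussian regression model~\eqref{eq:regression_model_linear_prefix}. The Kullback--Leibler divergence between the product laws under $G_{*}$ and $G_{n}$ satisfies $\mathrm{KL}(P_{G_{*}}^{\otimes n}\,\|\,P_{G_{n}}^{\otimes n})\leq \tfrac{n}{2\nu^{2}}\,\|f_{G_{n}}-f_{G_{*}}\|_{L_{2}(\mu)}^{2}\lesssim n\,\epsilon_{n}^{2M}$, so setting $\epsilon_{n}\asymp n^{-1/(2M)}$ keeps this divergence of order one and the induced total variation bounded away from one. Le Cam's inequality then yields
\[\inf_{\overline{G}_{n}}\sup_{G\in\mathcal{G}_{L'}(\Theta)\setminus\mathcal{G}_{L-1}(\Theta)}\mathbb{E}_{f_{G}}[\mathcal{L}_{2,r}(\overline{G}_{n},G)]\gtrsim \mathcal{L}_{2,r}(G_{n},G_{*})\asymp n^{-r/(2M)}.\]
Taking $M\geq r$, which is permitted because the algebraic identities furnish arbitrarily many independent cancellations, produces the desired bound $\gtrsim n^{-1/2}$.

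The main obstacle I expect is the constructive part of the second step: explicitly selecting the splitting directions and the two fitted weights $\exp(\beta_{0i})$ so that simultaneously (i) every Taylor coefficient of $f_{G_{n}}-f_{G_{*}}$ up to order $M$ vanishes after invoking the algebraic identities on $F$, (ii) the perturbed atoms remain inside $\Theta$ with weights bounded away from zero, and (iii) $G_{n}$ genuinely has more than $L-1$ atoms, so that it is an admissible competitor in the supremum. The solvability of the linear cancellation system is guaranteed by the failure of algebraic independence, but establishing the sharp $\epsilon_{n}^{M}$ remainder bound with a single construction valid for arbitrarily large $M$ requires careful bookkeeping of higher-order Taylor remainders, and is the real technical content of the argument.
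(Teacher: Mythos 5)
Your high-level route matches the paper's: reduce the minimax bound to a Le Cam two-point comparison, then manufacture a mixing measure close to $G_*$ by splitting one true atom into two nearby atoms, exploiting that the linear-gate/polynomial-expert pair violates the algebraic independence condition of Definition~\ref{def:strong_identifiability}. The paper isolates the Le Cam step as a standalone lemma: if
\[
\lim_{\varepsilon\to 0}\inf_{G:\mathcal{L}_{2,r}(G,G_*)\leq\varepsilon}\frac{\normf{f_G-f_{G_*}}}{\mathcal{L}_{2,r}(G,G_*)}=0,
\]
then the $n^{-1/2}$ minimax lower bound follows by choosing $\varepsilon=n^{-1/2}$ in the two-point argument, since $\mathrm{KL}\lesssim n\normf{f_{G'_*}-f_{G_*}}^2\lesssim C_1^2$. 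So the whole weight of the proof falls on exhibiting a sequence $G_n$ with vanishing ratio.

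Here the paper is considerably more economical than your plan. For $p=1$ it picks a single explicit construction per $r$: take $L+1$ atoms, split the first true atom into two copies with identical $\beta_1$ and $a$, weights each equal to $\tfrac12\exp(\beta^*_{01})+\tfrac12 n^{-(r+1)}$, and biases $b^n_{1,2}=b^*_1\pm n^{-1}$. Because $(a^\top\Xbm+b)^1$ is affine in $b$ and the gate does not see $b$ at all, the term $A_n(\Xbm)$ vanishes \emph{exactly} (not just up to some Taylor order); $B_n(\Xbm)=0$ exactly because the $\beta_1$'s are unchanged; and only the weight discrepancy $C_n(\Xbm)=\mathcal{O}(n^{-(r+1)})$ survives. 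Since $\mathcal{L}_{2,r}(G_n,G_*)\asymp n^{-r}$, the ratio tends to zero. Your proposal instead insists on cancelling Taylor coefficients up to an arbitrary order $M\geq r$ via a rank-deficient linear system, and you yourself flag the $\epsilon_n^M$ remainder bookkeeping as ``the real technical content.'' That bookkeeping is precisely what the paper avoids: because the cancellation is exact, no remainder analysis is needed at all, and the scheme works uniformly in $r$ just by scaling the weight perturbation to $n^{-(r+1)}$. In short, you do not need an $M$-parameter family of increasingly delicate constructions; one clean degeneracy in $b$ suffices.

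Two minor corrections. First, the derivative identity you quote is off by a constant: with $F(\Xbm;\beta_1,a,b)=\exp(\beta_1^\top\Xbm)(a^\top\Xbm+b)^p$, one has $\partial F/\partial a_i=\partial^2 F/(\partial\beta_{1i}\,\partial b)$ with no $p^{-1}$ factor. Second, your estimate $\mathcal{L}_{2,r}(G_n,G_*)\asymp\epsilon_n^r$ implicitly assumes you match the total weight exactly; the paper deliberately introduces a small weight mismatch of order $n^{-(r+1)}$, which is dominated by the $n^{-r}$ term and does not change the rate but does keep $G_n$ a genuine $(L+1)$-atom measure, addressing your admissibility concern (iii). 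Finally, note that the paper itself only carries out the computation for $p=1$ and asserts that ``the arguments for other values of $p$ can be adapted accordingly''; for general $p$ one would indeed need a different perturbation direction (e.g.\ exploiting the $\beta_1$--$(a,b)$ interaction you identify), so your more structural view of the degeneracy is not without merit, but as written it is not a complete argument either.
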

There are two main implications of the result in Theorem~\ref{prop:linear_gate_polynomial_expert}:

\textbf{(i)} The rates for estimating parameters $\beta^*_{1j}$, $a^*_{j}$ and $b^*_{j}$ are slower than $\mathcal{O}_{P}(n^{-1/2r})$, for any $r\geq 1$. This means that they are slower than any polynomial rates, and could be of order $\mathcal{O}_{P}(1/\log(n))$. Using the same reasoning described after  equation~\eqref{eq:expert_rate}, we have
\begin{align}
    \label{eq:activation_bound}
    \sup_{x} |\varphi((\widehat{a}^n_i)^{\top} \Xbm+\widehat{b}^n_i)-\varphi((a^*_j)^{\top}\Xbm+b^*_j)|\lesssim\cdot\|\widehat{a}^n_i-a^*_j\|+|\widehat{b}^n_i-b^*_j|.
\end{align}
As a consequence, the rates for estimating experts $\varphi((a^*_j)^{\top}\Xbm + b^*_j)$ are no better than those for estimating the parameters $a^*_{j}$ and $b^*_{j}$, and could also be as slow as $\mathcal{O}_{P}(1/\log(n))$.

\textbf{(ii)} The above rates imply that we need an exponential number of data (roughly $\exp(1/\epsilon^{\tau})$ where $\epsilon$ is the desired approximation error) to estimate the parameters and experts of the linear gating prefix MoE. This fact demonstrates that using the linear gating in the prefix MoE model is not sample efficient from the perspective of the expert estimation problem.
\begin{proof}[Proof of Theorem~\ref{prop:linear_gate_polynomial_expert}]
Prior to presenting the main proof of Proposition~\ref{prop:linear_gate_polynomial_expert}, let us introduce the following key result:
\begin{lemma}
    \label{lemma:minimax_lower_bound}
    If the following holds for any $r\geq 1$:
    \begin{align}
        \label{eq:lemma_assumption}
         \lim_{\varepsilon\to0}\inf_{G\in\mathcal{G}_{L'}(\Theta):\mathcal{L}_{2,r}(G,G_*)\leq\varepsilon}\frac{\normf{f_{G}-f_{G_*}}}{\mathcal{L}_{2,r}(G,G_*)}=0,
    \end{align}
    then we obtain that 
    \begin{align}
        \label{eq:minimax_lower_bound}
        \inf_{\overline{G}_n\in\mathcal{G}_{L'}(\Theta)}\sup_{G\in\mathcal{G}_{L'}(\Theta)\setminus\mathcal{G}_{L-1}(\Theta)}\bbE_{f_{G}}[\mathcal{L}_{2,r}(\overline{G}_n,G)]\gtrsim n^{-1/2}.
    \end{align}
\end{lemma}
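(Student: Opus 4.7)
My plan is to establish~\eqref{eq:minimax_lower_bound} via Le Cam's two-point method. The hypothesis~\eqref{eq:lemma_assumption} supplies exactly the ingredient needed: a sequence of mixing measures that is well-separated from $G_*$ under the Voronoi loss $\mathcal{L}_{2,r}$ yet very close to $G_*$ under the $L_2(\mu)$-distance between regression functions, which is the distance that controls the TV between data-generating laws in the Gaussian regression model~\eqref{eq:regression_model_linear_prefix}.

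First, I would use~\eqref{eq:lemma_assumption} to produce a sequence $\{G_n\}_{n\geq 1} \subset \mathcal{G}_{L'}(\Theta)$ with $\mathcal{L}_{2,r}(G_n, G_*) \to 0$ and $\normf{f_{G_n} - f_{G_*}}/\mathcal{L}_{2,r}(G_n, G_*) \to 0$. By passing to a subsequence indexed by $n$, I can additionally arrange that $\normf{f_{G_n} - f_{G_*}} \leq c/\sqrt{n}$ for some small constant $c>0$; the vanishing-ratio property then automatically forces $\mathcal{L}_{2,r}(G_n, G_*) \gg n^{-1/2}$. Since $G_*$ has exactly $L$ atoms and therefore lies in $\mathcal{G}_{L'}(\Theta) \setminus \mathcal{G}_{L-1}(\Theta)$, and since each Voronoi cell $\mathcal{V}_j$ of $G_n$ must be non-empty once $\mathcal{L}_{2,r}(G_n, G_*)$ is small (otherwise the term $|\sum_{i\in\mathcal{V}_j}\exp(\beta_{0i}) - \exp(\beta^*_{0j})|$ stays bounded below by $\exp(\beta^*_{0j})$), each $G_n$ carries at least $L$ atoms and thus belongs to $\mathcal{G}_{L'}(\Theta) \setminus \mathcal{G}_{L-1}(\Theta)$ for all large $n$.

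Next, I would carry out the two-point reduction with hypotheses $G^{(0)} := G_*$ and $G^{(1)} := G_n$. For any estimator $\overline{G}_n \in \mathcal{G}_{L'}(\Theta)$, combining a change of measure between $\bbE_{f_{G^{(0)}}}$ and $\bbE_{f_{G^{(1)}}}$ with a weak triangle inequality for $\mathcal{L}_{2,r}$ yields
\begin{align*}
    \sup_{G \in \{G^{(0)},G^{(1)}\}} \bbE_{f_G}\bigl[\mathcal{L}_{2,r}(\overline{G}_n, G)\bigr] \;\gtrsim\; \mathcal{L}_{2,r}(G_n, G_*)\,\bigl(1 - \mathrm{TV}(f^n_{G_*}, f^n_{G_n})\bigr).
\end{align*}
The closed-form KL formula for Gaussians together with Pinsker's inequality gives $\mathrm{TV}(f^n_{G_*}, f^n_{G_n})^2 \leq \tfrac{n}{4\nu^2}\normf{f_{G_n} - f_{G_*}}^2 \leq c^2/(4\nu^2)$, which is strictly less than $1$ provided $c$ is chosen small enough. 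The right-hand side above is therefore $\gtrsim \mathcal{L}_{2,r}(G_n, G_*) \gg n^{-1/2}$, which is even stronger than the claimed bound~\eqref{eq:minimax_lower_bound} and completes the reduction.

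The main obstacle I anticipate is justifying the weak triangle inequality $\mathcal{L}_{2,r}(G_*, G_n) \leq C\bigl(\mathcal{L}_{2,r}(\overline G_n, G_*) + \mathcal{L}_{2,r}(\overline G_n, G_n)\bigr)$ invoked above. Because the Voronoi partition is defined with respect to the first argument, the cells vary combinatorially as $\overline G_n$ varies, so the bound does not follow from standard metric manipulations. The usual remedy, which I would adopt, is a case split: if $\normf{f_{\overline G_n} - f_{G_*}}$ is bounded below by a constant, then an identifiability-plus-compactness argument for the map $G \mapsto f_G$ on $\mathcal{G}_{L'}(\Theta)$ already forces $\mathcal{L}_{2,r}(\overline G_n, G_*)$ to exceed $\mathcal{L}_{2,r}(G_n, G_*) \to 0$ for large $n$; otherwise, $\overline G_n$ lies in a small $\normf{\cdot}$-neighborhood of $G_*$ where its Voronoi cells stabilize and align with those of $G_n$, and the weak triangle inequality then follows from an elementary Lipschitz-type decomposition of the three sums defining $\mathcal{L}_{2,r}$. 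Modulo this case analysis, the rest of the argument is routine.
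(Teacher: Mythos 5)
Your approach is the same as the paper's: a Le Cam two--point reduction between $G_*$ and a nearby mixing measure supplied by hypothesis~\eqref{eq:lemma_assumption}. The only cosmetic difference is that you use the $(1-\mathrm{TV})$ form of Le Cam's bound plus Pinsker's inequality, whereas the paper uses the equivalent $\exp(-n\,\mathrm{KL})$ form; both are standard and deliver the same rate.

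There is one genuine gap in your calibration step, and it is worth being precise about because it concerns the direction in which you pin down the two points. You propose to pass to a subsequence so that $\normf{f_{G_n}-f_{G_*}}\le c/\sqrt n$ and then assert that the vanishing ratio ``automatically forces'' $\mathcal{L}_{2,r}(G_n,G_*)\gg n^{-1/2}$. This does not follow: subsampling can drive $\normf{f_{G_n}-f_{G_*}}$ far below $c/\sqrt n$, and then $\mathcal{L}_{2,r}(G_n,G_*)=\normf{f_{G_n}-f_{G_*}}/(\text{ratio})$ is a ratio of two quantities both going to zero, so there is no lower bound on it. The paper calibrates in the opposite, and correct, direction: it fixes $\mathcal{L}_{2,r}(G'_*,G_*)=2\varepsilon$ with $\varepsilon=n^{-1/2}$, and the hypothesis then supplies $\normf{f_{G'_*}-f_{G_*}}\le C_1\varepsilon$ for free, which is exactly what is needed to keep the total variation (equivalently, $n$ times the KL) bounded. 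Put differently, you want to pin the separation in the loss $\mathcal{L}_{2,r}$ to the target scale $n^{-1/2}$ and let the hypothesis squash $\normf{\cdot}$; pinning $\normf{\cdot}$ and hoping the loss stays large is the unsupported step. (Both the paper's phrasing and yours tacitly use that the verifying perturbation family from Theorem~\ref{prop:linear_gate_polynomial_expert} is a continuous one-parameter family, so $\mathcal{L}_{2,r}$ sweeps continuously through all small values; this is what makes ``choose $G'_*$ with $\mathcal{L}_{2,r}(G'_*,G_*)=2\varepsilon$'' legitimate.)

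Two things you raise that the paper leaves implicit are worth keeping: the check that $G_n\in\mathcal{G}_{L'}(\Theta)\setminus\mathcal{G}_{L-1}(\Theta)$ once $\mathcal{L}_{2,r}(G_n,G_*)$ is small (your argument via nonempty Voronoi cells and the term $|\sum_{i\in\mathcal{V}_j}\exp(\beta_{0i})-\exp(\beta^*_{0j})|$ is correct), and the fact that the weak triangle inequality for the Voronoi loss is a real assumption that needs justification rather than a formal metric identity.
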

\begin{proof}[Proof of Lemma~\ref{lemma:minimax_lower_bound}]
Indeed, from the Gaussian assumption on the noise variables $\epsilon_i$, we obtain that $Y_{i}|\Xbm_{i} \sim \mathcal{N}(f_{G_{*}}(\Xbm_{i}), \sigma^2)$ for all $i \in [n]$. Next, the assumption in equation~\eqref{eq:lemma_assumption} indicates for sufficiently small $\varepsilon>0$ and a fixed constant $C_1>0$ which we will choose later, we can find a mixing measure $G'_* \in \mathcal{G}_{L'}(\Theta)$ such that $\mathcal{L}_{2,r}(G'_*,G_*)=2 \varepsilon$ and $\|f_{G'_*} - f_{G_*}\|_{L^2(\mu)} \leq C_1\varepsilon$. From Le Cam's lemma~\cite{yu97lecam}, as the Voronoi loss function $\mathcal{L}_{2,r}$ satisfies the weak triangle inequality, we obtain that
\begin{align}
    \inf_{\overline{G}_n\in\mathcal{G}_{L'}(\Theta)}&\sup_{G\in\mathcal{G}_{L'}(\Theta)\setminus\mathcal{G}_{L-1}(\Theta)}\bbE_{f_{G}}[\mathcal{L}_{2,r}(\overline{G}_n,G)] \nonumber\\
    & \gtrsim \frac{\mathcal{L}_{2,r}(G'_*,G_*)}{8} \text{exp}(- n \mathbb{E}_{\Xbm \sim \mu}[\text{KL}(\mathcal{N}(f_{G'_{*}}(\Xbm), \sigma^2),\mathcal{N}(f_{G_{*}}(\Xbm), \sigma^2))]) \nonumber \\
    & \gtrsim \varepsilon \cdot \text{exp}(-n \|f_{G'_*} - f_{G_*}\|_{L^2(\mu)}^2), \nonumber \\
    & \gtrsim \varepsilon \cdot \text{exp}(-C_{1} n \varepsilon^2), \label{eq:LeCam_inequality}
\end{align}
where the second inequality is due to the fact that 
\begin{align*}
    \text{KL}(\mathcal{N}(f_{G'_{*}}(\Xbm), \sigma^2),\mathcal{N}(f_{G_{*}}(\Xbm), \sigma^2)) = \dfrac{(f_{G'_*}(\Xbm) - f_{G_*}(\Xbm))^2}{2 \sigma^2}.
\end{align*}
By choosing $\varepsilon=n^{-1/2}$, we obtain that $\varepsilon \cdot \text{exp}(-C_{1} n \varepsilon^2)=n^{-1/2}\exp(-C_1)$. As a consequence, we achieve the desired minimax lower bound in equation~\eqref{eq:minimax_lower_bound}.
\end{proof}
\textbf{Main proof.}
We need to prove that the following limit holds true for any $r\geq 1$:
\begin{align}
    \label{eq:ratio_zero_limit}
    \lim_{\varepsilon\to0}\inf_{G\in\mathcal{G}_{L'}(\Theta):\mathcal{L}_{2,r}(G,G_*)\leq\varepsilon}\frac{\normf{f_{G}-f_{G_*}}}{\mathcal{L}_{2,r}(G,G_*)}=0.
\end{align}
For that purpose, it suffices to build a sequence of mixing measures $(G_n)_{n \geq 1}$ such that both $\mathcal{L}_{2,r}(G_n,G_*)\to0$ and 
\begin{align*}
    \frac{\normf{f_{G_n}-f_{G_*}}}{\mathcal{L}_{2,r}(G_n,G_*)}\to0,
\end{align*}
as $n\to\infty$. To this end, we consider the sequence  $G_n=\sum_{i=1}^{L+1}\exp(\bzin)\delta_{(\boin,\ain,\bin)}$, where 
\begin{itemize}
    \item $\exp(\beta^n_{01})=\exp(\beta^n_{02})=\frac{1}{2}\exp(\beta^*_{01})+\frac{1}{2n^{r+1}}$ and  $\exp(\beta^n_{0i})=\exp(\beta^n_{0(i-1)})$ for any $3\leq i\leq L + 1$;
    \item $\beta^n_{11}=\beta^n_{12}=\beta^*_{11}$ and  $\beta^n_{1i}=\beta^n_{1(i-1)}$ for any $3\leq i\leq L+1$;
    \item $a^n_1=a^n_2=a^*_1$ and $a^n_i=a^n_{i-1}$ for any $3\leq i\leq L+1$;
    \item $b^n_1=b^*_1+\frac{1}{n}$, $b^n_2=b^*_1-\frac{1}{n}$ and  $b^n_{i}=b^*_{i-1}$ for any $3\leq i\leq L+1$.
\end{itemize}
As a result, the loss function $\mathcal{L}_{2,r}(G_n,G_*)$ is reduced to
\begin{align}
    \label{eq:D_r_formulation}
    \mathcal{L}_{2,r}(G_n,G_*)=\frac{1}{n^{r+1}}+\Big[\exp(\beta^*_{01})+\frac{1}{n^{r+1}}\Big]\cdot\frac{1}{n^r}=\mathcal{O}(n^{-r}).
\end{align}
which indicates indicates that $\mathcal{L}_{2,r}(G_n,G_*)\to0$ as $n\to\infty$. 

Now, we prove that $\normf{f_{G_n}-f_{G_*}}/\mathcal{L}_{2,r}(G_n,G_*)\to0$. For that purpose, let us consider the quantity $$Q_n(\Xbm):=\Big[\sum_{i'=1}^{N}\exp(\Xbm^{\top}B^0_{i'}\Xbm+c^0_{i'})+\sum_{j'=1}^{L}\exp((\beta^*_{1j'})^{\top}\Xbm+\beta^*_{0j'})\Big]\cdot[g_{G_n}(\Xbm)-g_{G_*}(\Xbm)].$$ 
For simplicity, let us consider the polynomial degree $p=1$ as the arguments for other values of $p$ can be adapted accordingly. Recall from equation~\eqref{eq:general_Q_n} that $Q_n(\Xbm)$ can be decomposed as follows:
\begin{align*}
    Q_n(\Xbm)&=\sum_{j=1}^{L}\sum_{i\in\mathcal{A}_j}\exp(\beta^n_{0i})\Big[\exp((\boin)^{\top}\Xbm)((\ain)^{\top}\Xbm+\bin)-\exp((\boj)^{\top}\Xbm)((\aj)^{\top}\Xbm+\bj)\Big]\\
    &-\sum_{j=1}^{L}\sum_{i\in\mathcal{A}_j}\exp(\beta^n_{0i})\Big[\exp((\boin)^{\top}\Xbm)g_{G_n}(\Xbm)-\exp((\boj)^{\top}\Xbm)g_{G_n}(\Xbm)\Big]\\
    &+\sum_{j=1}^{L}\Big(\sum_{i\in\mathcal{A}_j}\exp(\bzin)-\exp(\bzj)\Big)\Big[\exp((\boj)^{\top}\Xbm)((\aj)^{\top}\Xbm +\bj)-\exp((\boj)^{\top}\Xbm)g_{G_n}(\Xbm)\Big]\\
    &:=A_n(\Xbm)-B_n(\Xbm)+C_n(\Xbm).
\end{align*}
From the definitions of $\beta^n_{1i},a^n_i$ and $b^n_i$, we can rewrite $A_n(\Xbm)$ as follows:
\begin{align*}
    A_n(\Xbm)&=\sum_{i=1}^{2}\frac{1}{2}\Big[\exp(\beta^*_{01})+\frac{1}{n^{r+1}}\Big]\exp((\beta^*_{11})^{\top}\Xbm)[((a^n_i)^{\top}\Xbm+b^n_i)-((a^*_1)^{\top}\Xbm+b^*_1)]\\
    &=\frac{1}{2}\Big[\exp(\beta^*_{01})+\frac{1}{n^{r+1}}\Big]\exp((\beta^*_{11})^{\top}\Xbm)[(b^n_1-b^*_1)+(b^n_2-b^*_1)]\\
    &=0.
\end{align*}
Additionally, it can also be checked that $B_n(\Xbm)=0$, and $C_n(\Xbm)=\mathcal{O}(n^{-(r+1)})$. Therefore, it follows that $C_n(\Xbm)/\mathcal{L}_{2,r}(G_n,G_*)\to0$. As a consequence, $Q_n(\Xbm)/\mathcal{L}_{2,r}(G_n,G_*)\to 0$ as $n\to\infty$ for almost every $\Xbm$. 

Since the term $\Big[\sum_{i'=1}^{N}\exp(\Xbm^{\top}B^0_{i'}\Xbm + c^0_{i'})+\sum_{j'=1}^{L}\exp((\beta^*_{1j'})^{\top} \Xbm +\beta^*_{0j'})\Big]$ is bounded, we deduce that $[f_{G_n}(\Xbm)-f_{G_*}(\Xbm)]/\mathcal{L}_{2,r}\to 0$ for almost every $\Xbm$. This result indicates that $$\normf{f_{G_n}-f_{G_*}}/\mathcal{L}_{2,r}(G_n,G_*)\to0$$ as $n\to\infty$. Hence, the proof of claim~\eqref{eq:ratio_zero_limit} is completed.
\end{proof}

\vspace{-0.5 em}
\section{Proof of Theoretical Results} \label{appendix:proof}
\vspace{-0.5 em}

In this appendix, we present rigorous proofs for the theoretical results introduced in Section~\ref{sec:theory}, namely Theorem~\ref{proposition:regression_estimation} and Theorem~\ref{theorem:parameter_estimation}, in that order.

\vspace{-0.3 em}
\subsection{Proof of Theorem~\ref{proposition:regression_estimation}}
\label{appendix:regression_estimation}
\vspace{-0.3 em}

For the proof of the theorem, we first introduce some notation. Firstly, we denote by $\mathcal{F}_{L'}(\Theta)$ the set of conditional densities of all mixing measures in $\mathcal{G}_{L'}(\Theta)$, that is, $\mathcal{F}_{L'}(\Theta):=\{g_{G}(\Xbm):G\in\mathcal{G}_{L'}(\Theta)\}$.
Additionally, for each $\delta>0$, the $L^{2}(\mu)$ ball centered around the conditional density $g_{G_*}(Y|X)$ and intersected with the set $ \mathcal{F}_{L'}(\Theta)$ is defined as
\begin{align*}   \mathcal{F}_{L'}(\Theta,\delta):=\left\{g \in \mathcal{F}_{L'}(\Theta): \|g -g_{G_*}\|_{L^2(\mu)} \leq\delta\right\}.
\end{align*}
In order to measure the size of the above set, Geer et. al. \cite{vandeGeer-00} suggest using the following quantity:
\begin{align}
    \label{eq:bracket_size}
    \mathcal{J}_B(\delta, \mathcal{F}_{L'}(\Theta,\delta)):=\int_{\delta^2/2^{13}}^{\delta}H_B^{1/2}(t, \mathcal{F}_{L'}(\Theta,t),\|\cdot\|_{L^2(\mu)})~\dint t\vee \delta,
\end{align}
where $H_B(t, \mathcal{F}_{L'}(\Theta,t),\|\cdot\|_{L^2(\mu))}$ stands for the bracketing entropy \cite{vandeGeer-00} of $ \mathcal{F}_{L'}(\Theta,u)$ under the $L^{2}(\mu)$-norm, and $t\vee\delta:=\max\{t,\delta\}$. By using the similar proof argument of Theorem 7.4 and Theorem 9.2 in \cite{vandeGeer-00} with notations being adapted to this work, we obtain the following lemma:
\begin{lemma}
    \label{lemma:density_rate}
    Take $\Psi(\delta)\geq \mathcal{J}_B(\delta, \mathcal{F}_{L'}(\Theta,\delta))$ that satisfies $\Psi(\delta)/\delta^2$ is a non-increasing function of $\delta$. Then, for some universal constant $c$ and for some sequence $(\delta_n)$ such that $\sqrt{n}\delta^2_n\geq c\Psi(\delta_n)$, we achieve that
    \begin{align*}
        \mathbb{P}\Big(\|g_{\widehat{G}_n} - g_{G_*}\|_{L^2(\mu)} > \delta\Big)\leq c \exp\left(-\frac{n\delta^2}{c^2}\right),
    \end{align*}
    for all $\delta\geq \delta_n$.
\end{lemma}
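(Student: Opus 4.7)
The plan is to establish this exponential concentration inequality by the classical basic inequality and empirical process route for least squares M-estimators, essentially adapting Theorems~7.4 and 9.2 of \cite{vandeGeer-00} to our conditional-mean setting with Gaussian noise. I would first reduce the event $\{\|g_{\widehat{G}_n}-g_{G_*}\|_{L^2(\mu)}>\delta\}$ to the behavior of a centered empirical process indexed by $\mathcal{F}_{L'}(\Theta)$, and then control that process uniformly via a chaining argument calibrated by the bracketing entropy integral $\mathcal{J}_B$ defined in equation~\eqref{eq:bracket_size}.

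First, by optimality of $\widehat{G}_n$ in equation~\eqref{eq:least_squared_estimator} and the substitution $Y_i=g_{G_*}(\Xbm_i)+\varepsilon_i$, a direct expansion of $\sum_i(Y_i-g_{\widehat{G}_n}(\Xbm_i))^2\leq \sum_i(Y_i-g_{G_*}(\Xbm_i))^2$ yields the basic inequality
\begin{align*}
\|g_{\widehat{G}_n}-g_{G_*}\|_n^2 \;\leq\; \frac{2}{n}\sum_{i=1}^{n}\varepsilon_i\,(g_{\widehat{G}_n}-g_{G_*})(\Xbm_i),
\end{align*}
where $\|\cdot\|_n$ is the empirical $L^2$-norm. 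Writing $r:=g-g_{G_*}$ and $\nu_n(r):=n^{-1/2}\sum_i\varepsilon_i r(\Xbm_i)$, the task reduces to uniformly controlling $\sup_{r}|\nu_n(r)|$ over shells $\{r:\|r\|_{L^2(\mu)}\leq R\}$ by a function of $R$, and then relating $\|\cdot\|_n$ to $\|\cdot\|_{L^2(\mu)}$ on the class.

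For the uniform control, I would combine a chaining argument with a sub-Gaussian tail bound. Bracket the local ball $\mathcal{F}_{L'}(\Theta,R)-g_{G_*}$ at geometrically decreasing scales; within each link, conditional on $\Xbm_1,\ldots,\Xbm_n$, the weighted sum $\sum_i\varepsilon_i r(\Xbm_i)$ is Gaussian, so Bernstein/Hoeffding yields the required link-wise deviation. Summing the chain contributions then produces the canonical inequality
\begin{align*}
\mathbb{P}\Big(\sup_{r\in\mathcal{F}_{L'}(\Theta,R)-g_{G_*}}|\nu_n(r)|\geq c\,\mathcal{J}_B\big(R,\mathcal{F}_{L'}(\Theta,R)\big)\Big)\leq c\exp(-nR^2/c^2),
\end{align*}
for a universal $c>0$, provided the class has an integrable envelope, which holds here since the expert $h$ and activation $\sigma$ are bounded on the compact parameter space $\Theta$. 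A parallel chaining argument gives the equivalence-of-norms statement $\|r\|_n\asymp\|r\|_{L^2(\mu)}$ on the same high-probability event.

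Finally I would implement the peeling argument: partition $\{\|g_{\widehat{G}_n}-g_{G_*}\|_{L^2(\mu)}>\delta\}$ into shells $S_s:=\{2^s\delta\leq \|g_{\widehat{G}_n}-g_{G_*}\|_{L^2(\mu)}<2^{s+1}\delta\}$ for $s\geq 0$. On $S_s$, the basic inequality combined with the uniform bound forces $(2^s\delta)^2\lesssim \mathcal{J}_B(2^{s+1}\delta,\mathcal{F}_{L'}(\Theta,2^{s+1}\delta))/\sqrt{n}\leq \Psi(2^{s+1}\delta)/\sqrt{n}$, which the monotonicity hypothesis that $\Psi(\delta)/\delta^2$ is non-increasing together with $\sqrt{n}\delta_n^2\geq c\Psi(\delta_n)$ rules out for all $\delta\geq\delta_n$ except with probability at most $c\exp(-n(2^s\delta)^2/c^2)$. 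Summing this geometric series in $s$ yields the stated exponential tail $c\exp(-n\delta^2/c^2)$. The main obstacle is the chaining step producing a sub-Gaussian deviation for $\nu_n$ in terms of $\mathcal{J}_B$: one must carefully handle the interaction between the Gaussian noise, the bracketing numbers of the non-linear residual gating MoE class, and the transition between $\|\cdot\|_n$ and $\|\cdot\|_{L^2(\mu)}$, which is precisely the technical content that Theorems~7.4 and 9.2 of \cite{vandeGeer-00} package once the class is shown to be suitably bounded.
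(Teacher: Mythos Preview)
Your proposal is correct and follows exactly the approach the paper takes: the paper states this lemma as a direct adaptation of Theorems~7.4 and~9.2 in \cite{vandeGeer-00} without giving a self-contained proof, and your sketch of the basic inequality, sub-Gaussian chaining bound on the noise-weighted empirical process via $\mathcal{J}_B$, and the peeling argument over geometric shells is precisely the content of those theorems. In fact you supply considerably more detail than the paper itself, which simply cites van~de~Geer and moves on.
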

We now demonstrate that when the expert functions are Lipschitz continuous, the following bound holds:
\begin{align}    
H_B(\varepsilon,\mathcal{F}_{L'}(\Theta),\|\cdot\|_{L^{2}(\mu)}) \lesssim \log(1/\varepsilon), \label{eq:bracket_entropy_bound}
\end{align}
for any $0 < \varepsilon \leq 1/2$. Indeed, for any function $g_{G} \in \mathcal{F}_{L'}(\Theta)$, since the expert functions are bounded, we obtain that $h(\Xbm,\eta) \leq M$ for all $\Xbm$ where $M$ is a bounded constant of the expert functions. Let $\tau\leq\varepsilon$ and $\{\pi_1,\ldots,\pi_{\bar{N}}\}$ be the $\zeta$-cover under the $L^{\infty}$ norm of the set $\mathcal{F}_{L'}(\Theta)$ where $\bar{N}:={N}(\zeta,\mathcal{F}_{L'}(\Theta),\|\cdot\|_{L^{\infty}})$ is the $\eta$-covering number of the metric space $(\mathcal{F}_{L'}(\Theta),\|\cdot\|_{L^{\infty}})$. Then, we construct the brackets of the form $[L_i(\Xbm),U_i(\Xbm)]$ for all $i\in[\bar{N}]$ as follows:
    \begin{align*}
        L_i(x)&:=\max\{\pi_i(\Xbm)-\zeta,0\},\\
        U_i(x)&:=\max\{\pi_i(\Xbm)+\zeta, M \}.
    \end{align*}
From the above construction, we can validate that $\mathcal{F}_{L'}(\Theta)\subset\cup_{i=1}^{\bar{N}}[L_i(\Xbm),U_i(\Xbm)]$ and $U_i(\Xbm)-L_i(\Xbm)\leq \min\{2\zeta,M\}$. Therefore, it follows that 
\begin{align*}
    \normf{U_i-L_i}^2=\int(U_i-L_i)^2\dint\mu(\Xbm)\leq\int 4\zeta^2\dint\mu(\Xbm)=4\zeta^2,
\end{align*}
which implies that $\normf{U_i-L_i}\leq 2\zeta$. By definition of the bracketing entropy, we deduce that
\begin{align}
    \label{eq:bracketing_covering}
    H_B(2\zeta,\mathcal{F}_{L'}(\Theta),\normf{\cdot})\leq\log N=\log {N}(\zeta,\mathcal{F}_{L'}(\Theta),\|\cdot\|_{L^{\infty}}).
\end{align}
Therefore, we need to provide an upper bound for the covering number $\bar{N}$. In particular, we denote $\Delta:=\{(\beta_{1},\beta_{0})\in\mathbb{R}^{Nd\times Nd}\times\mathbb{R}^{Nd}\times \mathbb{R}:(\beta_{1},\beta_{0},\eta)\in\Theta\}$ and $\Omega:=\{\eta\in\mathbb{R}^q:(\beta_{1},\beta_{0},\eta)\in\Theta\}$. Since $\Theta$ is a compact set, $\Delta$ and $\Omega$ are also compact. Therefore, we can find $\zeta$-covers $\Delta_{\zeta}$ and ${\Omega}_{\zeta}$ for $\Delta$ and $\Omega$, respectively. We can check that 
\begin{align*}
    |\Delta_{\zeta}|\leq \mathcal{O}_{P}(\tau^{-(Nd+1)L'}), \quad |\Omega_{\zeta}|\lesssim \mathcal{O}_{P}(\tau^{-qL'}).
\end{align*}
For each mixing measure $G=\sum_{i=1}^{L'}\exp(\beta_{0i})\delta_{(\beta_{1i},\eta_i)}\in\mathcal{G}_{L'}(\Theta)$, we consider other two mixing measures:
\begin{align*}
    \check{G}:=\sum_{i=1}^{L'}\exp(\beta_{0i})\delta_{(\beta_{1i},\overline{\eta}_i)}, \qquad \overline{G}:=\sum_{i=1}^{L'}\exp(\overline{\beta}_{0i})\delta_{(\overline{\beta}_{1i},\overline{\eta}_i)}.
\end{align*}
Here, $\overline{\eta}_i\in{\Omega}_{\zeta}$ such that $\overline{\eta}_i$ is the closest to $\eta_i$ in that set, while $(\overline{\beta}_{1i},\overline{\beta}_{0i})\in\Delta_{\zeta}$ is the closest to $(\beta_{1i},\beta_{0i})$ in that set. From the above formulations, we get that
\begin{align*}
    &\|g_{G}-g_{\check{G}}\|_{L^{\infty}}\\
    &=\sup_{\Xbm\in\mathcal{X}}~\sum_{j=1}^{L'}\frac{\exp(\beta_{1j}^{\top}\Xbm+ \alpha \sigma( \tau \beta_{1j}^{\top}\Xbm)+\beta_{0j})\cdot|h(\Xbm,\eta_{j})-h(\Xbm,\overline{\eta}_{j})|}{\sum_{i'=1}^{N}\exp(\Xbm^{\top}B^0_{i'}\Xbm+c^0_{i'})+\sum_{j'=1}^{L'}\exp(\beta_{1j'}^{\top}\Xbm+ \alpha \sigma(\tau \beta_{1j'}^{\top}x)+\beta_{0j'})}\\
    &\leq  \sum_{j=1}^{L'}\sup_{\Xbm\in\mathcal{X}}~\frac{\exp(\beta_{1j}^{\top}\Xbm+ \alpha\sigma(\tau \beta_{1j}^{\top}\Xbm)+\beta_{0j})\cdot|h(\Xbm,\eta_{j})-h(\Xbm,\overline{\eta}_{j})|}{\sum_{i'=1}^{N}\exp(\Xbm^{\top}B^0_{i'}\Xbm+c^0_{i'})+\sum_{j'=1}^{L'}\exp(\beta_{1j'}^{\top}\Xbm+ \alpha \sigma(\tau \beta_{1j'}^{\top}\Xbm)+\beta_{0j'})}\\
    &\leq \sum_{j=1}^{L'}\sup_{\Xbm\in\mathcal{X}}~|h(\Xbm,\eta_j)-h(\Xbm,\overline{\eta}_j)|\\
    &\leq \sum_{j=1}^{L'}\sup_{\Xbm\in\mathcal{X}}~[L_1(\Xbm)\cdot\|\eta_j-\overline{\eta}_j\|]\\
    &\lesssim L' \zeta\lesssim\zeta.
\end{align*}
Here, the second inequality occurs as the softmax weight is bounded by one, and the third inequality follows from the fact that the expert $h(\Xbm,\cdot)$ is a Lipschitz function with some Lipschitz constant $L_1(\Xbm)>0$. Next, let us denote
\begin{align*}
    D:&=\sum_{i'=1}^{N}\exp(\Xbm^{\top}B^0_{i'}\Xbm+c^0_{i'})+\sum_{j'=1}^{L'}\exp(\beta_{1j'}^{\top}\Xbm+ \alpha \sigma(\tau \beta_{1j'}^{\top}\Xbm)+\beta_{0j'}),\\
    \overline{D}:&=\sum_{i'=1}^{N}\exp(\Xbm^{\top}B^0_{i'}\Xbm+c^0_{i'})+\sum_{j'=1}^{L'}\exp(\overline{\beta}_{1j'}^{\top}\Xbm+\alpha \sigma(\tau \overline{\beta}_{1j'}^{\top}\Xbm)+\overline{\beta}_{0j'}).
\end{align*}
Then, we have
\begin{align}
    &\|g_{\check{G}}-g_{\overline{G}}\|_{L^{\infty}}\nonumber\\
    &=\sup_{\Xbm\in\mathcal{X}}\Bigg|\frac{1}{D}\Big(\sum_{i=1}^{N}\exp(\Xbm^{\top}B_{i}^{0} \Xbm+\coi)h(\Xbm,\eoi)+\sum_{j=1}^{L'}\exp(\beta_{1j}^{\top}\Xbm+ \alpha \sigma(\tau \beta_{1j}^{\top}\Xbm)+\beta_{0j})h(\Xbm,\overline{\eta}_j)\Big)\nonumber\\
    &\quad-\frac{1}{\overline{D}}\Big(\sum_{i=1}^{N}\exp(\Xbm^{\top}B_{i}^{0}\Xbm+\coi)h(\Xbm,\eoi)+\sum_{j=1}^{L'}\exp(\overline{\beta}_{1j}^{\top}\Xbm+ \alpha \sigma(\tau \overline{\beta}_{1j}^{\top}\Xbm)+\overline{\beta}_{0j})h(\Xbm,\overline{\eta}_j)\Big)\Bigg|\nonumber\\
    &\leq \Big|\frac{1}{D}-\frac{1}{\overline{D}}\Big|\cdot\sum_{i=1}^{N}\sup_{\Xbm\in\mathcal{X}}~\Big|\exp(\Xbm^{\top}B_{i}^{0} \Xbm+\coi)h(\Xbm,\eoi)\Big|\nonumber\\
    &\quad+\sum_{j=1}^{L'}\sup_{\Xbm\in\mathcal{X}}~\Bigg|\frac{\exp(\beta_{1j}^{\top}\Xbm+\alpha \sigma(\tau \beta_{1j}^{\top}\Xbm)+\beta_{0j})}{D}-\frac{\exp(\overline{\beta}_{1j}^{\top}\Xbm+ \alpha \sigma(\tau \overline{\beta}_{1j}^{\top}\Xbm)+\overline{\beta}_{0j})}{\overline{D}}\Bigg|\cdot|h(\Xbm,\overline{\eta}_{j})|.
    \label{eq:density_bound}
\end{align}
Now, we will bound two terms in the above right hand side. Firstly, since both the input space $\mathcal{X}$ and the parameter space $\Theta$ are bounded, we have that
\begin{align*}
    \frac{1}{D}-\frac{1}{\overline{D}}&\lesssim |D-\overline{D}|\\
    &\leq\sum_{j'=1}^{L'}\Big|\exp(\beta_{1j'}^{\top}\Xbm+\alpha\sigma(\tau\beta_{1j'}^{\top}\Xbm)+\beta_{0j'})-\exp(\overline{\beta}_{1j'}^{\top}\Xbm+\alpha\sigma(\tau\overline{\beta}_{1j'}^{\top}\Xbm)+\overline{\beta}_{0j'})\Big|\\
    &\lesssim\sum_{j'=1}^{L'}\Big|(\beta_{1j}-\overline{\beta}_{1j})^{\top}\Xbm+\alpha[\sigma(\tau\beta_{1j'}^{\top}\Xbm)-\sigma(\tau\overline{\beta}_{1j'}^{\top}\Xbm)]+(\beta_{0j}-\overline{\beta}_{0j})\Big|\\
    &\leq \sum_{j'=1}^{L'}|(\beta_{1j}-\overline{\beta}_{1j})^{\top}\Xbm|+|\alpha|\cdot|\sigma(\tau\beta_{1j'}^{\top}\Xbm)-\sigma(\tau\overline{\beta}_{1j'}^{\top}\Xbm)|+|\beta_{0j}-\overline{\beta}_{0j}|\\
    &\lesssim\sum_{j=1}^{L'}\Big[\|\beta_{1j}-\overline{\beta}_{1j}\|\cdot\|\Xbm\|+|\alpha\tau|\cdot\|\beta_{1j}-\overline{\beta}_{1j}\|\cdot\|\Xbm\|+|\beta_{0j}-\overline{\beta}_{0j}|\Big]\\
    &\leq L'(B+|\alpha\tau|B+1)\zeta\lesssim\zeta.
\end{align*}
As a result, we deduce that
\begin{align}
    \label{eq:first_term_bound}
    \Big|\frac{1}{D}-\frac{1}{\overline{D}}\Big|\cdot\sum_{i=1}^{N}\sup_{\Xbm\in\mathcal{X}}~\Big|\exp(\Xbm^{\top}B_{i}^{0} \Xbm+\coi)h(\Xbm,\eoi)\Big|\lesssim \zeta.
\end{align}
Regarding the second term, note that
\begin{align*}
    &\frac{\exp(\beta_{1j}^{\top}\Xbm+\alpha \sigma(\tau \beta_{1j}^{\top}\Xbm)+\beta_{0j})}{D}-\frac{\exp(\overline{\beta}_{1j}^{\top}\Xbm+ \alpha \sigma(\tau \overline{\beta}_{1j}^{\top}\Xbm)+\overline{\beta}_{0j})}{\overline{D}}\\
    =&~\exp(\beta_{1j}^{\top}\Xbm+\alpha \sigma(\tau \beta_{1j}^{\top}\Xbm)+\beta_{0j})\Big(\frac{1}{D}-\frac{1}{\overline{D}}\Big)\\
    &\qquad+\frac{1}{\overline{D}}\Big[\exp(\beta_{1j}^{\top}\Xbm+\alpha \sigma(\tau \beta_{1j}^{\top}\Xbm)+\beta_{0j})-\exp(\exp(\overline{\beta}_{1j}^{\top}\Xbm+ \alpha \sigma(\tau \overline{\beta}_{1j}^{\top}\Xbm)+\overline{\beta}_{0j}))\Big].
\end{align*}
Since both the input space and the parameter space are bounded, we have
\begin{align*}
    &\exp(\beta_{1j}^{\top}\Xbm+\alpha \sigma(\tau \beta_{1j}^{\top}\Xbm)+\beta_{0j})\Big(\frac{1}{D}-\frac{1}{\overline{D}}\Big)\lesssim \frac{1}{D}-\frac{1}{\overline{D}}\lesssim\zeta,\\
    &\frac{1}{\overline{D}}\Big[\exp(\beta_{1j}^{\top}\Xbm+\alpha \sigma(\tau \beta_{1j}^{\top}\Xbm)+\beta_{0j})-\exp(\overline{\beta}_{1j}^{\top}\Xbm+ \alpha \sigma(\tau \overline{\beta}_{1j}^{\top}\Xbm)+\overline{\beta}_{0j})\\
    &\hspace{7cm}\lesssim(B+|\alpha\tau|B+1)\zeta\lesssim\zeta,
\end{align*}
which yields that
\begin{align}
    \label{eq:second_term_bound}
    &\sum_{j=1}^{L'}\sup_{\Xbm\in\mathcal{X}}~\Bigg|\frac{\exp(\beta_{1j}^{\top}\Xbm+\alpha \sigma(\tau \beta_{1j}^{\top}\Xbm)+\beta_{0j})}{D}-\frac{\exp(\overline{\beta}_{1j}^{\top}\Xbm+ \alpha \sigma(\tau \overline{\beta}_{1j}^{\top}\Xbm)+\overline{\beta}_{0j})}{\overline{D}}\Bigg|\cdot|h(\Xbm,\overline{\eta}_{j})|\nonumber\\
    &\hspace{8cm}\lesssim \zeta~\sum_{j=1}^{L'}\sup_{\Xbm\in\mathcal{X}}~|h(\Xbm,\overline{\eta}_{j})|\lesssim\zeta.
\end{align}
From equations~\eqref{eq:density_bound}, \eqref{eq:first_term_bound} and \eqref{eq:second_term_bound}, we obtain that $\|g_{\check{G}}-g_{\overline{G}}\|_{L^{\infty}}\lesssim \zeta$. According to the triangle inequality, we have
\begin{align*}
    \|g_{G}-g_{\overline{G}}\|_{L^{\infty}}\leq \|g_{G}-g_{\check{G}}\|_{L^{\infty}}+\|g_{\check{G}}-g_{\overline{G}}\|_{L^{\infty}}\lesssim\zeta.
\end{align*}
By definition of the covering number, we deduce that
\begin{align}
    \label{eq:covering_bound}
    {N}(\zeta,\mathcal{F}_{L'}(\Theta),\|\cdot\|_{L^{\infty}})\leq |\Delta_{\zeta}|\times|\Omega_{\zeta}|\leq \mathcal{O}(n^{-(Nd+1)L'})\times\mathcal{O}(n^{-qL'})\leq\mathcal{O}(n^{-(Nd+1+q)L'}).
\end{align}
Combine equations~\eqref{eq:bracketing_covering} and \eqref{eq:covering_bound}, we achieve that
\begin{align*}
    H_B(2\zeta,\mathcal{F}_{L'}(\Theta),\normf{\cdot})\lesssim \log(1/\tau).
\end{align*}
Let $\zeta=\varepsilon/2$, then we obtain that 
\begin{align*}
    H_B(\varepsilon,\mathcal{F}_{L'}(\Theta),\|\cdot\|_{L^{2}(\mu)}) \lesssim \log(1/\varepsilon).
\end{align*}
As a result, it follows that 
\begin{align}
    \label{eq:bracketing_integral}
    \mathcal{J}_B(\delta, \mathcal{F}_{L'}(\Theta,\delta))= \int_{\delta^2/2^{13}}^{\delta}H_B^{1/2}(t, \mathcal{F}_{L'}(\Theta,t),\normf{\cdot})~\dint t\vee \delta\lesssim \int_{\delta^2/2^{13}}^{\delta}\log(1/t)dt\vee\delta.
\end{align}
Let $\Psi(\delta)=\delta\cdot[\log(1/\delta)]^{1/2}$, then $\Psi(\delta)/\delta^2$ is a non-increasing function of $\delta$. Furthermore, equation~\eqref{eq:bracketing_integral} indicates that $\Psi(\delta)\geq \mathcal{J}_B(\delta,\mathcal{F}_{L'}(\Theta,\delta))$. In addition, let $\delta_n=\sqrt{\log(n)/n}$, then we get that $\sqrt{n}\delta^2_n\geq c\Psi(\delta_n)$ for some universal constant $c$. Finally, by applying Lemma~\ref{lemma:density_rate}, we achieve the desired conclusion of the theorem.

\vspace{-0.3 em}
\subsection{Proof of Theorem~\ref{theorem:parameter_estimation}}
\label{appendix:parameter_estimation}
\vspace{-0.3 em}

Our goal is also to demonstrate the following inequality:
\begin{align}
    \label{eq:mono_general_universal_inequality}
    \inf_{G\in\mathcal{G}_{L'}(\Theta)}\normf{g_{G}-g_{G_*}}/\mathcal{L}_1(G,G_*)>0.
\end{align}
For that purpose, we divide the proof of the above inequality into local and global parts in the sequel. 

\textbf{Local part:} In this part, we demonstrate that
\begin{align}
    \label{eq:mono_general_local_inequality}
    \lim_{\varepsilon\to0}\inf_{G\in\mathcal{G}_{L'}(\Theta):\mathcal{L}_1(G,G_*)\leq\varepsilon}\normf{g_{G}-g_{G_*}}/\mathcal{L}_1(G,G_*)>0.
\end{align}
Assume by contrary that the above claim is not true, then there exists a sequence of mixing measures $G_n=\sum_{i=1}^{L}\exp(\bzin)\delta_{(\boin, \ein)}$ in $\mathcal{G}_{L'}(\Theta)$ such that $\mathcal{L}_{1n}:=\mathcal{L}_1(G_n,G_*)\to0$ and
\begin{align}
    \label{eq:mono_general_ratio_limit}
    \normf{g_{G_n}-g_{G_*}}/\mathcal{L}_{1n}\to0,
\end{align}
as $n\to\infty$. Let us denote by $\mathcal{V}^n_j:=\mathcal{V}_j(G_n)$ a Voronoi cell of $G_n$ generated by the $j$-th components of $G_*$. Since our arguments are asymptotic, we may assume that those Voronoi cells do not depend on the sample size, i.e., $\mathcal{V}_j=\mathcal{V}^n_j$. Thus, the Voronoi loss $\mathcal{L}_{1n}$ can be represented as
\begin{align}
    \label{eq:loss_n2}
   &\mathcal{L}_{1n}:=\sum_{j:|\mathcal{V}_j|>1}\sum_{i\in\mathcal{V}_j}\exp(\bzin)\Big[\|\dboijn\|^{2}+\|\deijn\|^2\Big]\nonumber\\
    &+\sum_{j:|\mathcal{V}_j|=1}\sum_{i\in\mathcal{V}_j}\exp(\bzin)\Big[\|\dboijn\|+\|\deijn\|\Big]+\sum_{j=1}^{k_*}\Big|\sum_{i\in\mathcal{V}_j}\exp(\boin)-\exp(\boj)\Big|,
\end{align}
where we denote $\dboijn:=\beta^n_{1i}-\beta^*_{1j}$ and $\deijn:=\eta^n_i-\eta^*_j$.

Since $\mathcal{L}_{1n}\to0$, we get that $(\boin,\ein)\to(\boj,\ej)$ and $\sum_{i\in\mathcal{V}_j}\exp(\bzin)\to\exp(\bzj)$ as $n\to\infty$ for any $i\in\mathcal{V}_j$ and $j\in[L]$. Now, we divide the proof of the local part into three steps as follows:

\textbf{Step 1 - Taylor expansion.} In this step, we would like to decompose the quantity
\begin{align}
    \label{eq:mono_Qn_formulation}
    &Q_n(\Xbm):=\Big[\sum_{i'=1}^{N}\exp(\Xbm^{\top}A^0_{i'}\Xbm+c^0_{i'})+\sum_{j'=1}^{L}\exp((\beta^*_{1j'})^{\top}\Xbm+ \alpha \sigma(\tau (\beta^*_{1j'})^{\top}\Xbm)+\beta^*_{0j'})\Big]\nonumber\\
    &\hspace{9cm}\times[g_{G_n}(\Xbm)-g_{G_*}(\Xbm)]
\end{align}
into a combination of linearly independent elements using Taylor expansion. In particular, the quantity $Q_n(\Xbm)$ is decomposed as follows:
\begin{align}
    \label{eq:general_Q_n}
    &\sum_{j=1}^{L}\sum_{i\in\mathcal{V}_j}\exp(\bzin)\Big[\exp((\boin)^{\top}\Xbm+ \alpha \sigma(\tau(\boin)^{\top}\Xbm))h(\Xbm;\ein)-\exp((\boj)^{\top}\Xbm+ \alpha \sigma(\tau(\boj)^{\top}\Xbm))h(\Xbm;\ej)\Big]\nonumber\\
    -&\sum_{j=1}^{L}\sum_{i\in\mathcal{V}_j}\exp(\bzin)\Big[\exp((\boin)^{\top}\Xbm+ \alpha \sigma(\tau(\boin)^{\top}\Xbm))-\exp((\boj)^{\top}\Xbm+ \alpha \sigma(\tau (\boj)^{\top}\Xbm))\Big]g_{G_n}(\Xbm)\nonumber\\
    +&\sum_{j=1}^{L}\Big(\sum_{i\in\mathcal{V}_j}\exp(\bzin)-\exp(\bzj)\Big)\exp((\boj)^{\top}\Xbm+\alpha \sigma(\tau(\boj)^{\top}\Xbm))\Big[h(\Xbm;\ej)-g_{G_n}(\Xbm)\Big]\nonumber\\
    :=&~A_n(\Xbm)-B_n(\Xbm)+C_{n}(\Xbm).
\end{align}
\textbf{Decomposition of $A_n(\Xbm)$.} Let us denote $E(\Xbm;\beta_1):=\exp(\beta_1^{\top}\Xbm+\alpha \sigma(\tau \beta_1^{\top}\Xbm))$, then $A_n$ can be separated into two terms as follows:
\begin{align*}
    A_n(\Xbm)&:=\sum_{j:|\mathcal{V}_j|=1}\sum_{i\in\mathcal{V}_j}\exp(\bzin)\Big[E(\Xbm;\boin)h(x;\ein)-E(\Xbm;\boj)h(\Xbm;\ej)\Big]\\
    &+\sum_{j:|\mathcal{V}_j|>1}\sum_{i\in\mathcal{V}_j}\exp(\bzin)\Big[E(\Xbm;\boin)h(\Xbm;\ein)-E(\Xbm;\boj)h(\Xbm;\ej)\Big]\\
    &:=A_{n,1}(\Xbm)+A_{n,2}(\Xbm).
\end{align*}
By means of the first-order Taylor expansion, we have
\begin{align*}
    A_{n,1}(\Xbm)&=\sum_{j:|\mathcal{V}_j|=1}\sum_{i\in\mathcal{V}_j}\frac{\exp(\bzin)}{\alpha!}\sum_{|\alpha|=1}(\dboijn)^{\alpha_1}(\deijn)^{\alpha_2}\frac{\partial^{|\alpha_1|}E}{\partial \beta_1^{\alpha_1}}(\Xbm;\boj)\frac{\partial^{|\alpha_2|}h}{\partial\eta^{\alpha_2}}(\Xbm;\ej)
    +R_{n,1}(\Xbm)\\
    &=\sum_{j:|\mathcal{V}_j|=1}\sum_{|\alpha_1|+|\alpha_2|=1}S_{n,j,\alpha_1,\alpha_2}\frac{\partial^{|\alpha_1|}E}{\partial \beta_1^{\alpha_1}}(\Xbm;\boj)\frac{\partial^{|\alpha_2|}h}{\partial\eta^{\alpha_2}}(\Xbm;\ej)
    +R_{n,1}(\Xbm),
\end{align*}
where $R_{n,1}(\Xbm)$ is a Taylor remainder such that $R_{n,1}(\Xbm)/\mathcal{L}_{1n}\to0$ as $n\to\infty$, and
\begin{align*}
    S_{n,j,\alpha_1,\alpha_2}:=\sum_{i\in\mathcal{V}_j}\frac{\exp(\bzin)}{\alpha!}(\dboijn)^{\alpha_1}(\deijn)^{\alpha_2}.
\end{align*}
On the other hand, by applying the second-order Taylor expansion, we get that
\begin{align*}
    A_{n,2}(\Xbm)=\sum_{j:|\mathcal{V}_j|>1}\sum_{1\leq|\alpha_1|+|\alpha_2|\leq 2}S_{n,j,\alpha_1,\alpha_2}\frac{\partial^{|\alpha_1|}E}{\partial \beta_1^{\alpha_1}}(\Xbm;\boj)\frac{\partial^{|\alpha_2|}h}{\partial\eta^{\alpha_2}}(\Xbm;\ej)
    +R_{n,2}(\Xbm),
\end{align*}
in which $R_{n,2}(\Xbm)$ is a Taylor remainder such that $R_{n,2}(\Xbm)/\mathcal{L}_{1n}\to0$ as $n\to\infty$.

\textbf{Decomposition of $B_n$.} Recall that we have
\begin{align*}
    B_n(\Xbm)&=\sum_{j:|\mathcal{V}_j|=1}\sum_{i\in\mathcal{V}_j}\exp(\bzin)\Big[E(\Xbm;\boin)-E(\Xbm; \boj)\Big]g_{G_n}(\Xbm)\\
    &+\sum_{j:|\mathcal{V}_j|>1}\sum_{i\in\mathcal{V}_j}\exp(\bzin)\Big[E(\Xbm;\boin)-E(x;\boj)\Big]g_{G_n}(\Xbm)\\
    &:=B_{n,1}(\Xbm) + B_{n,2}(\Xbm).
\end{align*}
By invoking first-order and second-order Taylor expansions to $B_{n,1}(\Xbm)$ and $B_{n,2}(\Xbm)$, it follows that
\begin{align*}
    B_{n,1}(\Xbm)&=\sum_{j:|\mathcal{V}_j|=1}\sum_{|\ell|=1}T_{n,j,\ell}\cdot\frac{\partial^{|\ell|}E}{\partial \beta_1^{\ell}}(\Xbm;\boj)g_{G_n}(\Xbm)+R_{n,3}(\Xbm),\\
    B_{n,2}(\Xbm)&=\sum_{j:|\mathcal{V}_j|>1}\sum_{1\leq|\ell|\leq 2}T_{n,j,\ell}\cdot\frac{\partial^{|\ell|}E}{\partial \beta_1^{\ell}}(\Xbm;\boj)g_{G_n}(\Xbm)+R_{n,4}(\Xbm),
\end{align*}
where we define 
\begin{align*}
    T_{n,j,\ell}:=\sum_{i\in\mathcal{V}_j}\frac{\exp(\bzin)}{\ell!}(\dboijn)^{\ell}.
\end{align*} 
Additionally, $R_{n,3}(\Xbm)$ and $R_{n,4}(\Xbm)$ are Taylor remainders such that $R_{n,3}(\Xbm)/\mathcal{L}_{1n}\to0$ and $R_{n,3}(\Xbm)/\mathcal{L}_{1n}\to0$ as $n\to\infty$. 

Collect the above results together, we can represent $Q_n(x)$ as
\begin{align}
    \label{eq:mono_Qn_decomposition}
    Q_n(\Xbm)&=\sum_{j=1}^{L}\sum_{0\leq|\alpha_1|+|\alpha_2|\leq 2}S_{n,j,\alpha_1,\alpha_2}\frac{\partial^{|\alpha_1|}E}{\partial \beta_1^{\alpha_1}}(\Xbm;\boj)\frac{\partial^{|\alpha_2|}h}{\partial\eta^{\alpha_2}}(\Xbm;\ej),\nonumber\\
    &-\sum_{j=1}^{L}\sum_{0\leq|\ell|\leq 2}T_{n,j,\ell}\cdot\frac{\partial^{|\ell|}E}{\partial \beta_1^{\ell}}(\Xbm;\boj)g_{G_n}(\Xbm) +\sum_{i=1}^{4}R_{n,i}(\Xbm),
\end{align}
where we define $S_{n,j,\mathbf{0}_{d\times d},\zeroq}=T_{n,j,\mathbf{0}_{d\times d}}=\sum_{i\in\mathcal{V}_j}\exp(\bzin)-\exp(\bzj)$ for any $j\in[L]$.

\textbf{Step 2 - Non-vanishing coefficients.} In this step, we demonstrate that at least one among ratios of the forms $S_{n,j,\alpha_1,\alpha_2}/\mathcal{L}_{1n}$ and $T_{n,j,\ell}/\mathcal{L}_{1n}$ goes to zero as $n$ tends to infinity. Indeed, assume by contrary that
\begin{align*}
    \frac{S_{n,j,\alpha_1,\alpha_2}}{\mathcal{L}_{1n}}\to0, \qquad \frac{T_{n,j,\ell}}{\mathcal{L}_{1n}}\to0,
\end{align*}
for any $j\in [L]$, $0\leq|\alpha_1|,|\alpha_2|,|\ell|\leq 2$. Then, we get
\begin{align}
    \label{eq:mono_weight_limit}
    \frac{1}{\mathcal{L}_{1n}}\sum_{j=1}^{L}\Big|\sum_{i\in\mathcal{V}_j}\exp(\bzin)-\exp(\bzj)\Big|=\sum_{j=1}^{L} \Big|\frac{S_{n,j,\mathbf{0}_{d\times d},\zeroq}}{\mathcal{L}_{1n}}\Big|\to0.
\end{align}
Now, we consider indices $j\in[L]$ such that its corresponding Voronoi cell has only one element, i.e. $|\mathcal{V}_j|=1$.
\begin{itemize}
    \item For arbitrary $u,v\in[Nd]$, let $\alpha_1\in\mathbb{N}^{Nd\times Nd}$ and $\alpha_2=\zeroq$ such that $\alpha_1^{(uv)}=1$ while other entries equal to zero. Then, we have $\frac{1}{\mathcal{L}_{1n}}\cdot\sum_{i\in\mathcal{V}_j}\exp(\bzin)|(\dboijn)^{(uv)}|=|S_{n,j,\alpha_1,\alpha_2}|/\mathcal{L}_{1n}\to0$ as $n\to\infty$. By taking the summation of the previous term with $u,v\in[Nd]$, we achieve that $\frac{1}{\mathcal{L}_{1n}}\sum_{i\in\mathcal{V}_j}\exp(\bzin)\|\dboijn\|_1\to0$. Owing to the topological equivalence between norm-1 and norm-2, it follows that
    \begin{align}
        \label{eq:mono_a_limit_1}
        \frac{1}{\mathcal{L}_{1n}}\sum_{i\in\mathcal{V}_j}\exp(\bzin)\|\dboijn\|\to0.
    \end{align}
    \item For arbitrary $u\in[Nd]$, let $\alpha_1=\mathbf{0}_{Nd\times Nd}$ and $\alpha_2\in\mathbb{N}^q$ such that $\alpha_2^{(u)}=1$ while other entries equal to zero. Then, we get $\frac{1}{\mathcal{L}_{1n}}\cdot\sum_{i\in\mathcal{V}_j}\exp(\bzin)|(\deijn)^{(u)}|=|S_{n,j,\alpha_1,\alpha_2}|/\mathcal{L}_{1n}\to0$ as $n\to\infty$. By taking the summation of the previous term with $u\in[q]$, we achieve that $\frac{1}{\mathcal{L}_{1n}}\sum_{i\in\mathcal{V}_j}\exp(\bzin)\|\deijn\|_1\to0$, or equivalently,
    \begin{align}
        \label{eq:mono_eta_limit_1}
        \frac{1}{\mathcal{L}_{1n}}\sum_{i\in\mathcal{V}_j}\exp(\bzin)\|\deijn\|\to0.
    \end{align}
\end{itemize}
Combine the limits in equations~\eqref{eq:mono_a_limit_1} and \eqref{eq:mono_eta_limit_1}, we obtain that 
\begin{align}
    \label{eq:mono_order_1_limit}
    \frac{1}{\mathcal{L}_{1n}}\sum_{j:|\mathcal{V}_j|=1}\sum_{i\in\mathcal{V}_j}\exp(\bzin)[\|\dboijn\|+\|\deijn\|]\to0,
\end{align}
as $n\to\infty$. 

Next, we consider indices $j\in[L]$ such that its corresponding Voronoi cell has more than one element, i.e. $|\mathcal{V}_j|>1$. 
\begin{itemize}
    \item For arbitrary $u,v\in[Nd]$, let $\alpha_1\in\mathbb{N}^{Nd \times Nd}$ and $\alpha_2=\zeroq$ such that $\alpha_1^{(uv)}=2$ while other entries equal to zero. Then, we have $\frac{1}{\mathcal{L}_{1n}}\cdot\sum_{i\in\mathcal{V}_j}\exp(\bzin)|(\dboijn)^{(uv)}|^2=|S_{n,j,\alpha_1,\alpha_2}|/\mathcal{L}_{1n}\to0$ as $n\to\infty$. By taking the summation of the previous term with $u,v\in[Nd]$, we achieve that 
    \begin{align}
        \label{eq:mono_a_limit_2}
        \frac{1}{\mathcal{L}_{1n}}\sum_{i\in\mathcal{V}_j}\exp(\bzin)\|\dboijn\|^2\to0.
    \end{align}
    \item For arbitrary $u\in[Nd]$, let $\alpha_1=\mathbf{0}_{Nd\times Nd}$ and $\alpha_2\in\mathbb{N}^q$ such that $\alpha_2^{(u)}=2$ while other entries equal to zero. Then, we get $\frac{1}{\mathcal{L}_{1n}}\cdot\sum_{i\in\mathcal{V}_j}\exp(\bzin)|(\deijn)^{(u)}|^2=|S_{n,j,\alpha_1,\alpha_2}|/\mathcal{L}_{1n}\to0$ as $n\to\infty$. By taking the summation of the previous term with $u\in[q]$, we achieve that
    \begin{align}
        \label{eq:mono_eta_limit_2}
        \frac{1}{\mathcal{L}_{1n}}\sum_{i\in\mathcal{V}_j}\exp(\bzin)\|\deijn\|^2\to0.
    \end{align}
\end{itemize}
Putting the limits in equations~\eqref{eq:mono_a_limit_1} and \eqref{eq:mono_eta_limit_1}, we have
\begin{align}
    \label{eq:mono_order_2_limit}
    \frac{1}{\mathcal{L}_{1n}}\sum_{j:|\mathcal{V}_j|>1}\sum_{i\in\mathcal{V}_j}\exp(\bzin)[\|\dboijn\|+\|\deijn\|]\to0,
\end{align}
as $n\to\infty$. Taking the summation of three limits in equations~\eqref{eq:mono_weight_limit}, \eqref{eq:mono_order_1_limit} and \eqref{eq:mono_order_2_limit}, we deduce that $1=\mathcal{L}_{1n}/\mathcal{L}_{1n}\to0$ as $n\to\infty$, which is a contradiction. Thus, at least one among ratios of the forms $S_{n,j,\alpha_1,\alpha_2}/\mathcal{L}_{1n}$ and $T_{n,j,\ell}/\mathcal{L}_{1n}$ goes to zero as $n$ tends to infinity.

\textbf{Step 3 - Application of Fatou's lemma.} In this step, we show that all the ratios $S_{n,j,\alpha_1,\alpha_2}/\mathcal{L}_{1n}$ and $T_{n,j,\ell}/\mathcal{L}_{1n}$ go to zero as $n\to\infty$, which contradicts to the conclusion in Step 2. In particular, by denoting $m_n$ as the maximum of the absolute values of those ratios. From the result of Step 2, it follows that $1/m_n\not\to\infty$. 

Recall from the hypothesis in equation~\eqref{eq:mono_general_ratio_limit} that $\normf{g_{G_n}-g_{G_*}}/\mathcal{L}_{1n}\to0$ as $n\to\infty$, which indicates that $\|g_{G_n}-g_{G_*}\|_{L^1(\mu)}/\mathcal{L}_{1n}\to0$. Therefore, by applying the Fatou's lemma, we get that
\begin{align*}
    0=\lim_{n\to\infty}\frac{\|g_{G_n}-g_{G_*}\|_{L^1(\mu)}}{m_n\mathcal{L}_{1n}}\geq \int \liminf_{n\to\infty}\frac{|g_{G_n}(\Xbm)-g_{G_*}(\Xbm)|}{m_n\mathcal{L}_{1n}}\dint\mu(\Xbm)\geq 0.
\end{align*}
This result implies that $\frac{1}{m_n\mathcal{L}_{1n}}\cdot[g_{G_n}(\Xbm)-g_{G_*}(\Xbm)]\to0$ as $n\to\infty$ for $\mu$-almost surely $\Xbm$. Looking at the formulation of $Q_n(\Xbm)$ in equation~\eqref{eq:mono_Qn_formulation}, since the term $\Big[\sum_{i'=1}^{k_0}\exp(\Xbm^{\top}A^0_{i'}\Xbm+c^0_{i'})+\sum_{j'=1}^{k_*}\exp((\beta^*_{1j'})^{\top}\Xbm+\sigma((\beta^*_{1j'})^{\top}\Xbm)+\beta^*_{0j'})\Big]$ is bounded, we deduce that the term $\frac{Q_n(\Xbm)}{m_n\mathcal{L}_{1n}}\to0$ for $\mu$-almost surely $\Xbm$.

Let us denote
\begin{align*}
    \frac{S_{n,j,\alpha_1,\alpha_2}}{m_n\mathcal{L}_{1n}}\to \phi_{j,\alpha_1,\alpha_2}, \qquad \frac{T_{n,j,\ell}}{m_n\mathcal{L}_{1n}}\to\varphi_{j,\ell},
\end{align*}
with a note that at least one among them is non-zero. Then, from the decomposition of $Q_n(\Xbm)$ in equation~\eqref{eq:mono_Qn_decomposition}, we have
\begin{align*}
    \sum_{j=1}^{L}\sum_{|\alpha_1|+|\alpha_2|=0}^{1+\mathbf{1}_{\{|\mathcal{V}_j|>1\}}}\phi_{j,\alpha_1,\alpha_2}\cdot&\frac{\partial^{|\alpha_1|}E}{\partial \beta_1^{\alpha_1}}(\Xbm;\boj)\frac{\partial^{|\alpha_2|}h}{\partial\eta^{\alpha_2}}(\Xbm;\ej),\nonumber\\
    &-\sum_{j=1}^{L}\sum_{|\ell|=0}^{1+\mathbf{1}_{\{|\mathcal{V}_j|>1\}}}\varphi_{j,\ell}\cdot\frac{\partial^{|\ell|}E}{\partial \beta_1^{\ell}}(\Xbm;\boj)g_{G_*}(\Xbm) =0,
\end{align*}
for $\mu$-almost surely $\Xbm$. It is worth noting that the term $\frac{\partial^{|\alpha_1|}E}{\partial \beta_1^{\alpha_1}}(\Xbm;\boj)\cdot\frac{\partial^{|\alpha_2|}h}{\partial\eta^{\alpha_2}}(\Xbm;\ej)$ can be explicitly expressed as
\begin{itemize}
    \item When $|\alpha_1|=0,|\alpha_2|=0$: $\exp((\boj)^{\top}\Xbm+\sigma((\boj)^{\top}\Xbm))h(\Xbm;\ej)$;
    \item When $|\alpha_1|=1,|\alpha_2|=0$: $\Xbm^{(u)}\Big(1+\sigma'((\boj)^{\top}\Xbm)\Big)\exp((\boj)^{\top}\Xbm+\sigma((\boj)^{\top}\Xbm))h(\Xbm;\ej)$;
    \item When $|\alpha_1|=0,|\alpha_2|=1$: $\exp((\boj)^{\top}\Xbm+\sigma((\boj)^{\top}\Xbm))\frac{\partial h}{\partial\eta^{(w)}}(\Xbm;\ej)$;
    \item When $|\alpha_1|=1,|\alpha_2|=1$: $$x^{(u)}\Big(1+\sigma'((\boj)^{\top}x)\Big)\exp((\boj)^{\top}x+\sigma((\boj)^{\top}x))\frac{\partial h}{\partial\eta^{(w)}}(x;\ej);$$
    \item When $|\alpha_1|=2,|\alpha_2|=0$: $$\Xbm^{(u)}x^{(v)}\Big[(1+\sigma'((\boj)^{\top}\Xbm))^2+\sigma''((\boj)^{\top}\Xbm)\Big]\exp((\boj)^{\top}\Xbm+\sigma((\boj)^{\top}\Xbm))h(\Xbm;\ej)$$
    \item When $|\alpha_1|=0,|\alpha_2|=2$: $\exp((\boj)^{\top}\Xbm+\sigma((\boj)^{\top}\Xbm))\frac{\partial^2 h}{\partial\eta^{(w)}\partial\eta^{(w')}}(\Xbm;\ej)$.
\end{itemize}

Recall that the expert function $h$ satisfies the condition in Definition~\ref{def:strong_identifiability}, i.e. the set
\begin{align*}
    \left\{\Xbm^{\nu}\Big[(1+\sigma'((\boj)^{\top}\Xbm))^{|\nu|}+\mathbf{1}_{\{|\nu|=2\}}\sigma''((\boj)^{\top}\Xbm)\Big]\cdot\frac{\partial^{|\gamma|}h}{\partial\eta^{\gamma}}(\Xbm,\ej):j\in[L], \ 0\leq|\nu|+|\gamma|\leq 2\right\}
\end{align*}
is linearly independent for almost every $\Xbm$. Therefore, we obtain that $\phi_{j,\alpha_1,\alpha_2}=\varphi_{j,\ell}=0$ for all $j\in[L]$, $0\leq|\alpha_1|+|\alpha_2|,|\ell|\leq 1+\mathbf{1}_{\{|\mathcal{V}_j|>1\}}$. This result turns out to contradict the fact that at least one among them is different from zero. Hence, we achieve the inequality in equation~\eqref{eq:mono_general_local_inequality}.

\textbf{Global part.} It is worth noting that the inequality~\eqref{eq:mono_general_local_inequality} suggests that there exists a positive constant $\varepsilon'$ such that
\begin{align*}
    \inf_{G\in\mathcal{G}_{L'}(\Theta):\mathcal{L}_1(G,G_*)\leq\varepsilon'}\normf{g_{G}-g_{G_*}}/\mathcal{L}_1(G,G_*)>0.
\end{align*}
Therefore, it is sufficient to prove that
\begin{align}
    \label{eq:general_global_inequality}
    \inf_{G\in\mathcal{G}_{L'}(\Theta):\mathcal{L}_1(G,G_*)>\varepsilon'}\normf{g_{G}-g_{G_*}}/\mathcal{L}_1(G,G_*)>0.
\end{align}
Assume by contrary that the inequality~\eqref{eq:general_global_inequality} does not hold true, then we can find a sequence of mixing measures $G'_n\in\mathcal{G}_{L'}(\Theta)$ such that $\mathcal{L}_1(G'_n,G_*)>\varepsilon'$ and
\begin{align*}
    \lim_{n\to\infty}\frac{\normf{g_{G'_n}-g_{G_*}}}{\mathcal{L}_1(G'_n,G_*)}=0,
\end{align*}
which indicates that $\normf{g_{G'_n}-g_{G_*}}\to0$ as $n\to\infty$. Recall that $\Theta$ is a compact set, therefore, we can replace the sequence $G'_n$ by one of its subsequences that converge to a mixing measure $G'\in\mathcal{G}_{L'}(\Omega)$. Since $\mathcal{L}_1(G'_n,G_*)>\varepsilon'$, we deduce that $\mathcal{L}_1(G',G_*)>\varepsilon'$. 

Next, by invoking the Fatou's lemma, we have that
\begin{align*}
    0=\lim_{n\to\infty}\normf{g_{G'_n}-g_{G_*}}^2\geq \int\liminf_{n\to\infty}\Big|g_{G'_n}(\Xbm)-g_{G_*}(\Xbm)\Big|^2~\dint\mu(\Xbm).
\end{align*}
Thus, we get that $g_{G'}(\Xbm)=g_{G_*}(\Xbm)$ for $\mu$-almost surely $\Xbm$. From the identifiability property of the non-linear residual gating prefix MoE (cf. the end of this proof), we deduce that $G'\equiv G_*$. Consequently, it follows that $\mathcal{L}_1(G',G_*)=0$, contradicting the fact that $\mathcal{L}_1(G',G_*)>\varepsilon'>0$. Hence, the proof is completed.

\textbf{Identifiability of Non-linear Residual Gating MoE.}

We now prove the identifiability of the non-linear residual gating prefix MoE. In particular, we will show that if $g_{G}(\Xbm)=g_{G_*}(\Xbm)$ for almost every $\Xbm$, then it follows that $G\equiv G_*$.

For ease of presentation, let us denote 
\begin{align*}
    \softmax_{G}(u):&=\frac{\exp(u)}{\sum_{i'=1}^{N}\exp(\Xbm^{\top}B^0_{i'}\Xbm+c^0_{i'})+\sum_{j'=1}^{L}\exp((\beta_{1j'})^{\top}\Xbm+\alpha \sigma(\tau (\beta_{1j'})^{\top}\Xbm)+\beta_{0j'})},\\
    \softmax_{G_*}(u^*):&=\frac{\exp(u^*)}{\sum_{i'=1}^{N}\exp(\Xbm^{\top}B^0_{i'}\Xbm+c^0_{i'})+\sum_{j'=1}^{L}\exp((\beta^*_{1j'})^{\top}\Xbm+ \alpha \sigma(\tau (\beta^*_{1j'})^{\top}\Xbm)+\beta^*_{0j'})},
\end{align*}
where 
\begin{align*}
    u&\in\left\{\Xbm^{\top}B^0_{i'}\Xbm+c^0_{i'}, \ (\beta_{1j'})^{\top}\Xbm+\alpha\sigma(\tau(\beta_{1j'})^{\top}\Xbm)+\beta_{0j'}:i'\in[N], j'\in[L']\right\},\\
    u^*&\in\left\{\Xbm^{\top}B^0_{i'}\Xbm+c^0_{i'}, \ (\beta^*_{1j'})^{\top}\Xbm+ \alpha \sigma(\tau(\beta^*_{1j'})^{\top}\Xbm)+\beta^*_{0j'}:i'\in[N], j'\in[L]\right\}.
\end{align*}
Since $g_{G}(\Xbm)=g_{G_*}(\Xbm)$ for almost every $\Xbm$, we have
    \begin{align}
        \label{eq:general_identifiable_equation}
        &\sum_{i=1}^{N}\softmax_{G}(\Xbm^{\top}B_{i} \Xbm+\coi)\cdot h(\Xbm,\eoi)+\sum_{j=1}^{L'}\softmax_{G}\Big((\beta_{1j})^{\top}\Xbm+ \alpha \sigma(\tau(\beta_{1j})^{\top}\Xbm)+\beta_{0j}\Big)\cdot h(\Xbm,\eta_j)\nonumber\\
        &=\sum_{i=1}^{N}\softmax_{G_*}(\Xbm^{\top}B_{i}\Xbm+\coi)\cdot h(\Xbm,\eoi)+\sum_{j=1}^{L}\softmax_{G_*}\Big((\boj)^{\top}\Xbm+ \alpha\sigma(\tau(\beta^*_{1j})^{\top}\Xbm)+\bzj\Big)\cdot h(\Xbm,\eta^*_j).
    \end{align}
    As the expert function $h$ satisfies the conditions in Definition~\ref{def:strong_identifiability}, the set $\{h(\Xbm,\eta'_i):i\in[k']\}$, where $\eta'_1,\ldots,\eta'_{k'}$ are distinct vectors for some $k'\in\mathbb{N}$, is linearly independent. If $L' \neq L$, then there exists some $i \in[L']$ such that $\eta_i\neq\eta^*_j$ for any $j\in[L]$. This implies that $\sum_{j=1}^{L'}\softmax_{G}\Big((\beta_{1j})^{\top}\Xbm+ \alpha\sigma(\tau(\beta_{1j})^{\top}\Xbm)+\beta_{0j}\Big)\cdot h(\Xbm,\eta_j)=0$, which is a contradiction. Thus, we must have that $L= L'$. As a result, 
    \begin{align*}
        \Big\{\softmax_{G}\Big((\beta_{1j})^{\top}\Xbm &+ \alpha \sigma(\tau (\beta_{1j})^{\top}\Xbm)+\beta_{0j}\Big):j\in[L']\Big\}\\
        &=\Big\{\softmax_{G_*}\Big((\boj)^{\top}\Xbm+ \alpha \sigma(\tau(\beta^*_{1j})^{\top}\Xbm)+\bzj\Big):j\in[L]\Big\},
    \end{align*}
    for almost every $\Xbm$. WLOG, we may assume that 
    \begin{align}
        \label{eq:general_soft-soft}
        \softmax_{G}\Big(&(\beta_{1j})^{\top}\Xbm+ \alpha \sigma(\tau (\beta_{1j})^{\top}\Xbm)+\beta_{0j}\Big)=\softmax_{G_*}\Big((\boj)^{\top}\Xbm+ \alpha \sigma(\tau(\beta^*_{1j})^{\top}\Xbm)+\bzj\Big),
    \end{align}
    for almost every $\Xbm$ for any $j\in[L]$. Since the softmax function is invariant to translation, this result indicates that $\beta_{1j}=\boj$ and $\beta_{0j}=\bzj+v_0$ for some $v_0\in\mathbb{R}$ for any $j\in[L]$. Recall from the universal assumption that $\beta_{0 L'}=\beta_{0L}=0$, we get that $\beta_{0j}=\bzj$ for any $j\in[L]$. Then, equation~\eqref{eq:general_identifiable_equation} can be rewritten as
    \begin{align}
        \label{eq:general_new_identifiable_equation}
        \sum_{j=1}^{L}\exp(\beta_{0j})\exp\Big((\beta_{1j})^{\top}\Xbm&+ \alpha\sigma(\tau(\beta_{1j})^{\top}\Xbm)\Big)h(\Xbm,\eta_j)\nonumber\\
        &=\sum_{j=1}^{L}\exp(\bzj)\exp\Big((\boj)^{\top}\Xbm+ \alpha\sigma(\tau(\beta^*_{1j})^{\top}\Xbm\Big)h(\Xbm,\eta^*_j),
    \end{align}
    for almost every $\Xbm$. Next, we denote $P_1,P_2,\ldots,P_m$ as a partition of the index set $[L]$, where $m \leq L'$, such that $\exp(\beta_{0i})=\exp(\beta^*_{0i'})$ for any $i,i'\in P_j$ and $j\in[L]$. On the other hand, when $i$ and $i'$ do not belong to the same set $P_j$, we let $\exp(\beta_{0i})\neq\exp(\beta_{0i'})$. Thus, we can reformulate equation~\eqref{eq:general_new_identifiable_equation} as
    \begin{align*}
        \sum_{j=1}^{m}\sum_{i\in{P}_j}\exp(\beta_{0i})\exp\Big((\beta_{1i})^{\top}\Xbm&+ \alpha\sigma(\tau(\beta_{1i})^{\top}\Xbm\Big)h(\Xbm,\eta_i)\\
        &=\sum_{j=1}^{m}\sum_{i\in{P}_j}\exp(\bzi)\exp\Big((\boi)^{\top}\Xbm+ \alpha \sigma(\tau(\beta^*_{1i})^{\top}\Xbm\Big)h(\Xbm,\eta^*_i),
    \end{align*}
    for almost every $\Xbm$. Recall that $\beta_{1i}=\boi$ and $\beta_{0i}=\bzi$ for any $i\in[L]$, then the above  equation leads to
    \begin{align*}
        \{\eta_i:i\in P_j\}\equiv\{\eta^*_i:i\in P_j\},
    \end{align*}
    for almost every $\Xbm$ for any $j\in[m]$. 
    As a consequence, 
    \begin{align*}
        G=\sum_{j=1}^{m}\sum_{i\in P_j}\exp(\beta_{0i})\delta_{(\beta_{1i},\eta_i)}=\sum_{j=1}^{m}\sum_{i\in P_j}\exp(\bzi)\delta_{(\boi,\eta^*_i)}=G_*.
    \end{align*}
    Hence, we reach the conclusion of this proposition.

\vspace{-0.5 em}
\section{Discussion of related Mixture of Experts works} \label{appendix:moe}
\vspace{-0.5 em}

Recently, the MoE model has been employed to mitigate catastrophic forgetting in continual learning. For example, \cite{yu2024boosting} focused on continual learning in vision-language models by adapting a pre-trained vision-language model to new tasks through learning a mixture of specialized adapter modules. \cite{yu2024boosting} introduced an MoE structure onto a frozen CLIP, utilizing a mixture of adapters to modify the MLP block after the MSA layer. In contrast, our work centers on general continual learning with pre-trained models, leveraging the inherent MoE architecture of MSA layers. Consequently, our MoE model placement differs from that of \cite{yu2024boosting}. By employing prefix tuning, we demonstrate that it is analogous to introducing new prefix experts to scale and adapt these pre-trained MoE models to downstream tasks. Furthermore, while \cite{yu2024boosting} utilizes task-specific routers, our approach employs task-specific prompts that encapsulate both task-specific router and expert parameters.

The parameters cost is usually considered in practical memory-constrained continual learning scenarios. Dynamic routing mechanism can be employed for gating-based neural networks \cite{huang2024harder}. To improve the parameter efficiency of the final model, we can integrate this mechanism in the proposed method. Specifically, each head in the MSA layers comprises $N$ MoE models, where $N$ is the length of the input sequence. This allows for a dynamic routing mechanism to enhance parameter efficiency. For instance, \cite{huang2024harder} proposed a dynamic routing strategy that adaptively adjusts the number of activated experts based on the input. The computation for any MoE model’s gating is directly correlated with the corresponding row in the attention matrix, which encapsulates the MoE model’s score functions. For example, selecting the top $k$ experts via Top-K routing in the $i$-th MoE model is equivalent to identifying the top $k$ largest values in the $i$-th row of the attention matrix. To implement \cite{huang2024harder}, we first sort the elements in the $i$-th row from highest to lowest, then find the smallest set of experts whose cumulative probability exceeds the threshold. Consequently, unselected experts remain inactive, reducing the need to compute all elements of the value matrix within self-attention.

\vspace{-0.5 em}
\section{Training Algorithm of HiDe-Prompt} \label{appendix:hide}
\vspace{-0.5 em}

\begin{algorithm}
    \caption{HiDe-Prompt's training algorithm}\label{alg:hide}
    \begin{algorithmic}[1]
    \Require Pre-trained transformer backbone $f_\theta$, training sets $\mathcal{D}_1,\dots,\mathcal{D}_T$ , number of tasks $T$, number of epochs $E$, hyper-parameters $\alpha$, $\tau$ and $\lambda$.
    \Ensure Parameters $\prompt_1,\dots,\prompt_T, \omega$ and $\psi$
    \State Initialize $\boldsymbol{e}_1, \omega$ and $\psi$
    \For{$t = 1,\dots,T$} 
        \For{$c \in \ydom$}
            \State Obtain $\hat{\mathcal{G}}_c$ from $f_\theta$ and $\mathcal{D}_t$  \Comment{Uninstructed Representations}
        \EndFor
        \If{$t > 1$}
            \State Initialize $\boldsymbol{e}_t \leftarrow \boldsymbol{e}_{t - 1}$
            \State Construct $\prompt_t = \alpha \sum_{i = 1}^{t - 1} \boldsymbol{e}_i + (1 - \alpha) \boldsymbol{e}_t$
        \Else
            \State Construct $\prompt_t = \boldsymbol{e}_t$
        \EndIf
        \For{$epoch = 1,\dots,E$}
            \State Optimize $\prompt_t$ and $\psi$ with $\mathcal{L}_\mathrm{WTP}$ in Eq.(\ref{eq:hide_wtp}) 
 \Comment{Within-Task Prediction}
            \State Optimize $\omega$ with $\mathcal{L}_\mathrm{TII}$ in Eq.(\ref{eq:hide_tii})  \Comment{Task-Identity Inference}
            \State Optimize $\psi$ with $\mathcal{L}_\mathrm{TAP}$ in Eq.(\ref{eq:hide_tap})  \Comment{Task-Adaptive Prediction}
        \EndFor
        \For{$c \in \ydom$}
            \State Obtain $\mathcal{G}_c$ from $f_\theta, \prompt_t$ and $\mathcal{D}_t$  \Comment{Instructed Representations}
        \EndFor
    \EndFor
    \Return $(\prompt_1,\dots,\prompt_T, \omega, \psi)$
    \end{algorithmic}
\end{algorithm}

In this appendix, we outline the detailed algorithm of HiDe-Prompt, utilizing the same notation as in Section~\ref{background}.

Each previously encountered class $c \in \yi, i=1,\dots,t-1$ has its instructed and uninstructed representations approximated by Gaussian distributions, denoted as $\mathcal{G}_c$ and $\hat{\mathcal{G}}_c$, respectively.

HiDe-Prompt maintains an expandable pool of task-specific prompts $\boldsymbol{e}_t$, each optimized for a specific task $\mathcal{D}_t$ using a cross-entropy loss within the WTP objective. To prevent forgetting, previous prompts $\boldsymbol{e}_1,\dots,\boldsymbol{e}_{t - 1}$ remain frozen. Knowledge transfer across tasks is facilitated by a prompt ensemble (PE) strategy: the current prompt is initialized with the last one $\boldsymbol{e}_t \leftarrow \boldsymbol{e}_{t - 1}$ and refined using a weighted combination of all past prompts $\prompt_t = \alpha \sum_{i = 1}^{t - 1} \boldsymbol{e}_i + (1 - \alpha) \boldsymbol{e}_t$, where $\alpha$ is a hyper-parameter. Notably, HiDe-Prompt incorporates contrastive regularization within the WTP objective, pushing features of the new task away from those of past tasks represented by the prototypes of old class distributions $\mathcal{G}_c$.  Let $\mathcal{H}_t = \{ f_\theta(\bfit{x}_i^{(t)}, \prompt_t) | \; i = 1,\dots,N_t   \}$ be the embedding transformation of $\mathcal{D}_t$ and $\boldsymbol{\mu}_c$ be the mean of $\mathcal{G}_c$. The contrastive  loss can be written as
\begin{align}
\mathcal{L}_{\mathrm{CR}}(\prompt_t) = 
    \sum_{h \in \mathcal{H}_t} 
    \sum_{i = 1}^{t - 1} \sum_{c \in \yi}
    \mathrm{log}(\frac
        {\mathrm{exp}(\boldsymbol{h} \cdot \boldsymbol{\mu}_c / \tau)}
        {\sum_{\boldsymbol{h'} \in \mathcal{H}_t} \mathrm{exp}(\boldsymbol{h} \cdot \boldsymbol{h'} / \tau) + 
        \sum_{i = 1}^{t - 1} \sum_{c' \in \yi} \mathrm{exp}(\boldsymbol{h} \cdot \boldsymbol{\mu}_{c'} / \tau)
        }
    ),
\end{align}
where $\tau$ is the temperature that is set to 0.8. The overall objective function of WTP for learning a new task $t$ is defined as
\begin{align} \label{eq:hide_wtp}
    \mathcal{L}_{\mathrm{WTP}}(\psi, \prompt_t) = \mathcal{L}_{\mathrm{CE}}(\psi, \prompt_t) + \lambda \mathcal{L}_{\mathrm{CR}}(\prompt_t),
\end{align}
where $\lambda$ is a hyper-parameter. Following WTP, HiDe-Prompt performs a further refinement step on the output layer parameters $\psi$ using a separate objective called task-adaptive prediction (TAP). TAP addresses potential classifier bias by considering the Gaussian distribution of all classes encountered so far. The final output layer $h_\psi$ can be further optimized for TAP objective,
\begin{align} \label{eq:hide_tap}
    \mathcal{L}_\mathrm{TAP}(\psi) = 
        \sum_{i = 1}^t
        \sum_{c \in \yi}
        \sum_{\boldsymbol{h} \in \mathcal{H}_{i, c}}
        - \mathrm{log}(
            \frac
            {\mathrm{exp}(h_\psi(\boldsymbol{h})[c])}
            {
                \sum_{j = 1}^t \sum_{c' \in \yj}
                \mathrm{exp}(h_\psi(\boldsymbol{h})[c'])
            }
        )
\end{align}
where $\mathcal{H}_{i, c}$ is constructed by sampling an equal number of pseudo representations from $\mathcal{G}_c$ for $c \in \yi$ and $i = 1, \dots, t$.

For TII, HiDe-Prompt leverages a lightweight auxiliary output layer $\hat{h}_\omega: \RR^D \rightarrow \RR^{T}$, to map uninstructed representations directly to task identity. This mapping is learned explicitly through a cross-entropy loss function,
\begin{align} \label{eq:hide_tii}
    \mathcal{L}_{\mathrm{TII}}(\omega) = 
    \sum_{c \in \mathcal{Y}_t}
    \sum_{\boldsymbol{\hat{h}} \in \hat{\mathcal{H}}_c}
    - \mathrm{log}(
        \frac{
            \mathrm{exp}(\hat{h}_\omega(\hat{\boldsymbol{h}})[c])
        }
        {\sum_{c' \in \mathcal{Y}_t} \mathrm{exp}(\hat{h}_\omega(\hat{\boldsymbol{h}})[c']}
    )
\end{align}
where $\hat{\mathcal{H}}_c$ is constructed by sampling an equal number of pseudo representations from $\hat{\mathcal{G}}_c$ for $c \in \yi$ and $i = 1, \dots, t$.  Please refer to Algorithm \ref{alg:hide} for more details.

\vspace{-0.5 em}
\section{Experimental Details} \label{appendix:exp}
\vspace{-0.5 em}

\textbf{Datasets.} We use commonly-used datasets in the field of continual learning, including \textbf{(1) Split CIFAR-100} \cite{Krizhevsky09learningmultiple}: This dataset comprises images from 100 classes. These classes are divided randomly into 10 separate incremental tasks, with each task featuring a distinct set of classes. \textbf{(2) Split ImageNet-R} \cite{Krizhevsky09learningmultiple}:  This dataset is composed of images from 200 classes. It includes challenging examples from the original ImageNet \cite{ridnik2021imagenet21k} dataset and newly gathered images representing diverse styles. These classes are also randomly divided into 10 distinct incremental tasks. \textbf{(3) Split CUB-200} \cite{wah_branson_welinder_perona_belongie_2011}: This dataset consists of fine-grained images of 200 different bird species. It is randomly divided into 10 incremental tasks, each comprising a unique class subset. \textbf{(4) 5-Datasets} \cite{ebrahimi2020adversarial}: This composite dataset incorporates \textbf{CIFAR-10} \cite{Krizhevsky09learningmultiple}, \textbf{MNIST} \cite{726791}, \textbf{Fashion-MNIST} \cite{xiao2017fashionmnist}, \textbf{SVHN} \cite{37648}, and \textbf{notMNIST} \cite{bulatov2011notmnist}. Each of these datasets is treated as a separate incremental task, permitting for the assessment of the effects of significant variations between tasks.

\textbf{Prompt-Based Approaches.} We compare NoRGa against recent prompt-based continual learning approaches: L2P \cite{wang2022learning}, DualPrompt \cite{wang2022dualprompt}, CODA-Prompt \cite{smith2023codaprompt}, S-Prompt \cite{wang2023sprompts} and HiDe-Prompt \cite{wang2023hierarchical}. To ensure a fair comparison, we replicate these methods using the configurations reported in their respective papers. S-Prompt in the original paper trains a separate prompt and classifier head for each task. At evaluation, it infers domain id as the nearest centroid obtained by applying K-Means on the training data. To adapt S-Prompt to CIL, we use one common classifier head for all tasks. For NoRGa, we adopt the same configuration as HiDe-Prompt, which utilizes Prefix Tuning \cite{li2021prefixtuning} as its prompt-based methodology. Learnable scalar factors $\alpha$ and $\tau$ are frozen after the first task's training to mitigate catastrophic forgetting. We further optimize NoRGa by selecting the best non-linear activation function $\sigma$ via cross-validation among $\tanh$, $\sigmoid$, and $\gelu$.

\textbf{Evaluation Metric.} We employ three common metrics to measure the performance the methods, including final average accuracy (FA), cumulative average accuracy (CA), and average forgetting measure (FM). Let $S_{i, t}$ denote the accuracy on the $i$-th task after learning the $t$-th task, and $A_t$ represent the average accuracy as $A_t = \frac{1}{t} \sum_{i = 1}^t S_{i, t}$. Upon learning all $T$ tasks, we compute $\mathrm{FA} = A_T$, $\mathrm{CA} = \frac{1}{T} \sum_{t = 1}^T A_t$, and $\mathrm{FM} = \frac{1}{T - 1} \sum_{i = 1}^{T - 1} \mathrm{max}_{t \in \{ 1,\dots,T - 1\}} (S_{i, t} - S_{i, T})$. It's worth noting that FA and CA are prioritized over FM, as they inherently encompass both plasticity and forgetting, with FM providing supplementary context \cite{smith2023codaprompt}.

\textbf{Implementation Details.} We train and test on one NVIDIA A100 GPU for baselines and our method. We leverage a pre-trained ViT-B/16 model as the backbone. Training employs an Adam optimizer ($\beta_1 = 0.9$, $\beta_2 = 0.999$), a batch size of 128, and a constant learning rate of 0.005 for all methods except CODA-Prompt. CODA-Prompt utilizes a cosine decaying learning rate starting at 0.001. Additionally, a grid search technique was implemented to determine the most appropriate number of epochs for effective training.


\vspace{-0.5 em}
\section{Task-incremental Learning Results} \label{appendix:til}
\vspace{-0.5 em}

\begin{table}
  \centering
  \caption{Performance comparison in task-incremental learning setting. Here we present Final Average Accuracy (FA).}
  \label{tab:til}
\scalebox{0.95}{
\begin{tabular}{lcccc}
\toprule
\multicolumn{1}{c}{\multirow{2}{*}{Method}} & \multicolumn{2}{c}{Split CIFAR-100} & \multicolumn{2}{c}{Split CUB-200} \\ \cmidrule(l){2-3} \cmidrule(r){4-5}
\multicolumn{1}{c}{} & \multicolumn{1}{c}{Sup-21K} & \multicolumn{1}{c}{iBOT-21K} & \multicolumn{1}{c}{Sup-21K} & \multicolumn{1}{c}{iBOT-21K} \\ \midrule
HiDe-Prompt       & $97.87 \pm 0.31$ & $97.48 \pm 0.33$ & $97.57 \pm 0.08$ & $92.34 \pm 0.34$  \\
NoRGa $\tanh$     & $98.55 \pm 0.45$ & $\textbf{98.26} \pm 0.36$ & $97.86 \pm 0.14$ & $92.85 \pm 0.33$ \\
NoRGa $\sigmoid$  & $\textbf{98.63} \pm 0.35$ & $98.15 \pm 0.29$ & $\textbf{97.89} \pm 0.14$ & $92.85 \pm 0.22$ \\
NoRGa $\gelu$     & $98.41 \pm 0.47$ & $98.17 \pm 0.30$ & $97.76 \pm 0.10$ & $\textbf{93.00} \pm 0.11$  \\
\bottomrule
\end{tabular}}
\vspace{-0.5 em}
\end{table}

Because HiDe-Prompt optimizes prompt parameters specifically for within-task prediction (WTP), NoRGa inherently aligns with this objective, leading to generally better continual learning performance. We demonstrate this improvement through experiments in a task-incremental learning setting, where task labels are available during inference (as in Table~\ref{tab:til}). While HiDe-Prompt performs well, NoRGa shows consistent improvement across all scenarios. Notably, NoRGa with sigmoid activation achieves the highest final average accuracy in both Split CIFAR-100 and Split CUB-200 with Sup-21K training. Additionally, NoRGa demonstrates its effectiveness even with self-supervised pre-training, further solidifying its advantage over the original prefix tuning model. Overall, NoRGa variants outperform HiDe-Prompt on both datasets and under both training conditions.

\vspace{-0.5 em}
\section{Comparison to Different Parameter-Efficient Fine-Tuning Methods} \label{appendix:peft}
\vspace{-0.5 em}

\begin{table}
\centering
\caption{Performance comparison of different PEFT methods using ViT-B/16 with Sup-21K weights. Here we present Final Average Accuracy (FA).}
\label{tab:peft}
\scalebox{0.95}{
\begin{tabular}{@{}lcc@{}}
\toprule
\multicolumn{1}{c}{Method} & Split CIFAR-100 & Split CUB-200  \\ \midrule
HiDe-Prompt                & 92.61           & 86.56          \\
HiDe-LoRA                  & 92.71           & 87.37          \\
HiDe-Adapter               & 92.73           & 87.10          \\
NoRGa                      & \textbf{94.48}  & \textbf{90.90} \\ \bottomrule
\end{tabular}}
\vspace{-0.5 em}
\end{table}

As the advantages of different parameter-efficient fine-tuning (PEFT) methods remain an open question, we briefly describe them through our revealed connection between self-attention and MoE. Prefix tuning introduces additional parameters at the input of MSA layers to adapt the pre-trained model representation, contrasting with Adapter \cite{houlsby2019parameter}, which insert adaptive parameters between layers, often replacing MLP blocks. LoRA \cite{hu2021lora} approximates weight updates with low-rank matrices and adds them to the backbone weights. Our work shows that the MSA layer in a pre-trained model can be seen as a pre-trained MoE architecture. Applying LoRA to the MSA layer refines both the pre-trained experts and their corresponding score functions for downstream tasks. In contrast, prefix tuning expands the pre-trained MoE models by incorporating new experts while preserving the original components, rather than modifying the pre-trained experts like LoRA. 

NoRGa emerges as a simple, parameter-efficient fine-tuning method and can be regarded as a distinct implementation of prompts. However, our novel perspective on the interplay between self-attention, prefix tuning, and mixture of experts enables us to theoretically substantiate the effectiveness of NoRGa as shown in Section~\ref{sec:theory}.

For empirical comparison, we integrate the framework of HiDe-Prompt with different PEFT techniques and Sup-21K weights, evaluating performance on Split CIFAR-100 and Split CUB-200. The results are summarized in Table~\ref{tab:peft}. The table shows that NoRGa consistently outperforms the other PEFT methods on both datasets, suggesting its effectiveness. Nevertheless, further investigation with LoRA and Adapter would be necessary to draw more definitive conclusions. 

While exploring alternative PEFT methods might offer improvements in WTP performance, these approaches lack theoretical guarantees and could lead to an increased number of parameters. In contrast, our NoRGa method modifies the original score functions of prefix tuning to enhance WTP performance with theoretical rigor. Importantly, NoRGa maintains the same parameter count as HiDe-Prompt, which is crucial in CL due to memory constraints.

\vspace{-0.5 em}
\section{Comparison with Pre-trained Model-based Methods} \label{appendix:ptm}
\vspace{-0.5 em}

\begin{table}
\centering
\caption{Performance comparison of pre-trained model-based continual learning methods using ViT-B/16 with Sup-21K weights. Here we present Final Average Accuracy (FA).}
\label{tab:ptm}
\scalebox{0.95}{
\begin{tabular}{@{}lcc@{}}
\toprule
\multicolumn{1}{c}{Method} & Split CIFAR-100 & Split CUB-200  \\ \midrule
ADAM + VPT-D               & 85.04           & 85.28          \\
ADAM + SSF                 & 85.27           & 85.67          \\
ADAM + Adapter             & 87.29           & 85.84          \\
RanPAC                     & 92.20           & 90.30          \\
NoRGa                      & \textbf{94.48}  & \textbf{90.90} \\ \bottomrule
\end{tabular}}
\vspace{-0.5 em}
\end{table}
\begin{table}
\centering
\caption{Ablation study on the effect of learnable $\alpha$ and $\tau$ with Sup-21K weights. Here we present Final Average Accuracy (FA).}
\label{tab:alpha_tau}
\scalebox{0.95}{
\begin{tabular}{@{}lcc@{}}
\toprule
\multicolumn{1}{c}{Method}     & Split CIFAR-100 & Split CUB-200  \\ \midrule
HiDe-Prompt                    & 92.61           & 86.56          \\
Learnable $\alpha$, Fixed $\tau$     & 94.38           & 90.45          \\
Fixed $\alpha$, Learnable $\tau$     & 94.42           & 90.48          \\
Fixed $\alpha$, Fixed $\tau$         & 94.29           & 90.32          \\
Learnable $\alpha$, Learnable $\tau$ & \textbf{94.48}  & \textbf{90.90} \\ \bottomrule
\end{tabular}}
\vspace{-0.5 em}
\end{table}

Previous works have demonstrated that utilizing pre-trained models (PTM) significantly enhances performance for continual learning, often surpassing the performance of non-PTM-based methods. Moreover, studies have shown that first-task adaptation and simple PEFT-style tuning can achieve competitive performance \cite{janson2022simple, panos2023first, zhou2024revisiting, mcdonnell2024ranpac} with prompt-based methods. For instance, \cite{janson2022simple} demonstrated that appending a nearest class mean (NCM) classifier to a ViT model's feature outputs, can serve as a strong baseline. \cite{panos2023first, zhou2024revisiting} enhanced this strategy by adapting the pre-trained model to the first task using the three PEFT methods for transformer networks \cite{zhou2024revisiting} and the FiLM method for CNNs \cite{panos2023first}. Additionally, \cite{mcdonnell2024ranpac} improved NCM by incorporating second-order statistics—covariance and Gram matrices. However, these methods, which fine-tune only the backbone for the initial task, may not always ensure satisfactory separation of new tasks' features. Our work focuses on continually adapting the backbone, utilizing task-specific prompts to consistently capture emerging tasks' characteristics, and proposing a novel method to enhance the CL performance of previous prompting methods. 

To validate our approach, we compare it against state-of-the-art PTM-based continual learning methods, including ADAM \cite{zhou2024revisiting} and RanPAC \cite{mcdonnell2024ranpac}, using Split CIFAR-100 and Split CUB-200 datasets. The results are summarized in Table~\ref{tab:ptm}. In comparison to other PTM-based continual learning methods, NoRGa demonstrates competitive performance across both evaluated datasets. For instance, on Split CIFAR-100, NoRGa achieves an FA of 94.48\%, exceeding the next best method by over 2\%. Similarly, on Split CUB-200, NoRGa delivers strong results relative to other baselines. These improvements highlight the effectiveness of our method in mitigating catastrophic forgetting and preserving knowledge retention across multiple tasks.

\vspace{-0.5 em}
\section{Efficiency Tests} \label{appendix:efficiency}
\vspace{-0.5 em}

\begin{figure}
  \centering
  \includegraphics[scale=0.055]{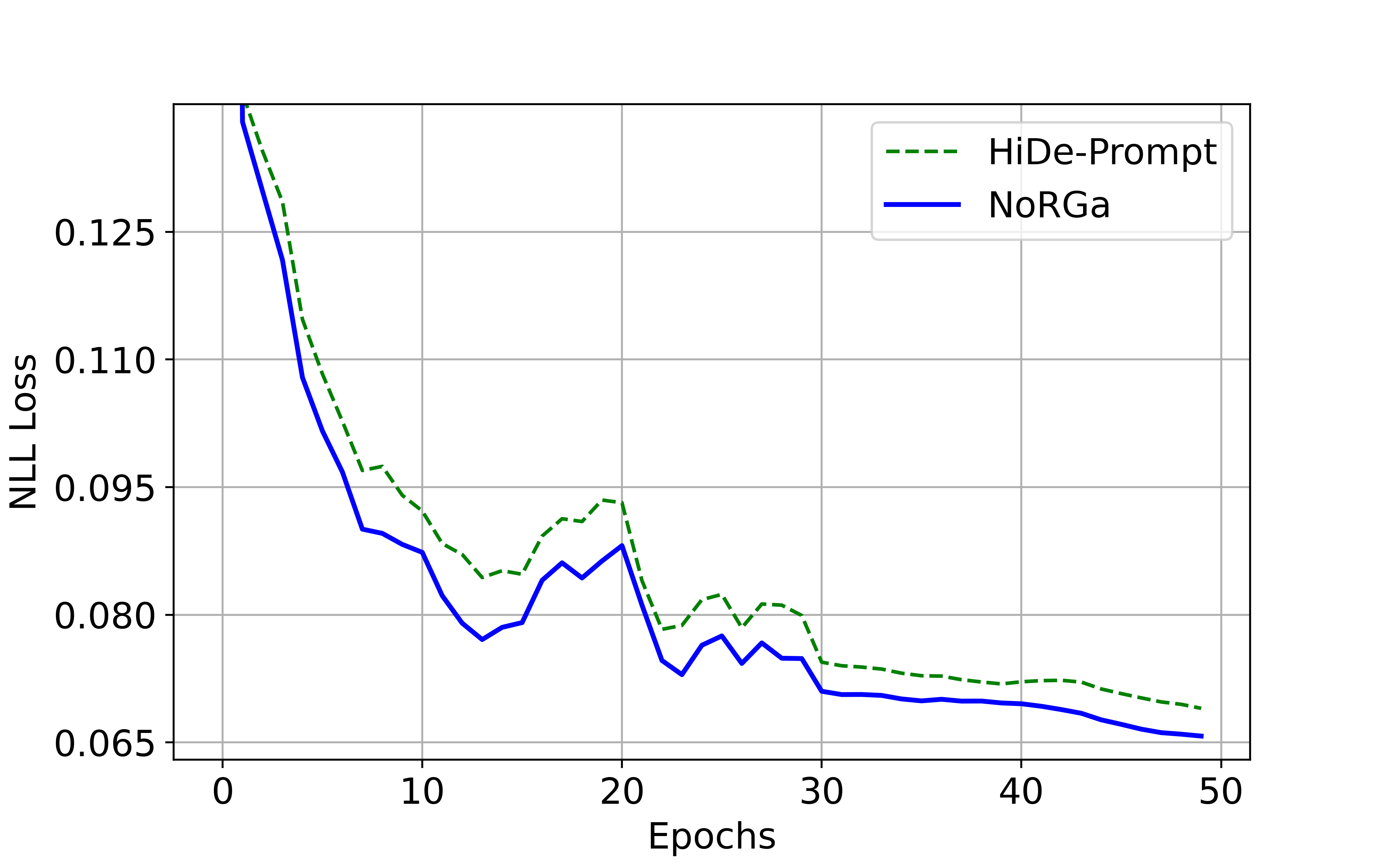}
  \caption{Validation loss on Split CUB-200
throughout the training of the first task.}
    \label{fig:efficiency}
\end{figure}

We compare the validation loss of NoRGa and HiDe-Prompt throughout the first task on Split CUB-200, as illustrated in Figure~\ref{fig:efficiency}. The results demonstrate that NoRGa consistently outperforms HiDe-Prompt throughout the training process. This empirical evidence supports the theoretical advantages of NoRGa over HiDe-Prompt.

\vspace{-0.5 em}
\section{Effect of Learnable Hyperparameters} \label{appendix:hyper}
\vspace{-0.5 em}

As described in Section~\ref{method:norga} and Appendix~\ref{appendix:exp}, in our framework, $\alpha$ and $\tau$ are learnable hyperparameters and optimized through backpropagation during the first task, eliminating the need for manual tuning. Additionally, our theoretical analysis of NoRGa’s statistical efficiency in Section~\ref{method:theory} holds for any values of $\alpha$ and $\tau$, demonstrating the theoretical robustness. To further investigate, we evaluated both fixed and learnable settings for these hyperparameters.  For the fixed case, we set their values to 1. We report FA on Split CUB-200 and Split CIFAR-100 with Sup-21K weights. The results are summarized in Table~\ref{tab:alpha_tau}. Although performance slightly decreased with fixed hyperparameters, it still outperforms HiDe-Prompt, indicating our method's empirical robustness.

\vspace{-0.5 em}
\section{Training Times} \label{appendix:times}
\vspace{-0.5 em}

\begin{table}
\centering
\caption{Comparison of training times for HiDe-Prompt and NoRGa. All experiments were conducted on a single NVIDIA A100 GPU.}
\label{tab:time}
\scalebox{0.95}{
\begin{tabular}{@{}lcccc@{}}
\toprule
\multicolumn{1}{c}{Method} & Split CIFAR-100 & Split ImageNet-R & Split CUB-200 & 5-Datasets \\ \midrule
HiDe-Prompt                & 2.80h           & 2.67h            & 1.04h         & 24.06h     \\
NoRGa                      & 2.85h           & 2.70h            & 1.10h         & 24.23h     \\ \bottomrule
\end{tabular}}
\vspace{-0.5 em}
\end{table}

We utilize a single A100 GPU for all experiments. The training times are summarized in Table~\ref{tab:time}. While NoRGa exhibits slightly longer training times compared to HiDe-Prompt, it consistently achieves significantly better performance. This demonstrates the effectiveness of NoRGa while maintaining competitive training efficiency.


\end{document}